\begin{document}

\definecolor{ProjectSiteColor}{HTML}{2563EB}
\title{InfiniteDiffusion: Bridging Learned Fidelity and Procedural Utility for Open-World Terrain Generation - \href{https://xandergos.github.io/terrain-diffusion/}{\textcolor{ProjectSiteColor}{\underline{\textbf{Project Website}}}}}


\author{Alexander Goslin}
\email{alexander.goslin@gmail.com}
\affiliation{
    \institution{Independent Researcher}
    \city{Sunnyvale}
    \country{USA
}}

\renewcommand{\shortauthors}{Goslin}

\begin{abstract}
 For decades, procedural worlds have been built on procedural noise functions such as Perlin noise, which are fast and infinite, yet fundamentally limited in realism and large-scale coherence. Conversely, diffusion models offer unprecedented fidelity but remain generally confined to bounded canvases. We introduce InfiniteDiffusion, a training-free algorithm that reformulates diffusion sampling for lazy and unbounded generation, bridging the fidelity of diffusion models with the properties that made procedural noise indispensable: seamless infinite extent, seed-consistency, and constant-time random access. To demonstrate the utility of this approach, we present Terrain Diffusion, a framework for learned procedural terrain generation with a procedural noise-like interface. Our framework outpaces orbital velocity by 9 times on a consumer GPU, enabling realistic terrain generation at interactive rates. We integrate a hierarchical stack of diffusion models to couple planetary context with local detail, a compact Laplacian encoding to stabilize outputs across Earth-scale dynamic ranges, and an open-source infinite-tensor framework for constant-memory manipulation of unbounded tensors. Together, these components position diffusion models as a practical foundation for the next generation of infinite virtual worlds.
\end{abstract}

\begin{CCSXML}
<ccs2012>
   <concept>
       <concept_id>10010147.10010257.10010293.10010294</concept_id>
       <concept_desc>Computing methodologies~Neural networks</concept_desc>
       <concept_significance>500</concept_significance>
       </concept>
   <concept>
       <concept_id>10010147.10010371.10010387</concept_id>
       <concept_desc>Computing methodologies~Graphics systems and interfaces</concept_desc>
       <concept_significance>300</concept_significance>
       </concept>
 </ccs2012>

\ccsdesc[500]{Computing methodologies~Neural networks}
\ccsdesc[300]{Computing methodologies~Graphics systems and interfaces}
\end{CCSXML}

\begin{teaserfigure}
  \Description{A four panel composite shows progressively closer views of synthetic terrain generated by Terrain Diffusion. The leftmost panel displays a broad archipelago of small continents with a red square marking a region of interest. The second panel zooms into that region and reveals many mountain ranges and coastal plains, again marked by a red square highlighting a smaller area. The third panel zooms further into the selected mountain region, showing an intricate mountain range with various valleys and complex topography. The rightmost panel presents the closest view, focusing on fine scale topographic structure with sharply defined ridgelines, valleys, and erosion patterns.}
  \includegraphics[width=\textwidth]{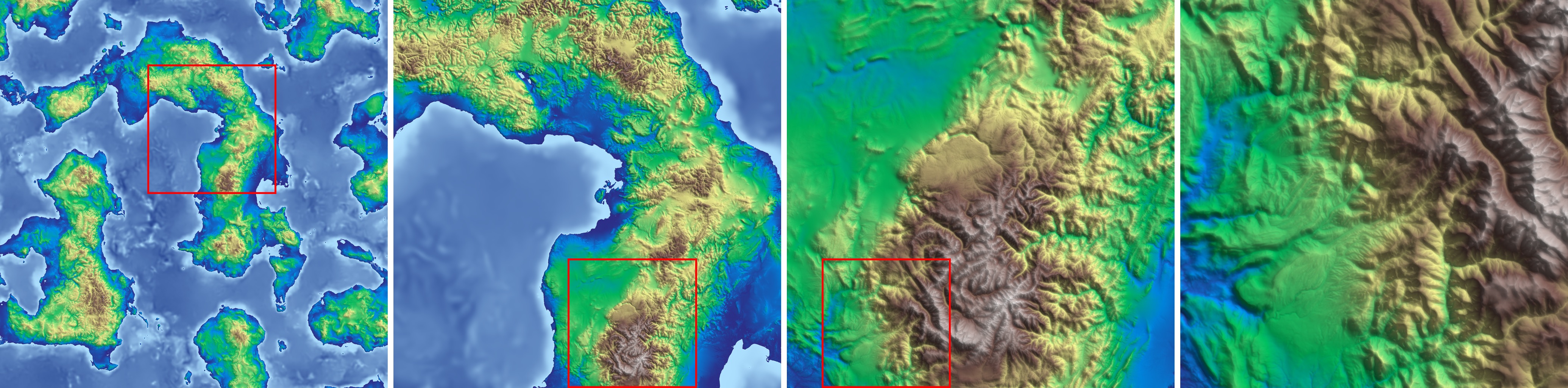}
  \caption{A region of a world generated with Terrain Diffusion. The leftmost panel spans roughly five million square kilometers, with about 2.2 million square kilometers of land area, comparable to the size of the Congo. Red boxes denote the region shown in the next panel, illustrating coherent terrain generation across four orders of magnitude in scale. Zoom for details.}
  \label{fig:teaser}
\end{teaserfigure}

\settopmatter{printacmref=false}
\setcopyright{none}
\acmDOI{}
\acmISBN{}
\renewcommand\footnotetextcopyrightpermission[1]{} 

\maketitle

\section{Introduction}

Procedural terrain generation underpins the creation of virtual worlds, from open-world games to planetary simulations. For nearly four decades, procedural noise functions such as Perlin noise \cite{perlin_image_1985, perlin_improving_2002} have defined this field. They offer three properties that make them indispensable for procedural worlds: seamless infinite extensibility, seed-consistency, and constant-time random access. A single random seed can deterministically produce a boundless landscape without storing vast datasets, providing an elegant foundation for procedural worlds.

Yet these procedural methods are inherently limited. Their patterns are smooth and lack the hierarchical organization of real geography. Continents, mountain ranges, and river basins emerge in nature from structured, multi-scale processes that simple noise cannot capture. As a result, worlds built from procedural noise often appear plausible but not real.

Recent advances in generative modeling, particularly diffusion models \cite{pmlr-v37-sohl-dickstein15, ho_denoising_2020}, have transformed image synthesis by learning to reproduce natural structure with remarkable realism and control, but these methods are typically confined to smaller bounded domains. Recent work has explored infinite or large-scale generation capabilities, but these approaches generally lose one or more of the core properties that make procedural noise valuable in interactive applications.

We address these limitations through InfiniteDiffusion, a generalization of MultiDiffusion \cite{bar-tal_multidiffusion_2023} for infinite inference. Our framework retains the functional utility of noise: seamless infinite extensibility, seed-consistency, and constant-time random access, while leveraging diffusion models for realism far beyond the reach of procedural noise. We first validate InfiniteDiffusion on standard text-to-image models before applying it as the foundation for Terrain Diffusion. We further provide an open-source infinite-tensor framework for constant-memory streaming and composable manipulation of infinite tensors with arbitrary functions.

Finally, we introduce Terrain Diffusion, the first practical learned procedural terrain generator, capable of streaming an entire planet in real time on consumer GPUs. A hierarchical diffusion stack unifies global and local structure through a coarse planetary model that establishes continental structure, refined by higher-resolution models that introduce mountain ranges, valleys, and local relief. A novel elevation encoding further stabilizes training and inference across the full dynamic range of Earth's terrain, and few-step consistency distillation \cite{song_consistency_2023, lu_simplifying_2025} enables rapid inference.

Together, these components demonstrate that diffusion models can serve as a practical foundation for robust infinite worlds that can be explored interactively and without restrictions.

\section{Related Works}

\noindent\textbf{Procedural noise.}
Procedural terrain traditionally relies on procedural noise such as Perlin or Simplex noise \cite{perlin_image_1985, perlin_improving_2002}, often combined with fractal Brownian motion (fBm) \cite{fournier_computer_1982}. These methods remain popular for their controllability, speed, seed-consistency, and infinite extent, but they lack the large-scale structure of real landscapes. Procedural noise with fBm can produce fractal-like structures that mimic general mountainous terrain, but requires heavy post-processing to approach realism. Even then, complex features like erosion, rivers, canyons, and volcanoes remain notoriously difficult to generate convincingly, particularly while retaining procedural utility. One Noise to Rule Them All \cite{maesumi2024noise} expands on the capabilities of procedural noise by learning to unify existing algorithms, but remains bounded and generally restricted to the distribution of predefined procedural noises. In contrast, InfiniteDiffusion supports arbitrary pixel or voxel-based diffusion models, enabling arbitrary distributions of tensors to be generated and queried like procedural noise.

\medskip
\noindent\textbf{Diffusion and consistency models.}
Denoising diffusion models \cite{ho_denoising_2020, pmlr-v37-sohl-dickstein15} generate samples by iteratively refining noise and are widely used for high-fidelity synthesis. Consistency models \cite{song_consistency_2023} approximate denoised diffusion outputs in one or a few steps, and continuous variants \cite{lu_simplifying_2025} achieve throughput competitive with GANs while retaining most of the quality of full diffusion sampling, enabling interactive use cases.

\medskip
\noindent\textbf{Learned terrain generation.}
GAN-based terrain models \cite{10.1145/3422622, voulgaris_procedural_2021, spick_realistic_2019, beckham_step_2017, https://doi.org/10.1111/cgf.13345} can generate convincing local relief, but they operate on fixed crops and do not tile, limiting them to bounded worlds. Diffusion-based synthesis \cite{10.1609/aaai.v38i11.29150, guerin_interactive_2017, bornepons2025mesatextdriventerraingeneration} further improves fidelity and control, but also assumes finite canvases and requires relatively significant compute. Jain et al. \cite{10.1145/3571600.3571657} offers infinite terrain generation by sampling diffusion-based tiles and blending them with a Perlin-based kernel. Tiles are generated independently and the kernel has no awareness of broader context, so structure remains tied to perlin noise rather than the learned model. In contrast, Terrain Diffusion couples all tiles through a shared global context and fuses tiles through a fully learned, context-aware mechanism. Procedural noise is used only for defining continental layouts, where data is sparse and simple enough that more complex alternatives would provide little benefit while reducing user control.

\medskip
\noindent\textbf{MultiDiffusion and unbounded generation.}
Several works extend generative models beyond image bounds seen in training. InfinityGAN \cite{lin_infinitygan_2022} and TileGAN \cite{Fruehstueck2019TileGAN} produce infinite images with GANs but do not extend to diffusion models, limiting scalability. MultiDiffusion \cite{bar-tal_multidiffusion_2023} and Mixture of Diffusers \cite{jimenez_mixture_2023} generate images larger than the training canvas but still assume a bounded final extent. Auto-regressive methods such as BlockFusion \cite{wu_blockfusion_2024} and WorldGrow \cite{li_worldgrow_2025} generate worlds by conditioning each tile on its neighbors, producing continuous worlds but without seed consistency, since outputs depend on sampling order. Additionally, tiles must be produced sequentially, preventing efficient random access. In contrast, we define a seed-consistent InfiniteDiffusion algorithm whose outputs are order invariant and allow constant-time random-access generation over an infinite domain.

\section{InfiniteDiffusion: Unbounded Generation Across Planetary Scales}

MultiDiffusion \cite{bar-tal_multidiffusion_2023} provides a simple and effective way to extend diffusion sampling beyond a model’s native resolution by averaging predictions from overlapping windows. This enables synthesis across larger images, as local predictions fuse into a seamless image. However, in its standard form, MultiDiffusion remains confined to bounded domains: all windows must lie within a fixed finite canvas, limiting its applicability to unbounded worlds or continuously streamed environments.

We introduce InfiniteDiffusion, an extension of MultiDiffusion that lifts this constraint. By reformulating the sampling process to operate over an infinite domain, InfiniteDiffusion supports seamless, consistent generation at scale. The remainder of this section reviews the principles of MultiDiffusion, and formalizes its extension to unbounded domains. We present the definitions in $\mathbb{Z}^2$ for clarity, but all results extend to $\mathbb{Z}^d$ with minimal modification.

\subsection{A Review of MultiDiffusion}

MultiDiffusion extends standard diffusion sampling by averaging overlapping windows, producing consistent outputs from local predictions. Each denoising step aggregates the predictions of overlapping patches, enforcing continuity across window boundaries and allowing generation of regions much larger than a model’s input size. The process begins with a pretrained diffusion model $\Phi$, operating on images in $\mathcal{I} = \mathbb{R}^{H \times W \times C}$. The diffusion process generates a sequence of images
\[
I_T, I_{T-1}, \dots, I_{0} \quad \text{s.t.} \quad I_{t-1} = \Phi(I_t \mid y)
\]
that refines the original noisy image $I_T$ into a fully denoised version $I_0$, under conditioning vector $y$. MultiDiffusion defines a new model $\Psi$ that generates in a different image space $\mathcal{J} = \mathbb{R}^{H' \times W' \times C}$, producing a new sequence of images
\[
J_T, J_{T-1}, \dots, J_{0} \quad \text{s.t.} \quad J_{t-1} = \Psi(J_t \mid z).
\]

To accomplish this, MultiDiffusion defines $n$ windows indexed by $i \in [n]$. In the finite setting, a region $R$ is any rectangular subset of the $H' \times W'$ coordinate grid, while a window region has a fixed size $H \times W$. In MultiDiffusion, each window $i$ is assigned a window region $R_i$. For an image $J \in \mathcal{J}$, we write $J[R]$ for the values of $J$ on the coordinates in $R$.

Each window also has a weight matrix $W_i \in \mathbb{R}^{H \times W}$ that specifies the relative contribution of each pixel in the pretrained diffusion model's output. Let $U_i(x)$ denote the $H' \times W'$ image that places an $H \times W$ tensor $x$ in the region $R_i$ with zeros elsewhere. With these definitions, the closed form MultiDiffusion update for direct pixel or latent-space samples is
\begin{equation}
\Psi(J_t \mid z)
=
\frac{
\sum_{i=1}^{n}
    U_i(W_i
    \otimes
    \Phi(J_t[R_i] \mid y_i))
}{
    \sum_{j=1}^{n} U_j(W_j)
}
\label{eq:multidiff}
\end{equation}
where $\otimes$ denotes the Hadamard product. This expression represents a weighted average of all local denoising predictions, where each window contributes according to its weight map. The result is a global update that reconciles all overlapping diffusion paths into a single image.

Although MultiDiffusion elegantly unifies local diffusion paths, it remains constrained to bounded domains: the process assumes a finite number of windows and requires the pretrained diffusion model to be evaluated at all windows to complete one step. Extending the same principle to infinite domains therefore requires reformulating $\Psi$ so that it operates locally and independently of global window layouts, a key step towards the InfiniteDiffusion algorithm introduced next.

\subsection{From MultiDiffusion to InfiniteDiffusion}

We now seek to extend MultiDiffusion beyond finite image domains. We first redefine the MultiDiffusion image space as an unbounded image, $\mathcal{J} = \mathbb{R}^{\mathbb{Z} \times \mathbb{Z} \times C}$, so that generation produces an infinite output. Consequently, we now define a region to be any rectangular subset of $\mathbb{Z}^2$. Since generation is now over an infinite domain, window indices must now range over a countably infinite set $S$.

For all applications shown in this work, we take $S = \mathbb{Z}^2$, so each window is
indexed by $(i, j)$, and each window region $R_{ij}$ is defined as a square sliding
window with side length $H = W$ and stride $s$ on both axes. $s$ is typically proportional to $H$ and $W$. Concretely, $R_{ij} = [is,\, is + H) \times [js,\, js + W)$. This particular layout is not essential to the InfiniteDiffusion formulation and serves only as an implementation choice for the experiments.

With an infinite number of windows, the MultiDiffusion update becomes intractable, and computation requires an infinite sum to produce the final image. Instead, we seek to generate the image lazily, by only evaluating particular regions $R$. To achieve this, we define $\kappa$ to be the function mapping a region to the set of window indices that overlap it. We assume $|\kappa(R)|$ is always finite. We provide $\kappa$ for the sliding-window case in Appendix ~\ref{appendix:infinitediffusion-properties}. This enables the InfiniteDiffusion update

\begin{equation}
\Psi(J_t \mid z)[R]
=
\left(
\frac{
\sum_{i \in \kappa(R)}
    U_i(W_i \otimes \Phi(J_t[R_i] \mid y_i))
}{
    \sum_{j \in \kappa(R)} U_j(W_j)
}
\right)[R],
\label{eq:infinidiff}
\end{equation}

which is the MultiDiffusion update with only the windows intersecting $R$ evaluated. In the finite setting, the full image $J_t$ can be generated in advance, making each $J_t[R_i]$ effectively free. In the infinite setting, precomputing $J_t$ is impossible, so evaluating $J_t[R_i]$ requires recursively invoking the same update. A naive implementation would therefore incur exponentially growing compute.

\begin{figure}[t]
\centering
\Description{A conceptual visualization of InfiniteDiffusion with sliding windows.
$J_2[R_1]$ (bottom) is computed first as pure gaussian noise. Its output is used to compute $J_1[R_0]$ (middle), which in turn produces the final result $J_0[R]$ (top). Each region is larger than the one above it, providing surrounding context for the next computation.}
\includegraphics[width=\columnwidth]{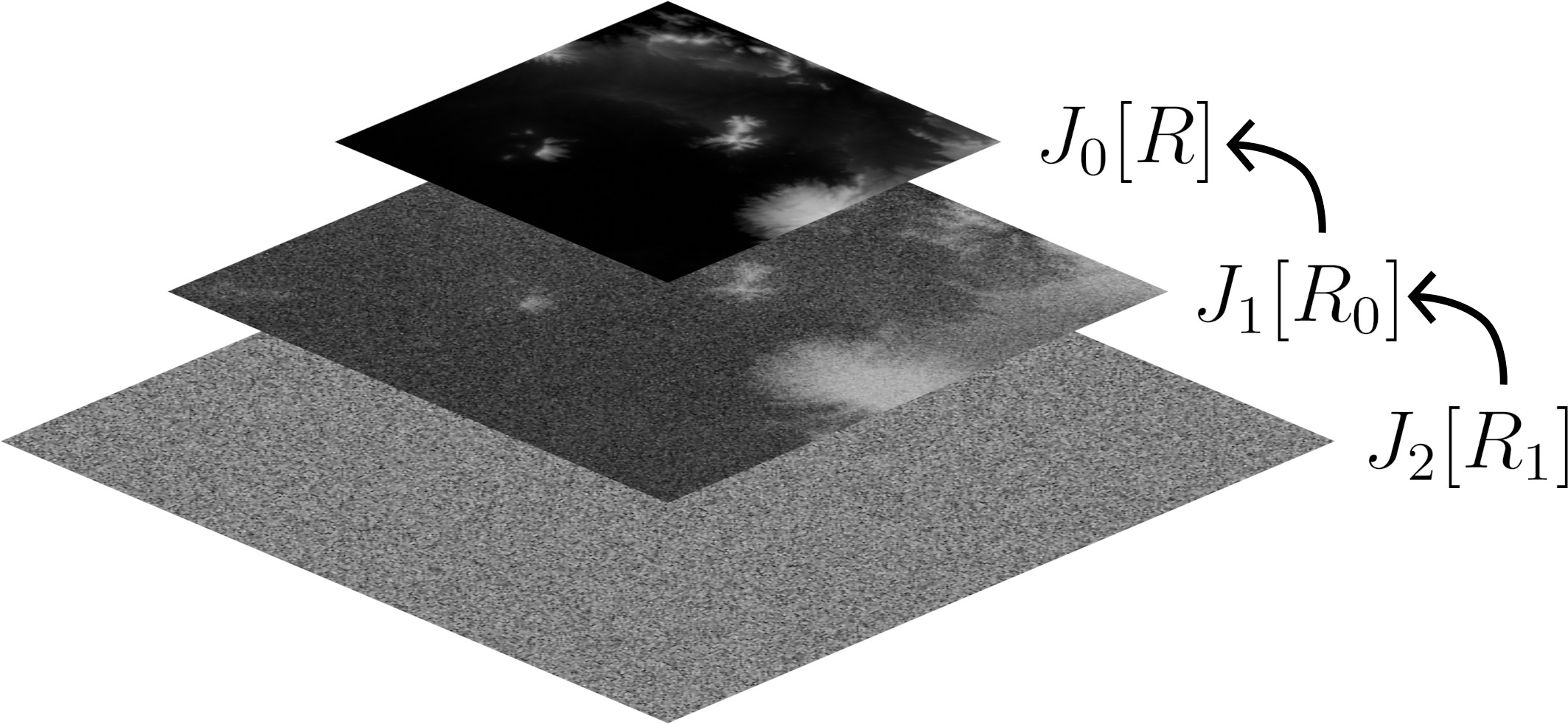}
\caption{A conceptual visualization of InfiniteDiffusion with sliding windows.
$J_2[R_1]$ (bottom) is computed first as pure gaussian noise. Its output is used to compute $J_1[R_0]$ (middle), which in turn produces the final result $J_0[R]$ (top). Each region is larger than the one above it, providing surrounding context for the next computation.}
\label{fig:infinite_diffusion_viz}
\end{figure}

\subsection{Practical Querying of InfiniteDiffusion}
\label{sec:practical_querying}

To make queries practical, we avoid recomputing the same window updates across recursive calls. Instead, for each image $J_t$ we maintain two corresponding sparse infinite tensors: $A_t$, which stores the numerator in Eq. ~\ref{eq:infinidiff}, and $B_t$, which stores the denominator. When a query $J_t[R]$ occurs, we identify all the windows required to generate the region, and process all previously unprocessed windows, populating the desired regions of $A_t$ and $B_t$. Then $J_t[R] = A_t[R] / B_t[R]$. Final generation proceeds as a recursive process that begins by sampling $J_T$ as Gaussian noise. $J_0[R]$ is obtained by recursively applying the query routine at all earlier steps. In summary, Algorithm ~\ref{alg:infinite_diffusion_step} below computes $J_t[R]$ by evaluating only the windows that overlap $R$, and caching each window's contribution in $A_t$ and $B_t$. The tensors $A_t$, $B_t$, and the set of processed windows $P_t$ are mutated in-place. See Figure ~\ref{fig:infinite_diffusion_viz} for a visualization.

\begin{algorithm}[H]
\caption{Querying $J_t[R]$ ($t$ < $T$) with InfiniteDiffusion.}
\label{alg:infinite_diffusion_step}

\begin{algorithmic}

\State \textbf{Inputs:}

\begin{tabbing}
\hspace{0em} \= \hspace{2.5em} \= \hspace{1em} \= \kill
\> $\Phi$ \> $\vartriangleright$ \> pretrained diffusion model\\
\> $J_{t+1}$               \> $\vartriangleright$ \> infinite noisy input image \\
\> $A_t$           \> $\vartriangleright$ \> infinite accumulated output image \\
\> $B_t$           \> $\vartriangleright$ \> infinite accumulated weights for $A_t$ \\
\> $R$                 \> $\vartriangleright$ \> region to query \\
\> $P_t$                 \> $\vartriangleright$ \> set of processed windows \\
\end{tabbing}

\For{each window $i$ in $\kappa(R) \setminus P$}
    \State $A_t[R_i] \gets A_t[R_i] + W_i \otimes \Phi(J_{t+1}[R_i] \mid y_i)$
    \State $B_t[R_i] \gets B_t[R_i] + W_i$
\EndFor
\State $P_t \gets P_t \cup \kappa(R)$

\\
\State \textbf{Output:} $J_t[R] = A_t[R] / B_t[R]$

\end{algorithmic}
\end{algorithm}

To keep storage bounded, we store each infinite tensor as a set of per-window contributions $(i, x_i)$,  where $x_i = W_i \otimes \Phi(J_{t+1}[R_i] \mid y_i)$ for $A_t$, and $x_i=W_i$ for $B_t$. Evaluating $\mathcal{T}_t[R]$ accumulates each $x_i$ with $i \in \kappa(R)$ into an output buffer. Algorithm~\ref{alg:infinite_diffusion_step} guarantees that all required pairs are present before evaluation occurs. This representation supports a lightweight in-memory LRU cache: evicting a pair $(i, x_i)$ is safe as long as $i$ is simultaneously removed from $P_t$, causing that window to be treated as unprocessed and recomputed on the next query.

\subsection{Tractability via Truncated $T$}
\label{sec:truncated_t}

There is one final but critical barrier in making InfiniteDiffusion practical. Each query $J_t[R]$ typically requires a region of $J_{t+1}$ larger than $R$ itself, as pixels near the edge of $R$ are generated by windows that extend beyond $R$, making large $T$ prohibitively expensive, particularly for the first query. For traditional MultiDiffusion, which relies on continuously fusing diffusion paths over dozens of steps, this complexity renders infinite generation effectively impossible. 

To overcome this, we redefine $\Phi$ as an arbitrary denoising function, such as a few-step consistency model, or a sequence of standard diffusion steps, rather than a single atomic step. This insight decouples $T$ from the internal diffusion schedule, which may be much longer. We find that our framework retains coherence even when $T$ is aggressively truncated. To quantify this, we apply InfiniteDiffusion to a standard text-to-image model: Stable Diffusion.

Table ~\ref{tab:infinidiff-fid} compares the FID of InfiniteDiffusion with $T=1$ and $T=2$ to bounded MultiDiffusion (effectively $T=50$) on a reference dataset generated with Stable Diffusion across 8 distinct prompts. Fig. ~\ref{fig:infinite_diffusion_comparison} shows side-by-side comparisons. $T=1$ retains overall structure but introduces artifacts in many cases, while $T=2$ effectively saturates FID and eliminates most artifacts. In Appendix ~\ref{app:panorama_details}, we provide additional samples and runtime comparisons, and show that $T=5$ eliminates practically all artifacts across every scenario tested, with diminishing overhead for larger regions.

\begin{table}[h!]
\caption{FID for InfiniteDiffusion and MultiDiffusion on a reference Stable Diffusion dataset across 8 prompts. See Appendix ~\ref{app:table1methodology} for methodology.}
\label{tab:infinidiff-fid}
\centering
\begin{tabular}{lcc}
\toprule
\textbf{Method} & \textbf{FID $\downarrow$} \\
\midrule
InfiniteDiffusion ($T=1$) & 7.20 \\
InfiniteDiffusion ($T=2$) & 5.83 \\
MultiDiffusion & 5.75 \\
\midrule
Stable Diffusion & 1.94 \\
\bottomrule
\end{tabular}
\label{tab:fid}
\end{table}

\subsection{Properties of InfiniteDiffusion}
\emph{Formal proofs for all properties are provided in Appendix ~\ref{appendix:infinitediffusion-properties}.}
\paragraph{Seed consistency.}
A central property of InfiniteDiffusion is that the entire infinite output is completely determined by the input seed. Once a seed is fixed, the initial noise $J_T$ is fully determined. Since every subsequent step is a deterministic function of the previous one, every region $J_t[R]$ becomes a deterministic function of the seed alone, independent of query order or the status of the cache.

\paragraph{Constant-time random access.}
Assuming $|\kappa(R)| \leq M$, with $M$ depending only on the size of the region, not its location, InfiniteDiffusion guarantees constant-time random access. In particular, for any fixed-size region $R$, the query $J_0[R]$ is $O(1)$, independent of query location or previous evaluations. Combined with seed-consistency, this makes InfiniteDiffusion functionally stateless: any region can be efficiently queried independently, at any time, in any order, and the result is guaranteed to be identical regardless of what has been queried before.

\paragraph{Parallelization.}
InfiniteDiffusion also admits parallel evaluation of window updates. For any fixed timestep $t$, each evaluation of $\Phi$ is independent, so they can be batched and executed in parallel.

\subsection{An Open Source Infinite Tensor Framework}

To support the creation, storage, and manipulation of infinite tensors without exceeding memory limits, we introduce the Infinite Tensor framework, \textcolor{blue}{\href{https://github.com/xandergos/infinite-tensor}{a Python library}} that enables sliding window computation over tensors with infinite dimensions. The framework allows querying of implicitly infinite tensors as if they were standard PyTorch tensors while keeping only the visible region and a bounded transient cache in memory. Each operation is performed through a fixed-sized sliding window that dynamically loads, evicts, or deletes data as sampling progresses. Tiled disk usage is also supported for space-efficient persistent storage. Additional details on this framework are in Appendix ~\ref{sec:infinite_tensor_appendix}.

This abstraction lets diffusion and consistency models operate directly on infinite images without manual data management. Windows can overlap to provide context and blend results, and multiple infinite tensors can depend on one another to form hierarchical pipelines. The framework serves as the runtime layer that links local model inference with practical infinite synthesis. 

Next, we introduce Terrain Diffusion, which builds on this foundation by combining these capabilities with large-scale real-world training data, hierarchical modeling, and a task-specific architecture.

\section{A Global Terrain Dataset}

To enable truly global generation, we construct a seamless global elevation dataset merging land topography from the 90m MERIT DEM \cite{yamazaki_highaccuracy_2017} and ocean bathymetry from ETOPO1 \cite{noaa_national_geophysical_data_center_etopo1_2009}, supplemented by climatic data from WorldClim \cite{fick_worldclim_2017}. We process this data into equal-area tiles by dynamically stretching longitude based on latitude; this ensures that pixel sizes represent a consistent physical area, allowing the diffusion model to learn features with minimal polar distortion. The dataset is split into 2048$\times$2048 tiles, with 80\% randomly assigned to the training set, and the reamining 20\% left for validation. Specific details on coastline smoothing, bathymetric merging, and sampling heuristics are provided in Appendix ~\ref{sec:dataset_appendix}.

\section{Hierarchical Modeling \& Stabilization}

This section outlines the hierarchical architecture and data representation underlying our pipeline, which together enable coherent, high-fidelity, and multi-scale terrain generation.

\subsection{Signed Square-Root Transform}

Terrain tiles vary significantly in elevation range. Under a fixed noise distribution, this leads to uneven effective SNR: low-relief regions behave as though exposed to stronger noise, while high-relief terrain is affected much less. In the raw elevation space, tiles with higher absolute elevations exhibit larger variance. To reduce this variation, we apply a signed square-root transform $z \mapsto \mathrm{sign}(z)\sqrt{|z|}$. The transform compresses high-relief values and distributes variance across tiles more uniformly, decreasing the correlation between the mean and log standard deviation of tiles from $0.66$ to $0.31$. In practice, this allows us to train with a more focused noise distribution and enhances the visibility of small features, especially coastlines. Additional details in Appendix ~\ref{sec:signed-sqrt-appendix}.

\begin{figure*}[t]
\includegraphics[width=\textwidth]{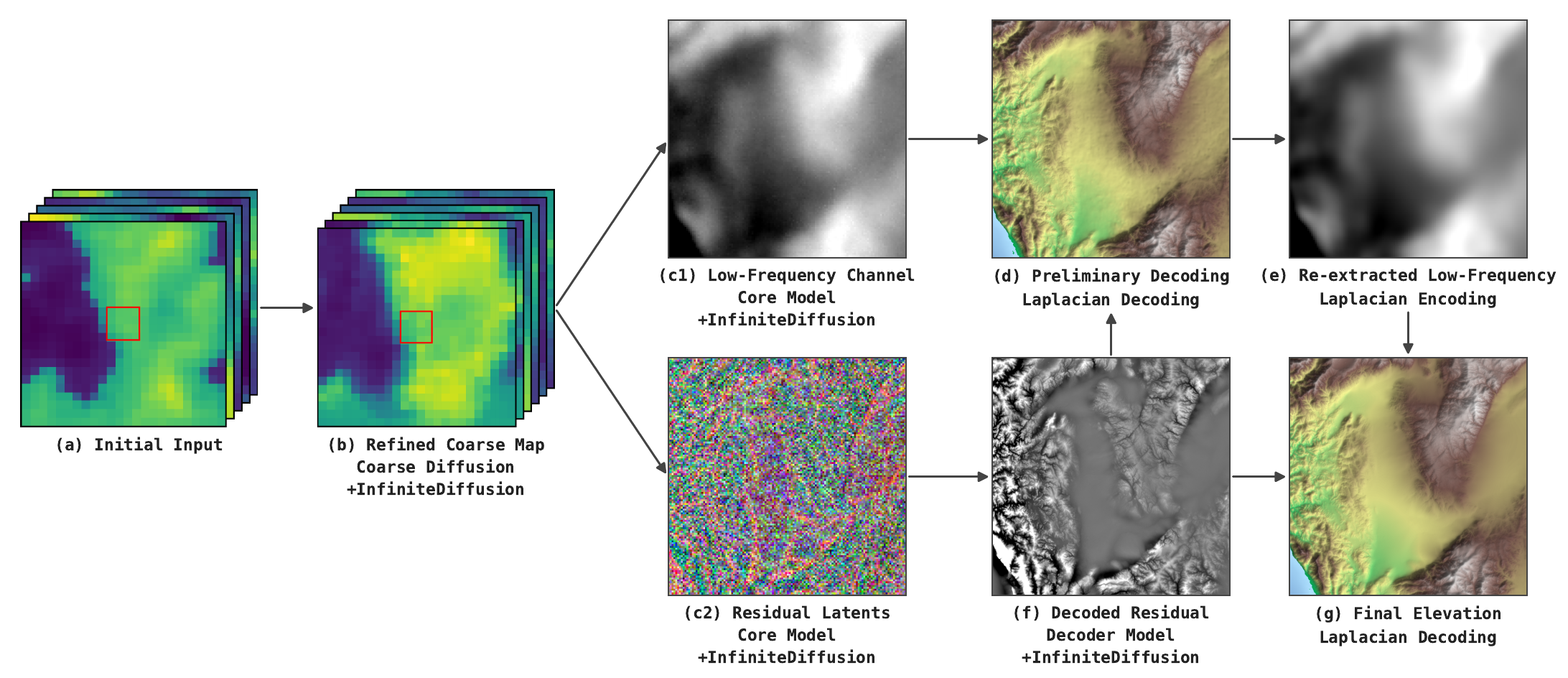}
\caption{The Terrain Diffusion Pipeline. (a) An initial input guides continental layouts and climate. The red box indicates the queried region. (b) The initial input is refined into a realistic coarse map. (c1, c2) The latent model generates the low-frequency channel and the residual latents simultaneously. (f) The residual latents are decoded into a full-resolution residual. (d, e) Laplacian denoising is used to eliminate errors in the low-frequency channel. (g) The denoised low-frequency channel and residual are merged into the final output. "+InfiniteDiffusion" indicates that an image represents only a region of an infinite image generated with InfiniteDiffusion. (d, e, f) are computed per-query.}
\label{fig:td_pipeline_viz}
\end{figure*}

\subsection{Stabilization Via Laplacian Encodings}

Due to normalization, the large dynamic range of Earth elevations make model errors deceptively large in absolute units. Even relatively small errors of $\sigma=0.01$ can correspond to $\pm25$m noise after denormalization. To mitigate this, we predict a Laplacian-based representation comprising a low-frequency component, obtained by downsampling and blurring the original image, and a residual/high-frequency component given by subtracting the upsampled low frequency component from the original image. 

Residual errors are over $30\times$ smaller in magnitude due to their lower variance. To clean the low-frequency channel after generation, we decode the noisy low- and high-frequency components $(L + H)$ into a provisional heightmap, then downsample and blur it to re-extract a denoised low-frequency $\hat{L}$, with any high-frequency noise redirected to the residual $\hat{H}$. Final synthesis uses $\hat{L} + H$, so low-frequency errors vanish while high-frequency detail is preserved. In practice, $L \approx \hat{L}$ even under strong synthetic noise, confirming that re-extraction cleanly isolates low-frequency structure. In this work, we downsample $8\times$ followed by a $\sigma=5$ blur. Our Laplacian denoising step reduces the FID \cite{heusel_gans_2018} of the untiled core diffusion model (introduced next) from $21.51$ to $8.11$, and the corresponding consistency model, which powers our final pipeline, from $75.15$ to $12.72$.

\subsection{A Hierarchical Model}

Planetary terrain spans several orders of magnitude in scale, from continental structure to meter-level detail, making one-pass generation infeasible. Several previous works \cite{du2024demofusion, sharma_earthgen_2024, 11209478, Lee_Kim_Kim_Sung_2023, Zhou_Tang_2024} have shown that MultiDiffusion produces repetitive results when poorly conditioned, a problem that InfiniteDiffusion inherits. We therefore organize generation into a small hierarchy of models operating at progressively finer resolutions. Each stage refines and conditions on the one above, maintaining large-scale coherence while producing realistic local detail. All models share a common EDM2 \cite{Karras2024edm2} backbone with the modifications proposed in sCM \cite{lu_simplifying_2025}. The hierarchy begins with a coarse planetary model, which generates the basic structure (23km/pixel) of the world from a rough, procedural or user-provided layout. The next stage is the core latent diffusion model \cite{rombach2021highresolution}, which transforms that rough structure into realistic 46km tiles in latent space. Finally, a consistency decoder expands these latents into a high-fidelity elevation map. All models are trained on random crops to learn a nearly translation-invariant representation, reflecting the fact that generation should be independent of absolute location. We visualize the coarse-to-fine pipeline in Figure ~\ref{fig:td_pipeline_viz}.

The core latent diffusion model synthesizes $512 \times 512$ patches at a 90m resolution in signed-sqrt space, corresponding to 46km tiles. It predicts a 64x64 low-frequency elevation channel and latent map that compactly represents the corresponding $512 \times 512$ image. To supply the latent codes, we train a separate VAE-style autoencoder that shares the same U-Net backbone but omits diffusion-specific components and skip connections. The model is optimized using L1 and LPIPS losses with a weak KL term to prevent overfitting. After training, the encoder processes each 2048×2048 tile in the dataset as a whole, and the resulting latent image is precomputed and stored alongside the tile. To maximize local quality, the imprecise VAE decoder is discarded, and a diffusion decoder learns to expand these latents into realistic and high-resolution residuals. Conditioning in the diffusion decoder is implemented by concatenating nearest-neighbor interpolated latents to the noisy input image at each diffusion step.

To facilitate long-range global coherence, the core model is conditioned on $4 \times 4$ patches of elevation data. Each pixel of the patch is about 23km, with the model prediction corresponding to the $2 \times 2$ interior. Each patch contains 3 channels: the mean elevation of the pixel, the 5th percentile elevation of the pixel, and a binary mask indicating which pixels have data available. We also provide the model with the tile's mean temperature, temperature variation, annual precipitation, and precipitation seasonality for additional coherence and user control. Since climatic data is not available in the ocean, we replace missing values with a standard gaussian, ensuring the model accepts any combination of climatic variables in ocean regions. These values are injected into the model as a flattened vector alongside the noise embedding.

\subsection{Real-Time Planetary Scale Synthesis}

The coarse diffusion model facilitates this hierarchy by producing the conditioning variables required by the core latent model. The user provides initial maps for this model using hand-drawn sketches or procedural noise, which we found works as well as purely learned methods while providing additional control. During inference, these inputs are corrupted with gaussian noise according to the user's preference on a per-channel basis, and concatenated against the usual diffusion inputs. The model follows the EDM2 design but with no downsampling or upsampling operations. This limits the receptive field by design, preventing the model from drifting toward the massive continental structures present in Earth data and avoiding conflicts where global priors override user guidance while still allowing strong local corrections.

To enable real-time streaming, all diffusion models, except the coarse model, are distilled into continuous-time consistency models \cite{lu_simplifying_2025}. To improve fidelity further, we apply the guidance scheme proposed in AutoGuidance \cite{Karras2024autoguidance}. Combined, these stages form a complete generation pipeline, from planetary context to local detail, capable of on-demand, real-time synthesis.
\section{Results}

We evaluate Terrain Diffusion's visual fidelity and latency using a single NVIDIA RTX 3090 Ti. The core consistency model uses two-step generation (stride 32, batch size 16). The coarse and decoder stages use one-step generation ($T=1$, batch size 1), with the decoder employing size $512$ windows with stride $384$. We use a separable linear weight window (decaying from 1 at the center to $\epsilon$ at the boundary) which improves FID from $19.32$ to $14.78$ compared to a constant map.

\subsection{Visual Fidelity}

We perform internal ablations to isolate the effects of our proposed architecture. We calculate FID \cite{heusel_gans_2018} for (1) non-tiled diffusion samples, (2) non-tiled consistency samples, and (3) tiled samples generated with InfiniteDiffusion\footnote{FID is computed on central 984×984 crops of the validation tiles to ensure adequate surrounding context. InfiniteDiffusion is applied only to the latent space; tiling the decoder would require impractically large context, and the decoder already produces near seamless outputs. (Appendix ~\ref{appendix:decoder_details})}. This isolates base fidelity, the effect of consistency distillation, and the effect of tiling. 

We also compare to Perlin blending \cite{10.1145/3571600.3571657}, where tiles are blended with perlin noise, and naive tiling, where tiles are simply concatenated without blending. To investigate the effect of latent blending, we evaluate InfiniteDiffusion with $T = 1$ and $T = 2$ (i.e. blending once or twice in the latent space). We also evaluate 'naive InfiniteDiffusion' ($T = 0$), which combines the typically separate latent model and decoder into a single stage. Table ~\ref{tab:fid} shows our results.

\begin{table}[h!]
\caption{FID-50k for generations with InfiniteDiffusion vs. other methods. Lower is better. The distilled theoretical lower bound is underlined. The best tiled result is bolded.}
\centering
\begin{tabular}{lcc}
\toprule
\textbf{Tiling} & Distilled & \textbf{FID $\downarrow$} \\
\midrule
Perlin Blending \cite{10.1145/3571600.3571657} & \checkmark & 186.70 \\
Naive Tiling & \checkmark & 74.44 \\
Naive InfiniteDiffusion ($T=0$) & \checkmark & 27.61\\
InfiniteDiffusion ($T=1$) & \checkmark & 19.78 \\
InfiniteDiffusion ($T=2$) & \checkmark & \textbf{14.78} \\
\midrule
None & \checkmark & \underline{12.72} \\
None & $\times$ & 8.11 \\
\bottomrule
\end{tabular}
\label{tab:fid}
\end{table}

Perlin blending yields a high FID of 186.70, confirming that procedural blending techniques fail to approximate the statistical distribution of natural topography. Naive InfiniteDiffusion, without blending in the latent space, is able to achieve good quality, which we attribute primarily to guidance from the hierarchy preserving structure. InfiniteDiffusion with $T=1$ closes the fidelity gap with the base model further. Most notably, InfiniteDiffusion with $T=2$ preserves the fidelity of the base consistency model with remarkable accuracy (14.78 vs. 12.72), successfully maintaining perceptual and spatial continuity, and scaling from finite training crops to infinite worlds with little degradation in quality. 

\subsection{Qualitative Analysis}

Figure~\ref{fig:terrain_grid} shows 20 1024$\times$1024 tiles from Terrain Diffusion, all from the same seed used for Fig. ~\ref{fig:teaser}. The model produces sharp ridges, visually coherent river basins, smooth transitions, and varied landscapes. No visible tiling artifacts confirm the effectiveness of InfiniteDiffusion. See Appendix ~\ref{appendix:fid-details} (Fig. ~\ref{fig:perlin_blending_samples}) for examples of Perlin Blending.

To demonstrate the practical utility of this approach, we integrate Terrain Diffusion into the Minecraft engine by replacing the native world generator. Elevation and biome queries are routed through our model, and climatic outputs are mapped to Minecraft biomes using a lightweight rule set. The system streams terrain in real time and handles arbitrary traversal, with runtime dominated by Minecraft’s own generation logic rather than our model. Figure~\ref{fig:minecraft_grid} shows representative in-game terrain. For these interactive visualizations, we apply bilinear interpolation to upsample the heightmaps $4 \times$.

\subsection{Latency: Time to First and Second Tile}
\label{sec:latency}

Because generation of any fixed-size region has bounded cost, generating a contiguous n-length strip is $O(n)$. But neighboring tiles reuse cached context, so the cost for querying the first region is larger than for subsequent regions.

Motivated by these facts, we measure end-to-end latency as the \textit{time to first tile} (TTFT) and \textit{time to second tile} (TTST) across resolutions. TTFT denotes the delay from model initialization to the first $512\times512$ tile becoming available, reflecting initial setup cost or a full cache miss, such as from teleportation. TTST measures the time to generate a neighboring tile thereafter, reflecting a lower bound on interactive exploration performance: performance improves further with more nearby cached tiles. While both metrics are bounded, they vary with the specific region location because the number of intersecting windows differs across positions. To account for this, we perform 1000 runs at random locations and report the average and standard deviation. Our results are reported in Table ~\ref{tab:latency}.

\begin{table}[h!]
\caption{Generation latency for the first and second tile. The best seamless result is bolded.}
\label{tab:latency}
\centering
\begin{tabular}{lccc}
\toprule
\textbf{Method} & \textbf{TTFT (s) $\downarrow$} & \textbf{TTST (s) $\downarrow$} \\
\midrule
InfiniteDiffusion ($T=2$) & $1.72 \pm 0.19$ & $0.66 \pm 0.18$ \\
InfiniteDiffusion ($T=1$) & $\mathbf{1.39 \pm 0.20}$ & $\mathbf{0.63 \pm 0.16}$ \\
Naive InfiniteDiffusion  & $2.05 \pm 0.22$ & $1.60 \pm 0.17$ \\
\midrule
Independent Tiles         & $0.51 \pm 0.05$ & $0.22 \pm 0.02$ \\
\bottomrule
\end{tabular}
\end{table}

With our highest quality configuration, an F-35, one of the fastest conventional aircraft in service at roughly 550 m/s, would traverse a 512×512 tile at 90 m resolution in about 84 seconds. In that time, Terrain Diffusion can produce 130 additional tiles. In a 1:15 miniature world, where a vehicle at highway speeds (60mph) effectively encounters terrain at 405 m/s, the system maintains a 170$\times$ performance buffer. Even at the theoretical extreme of orbital velocity ($\approx 7,700$ m/s), generation remains 9$\times$ faster than traversal. 

InfiniteDiffusion with $T = 1$ is moderately faster than $T = 2$ in TTFT, but performs similarly in TTST, demonstrating that the effect of larger $T$ is most impactful for the first query. Naive InfiniteDiffusion performs poorly despite using fewer blending steps, since the decoder must use a stride of 256 instead of 384 to align with the latent model. Independent tiling is about 3x faster than InfiniteDiffusion, roughly aligning with our stride choices of $50\%$ in the latent model (4x overhead) and $75\%$ in the decoder (1.78x overhead). A minimal cache ($\approx$ 10MB) achieves optimal TTFT and TTST performance by only memoizing the windows needed for the previous query, though we typically use a larger cache to achieve superior performance for each additional tile.

\section{Discussion}
\label{sec:discussion}

\paragraph{Limitations.} For more ambiguous prompts, InfiniteDiffusion with $T = 2$ can introduce some artifacts. While a few additional blending steps ($T=5$) typically resolves this, this can limit the use of two-step models and adds some overhead, particularly for TTFT. A second limitation is that, unlike point-wise procedural noise, InfiniteDiffusion generates in windows, making queries for significantly smaller regions inefficient if uncached. This primarily impacts scattered point queries, leaving Minecraft features reliant on long-range biome searches (namely \texttt{/locate biome} and explorer maps) unsupported. Finally, while InfiniteDiffusion effectively supports deep cascades of diffusion models, enabling immense scale, acquiring training data at the coarsest layer remains a challenge. Synthetic data offers a promising avenue for many applications, effectively distilling any other model or simulation into an approximate procedural counterpart.

\paragraph{As a procedural noise. } Although Terrain Diffusion retains the formal guarantees of procedural noise, it fundamentally diverges in fidelity and computational cost. By leveraging deep generative models, our method captures complex structure that noise functions only approximate as stationary textures. While this comes at the cost of raw throughput compared to the microsecond-level latency of noise, our warmup time of 1.72 seconds and steady-state generation time of 0.66 seconds remain highly practical. Future applications may distill complex but established physical simulations into practical procedural approximations that inherit both the realism of simulation and the functional utility of noise.

\section{Future Work}
For Terrain Diffusion, adding features to the hierarchy is a natural next step. Either as outputs, conditioning, or both. The coarse model, the base model, or both could incorporate additional variables such as soil properties,  other climatic variables, or satellite imagery, enhancing control and enabling additional downstream applications. The InfiniteDiffusion algorithm could also be applied to additional dimensions, venturing into voxel-based methods for 3D synthesis. 

InfiniteDiffusion also scales favorably with cascade depth. Existing approaches to ultra-high-resolution generation face a fundamental trade-off: MultiDiffusion scales exponentially in compute, while cropping-based methods scale linearly but discard context at every step, limiting exploration. InfiniteDiffusion resolves this through lazy evaluation: a user can zoom into any region of an implicitly ultra-high-resolution image while computing only the queried region.

After the first query, further navigation becomes even cheaper. Because panning by $S$ pixels at the finest layer requires moving only $S/a$ pixels at the next coarser layer, $S/a^2$ at the one above, and so on, cached computations at coarser layers remain almost entirely valid between queries. The amortized cost per $S \times S$ region is therefore
\[
    O(S^2)\left(1 + \frac{1}{a} + \frac{1}{a^2} + \cdots\right) = O(S^2) \cdot \frac{a}{a-1},
\]
independent of cascade depth, and just $\frac{4}{3} \approx 1.33\times$ the cost of a single layer for $4\times$ super-resolution at infinite depth. In other words, adding unlimited resolution to an already infinite world is nearly free.
\section{Conclusion}

We have presented InfiniteDiffusion, an algorithm for lazily sampling diffusion models across unbounded domains with seed-consistency and constant-time random-access. We further introduce Terrain Diffusion, a framework for practical learned procedural terrain generation with unprecedented realism. Together, these components position diffusion models as a practical foundation for learned procedural worldbuilding and infinite generation in general.

\section{Compute Statement}

Almost all training was performed on an RTX 3090 Ti, and all experiments fit within 24\,GB of VRAM. 
Training the entire pipeline end-to-end takes approximately two weeks on an RTX 3090 Ti, or one week with an RTX 5090.

\bibliographystyle{ACM-Reference-Format}
\bibliography{references}

\newpage
\begin{figure*}[t]
\centering
\includegraphics[height=0.93\textheight]{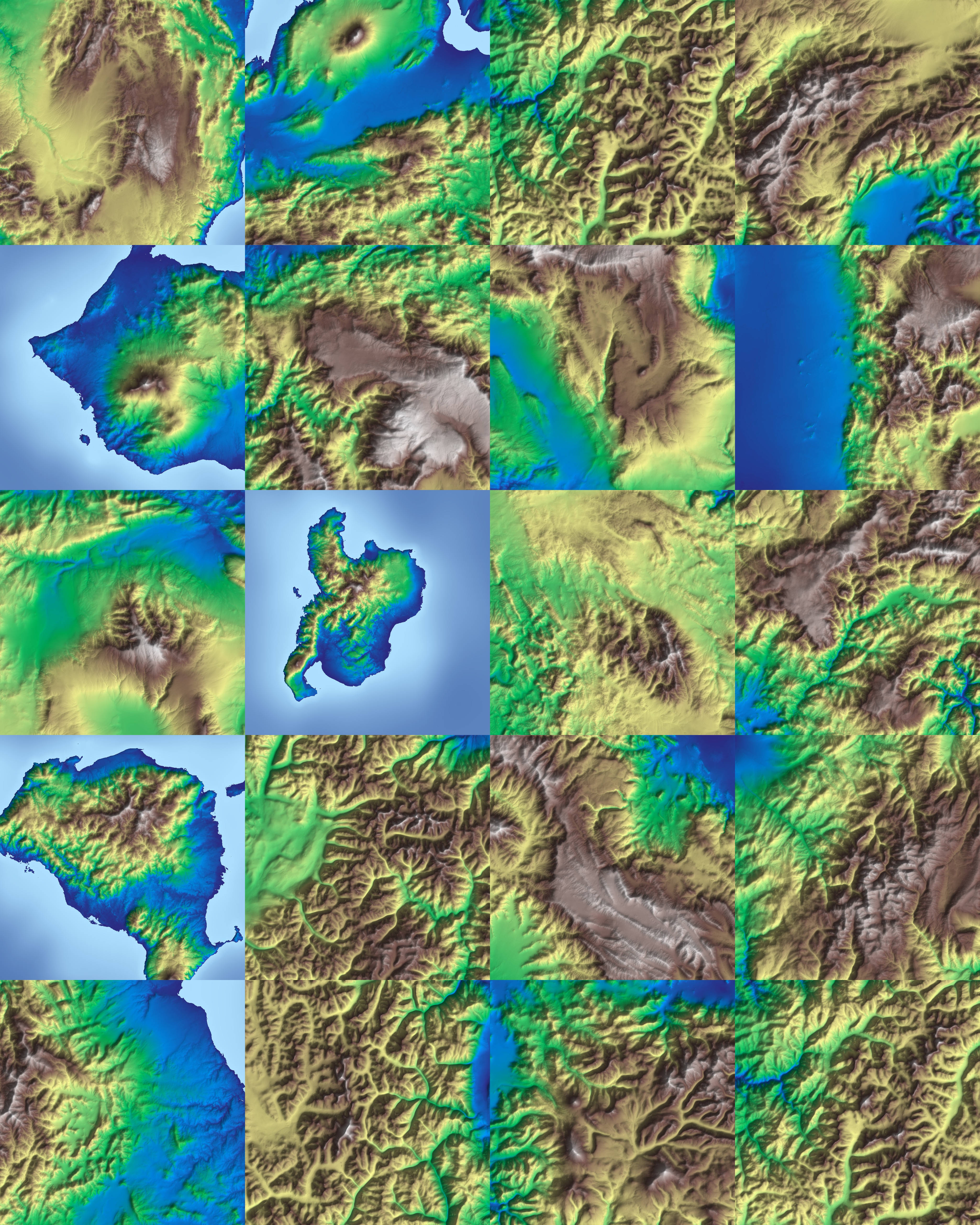}
\caption{Twenty generated 1024 by 1024 regions from Terrain Diffusion. Samples cover volcanic islands, high relief mountain systems, and dissected plateaus, illustrating the model’s ability to reproduce diverse landscapes with coherent multi-scale structure. All emerge from one world generated with the same seed as in Figure ~\ref{fig:teaser}. Zoom for details.}
\label{fig:terrain_grid}
\end{figure*}
\clearpage

\begin{figure*}[t]
\centering
\includegraphics[width=0.9\textwidth]{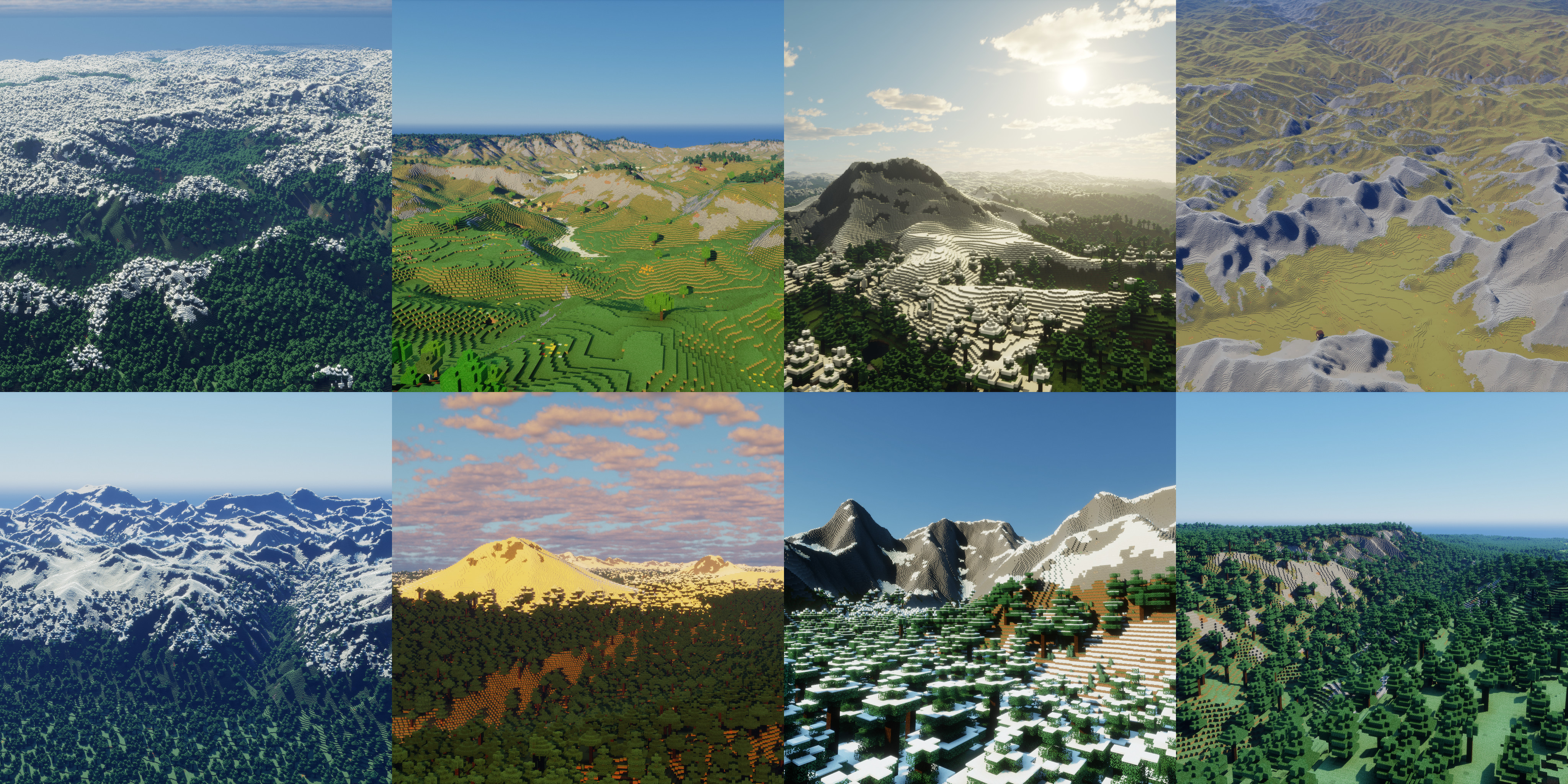}
\caption{Eight Minecraft scenes generated from Terrain Diffusion using a single fixed biome mapping derived from the model’s climatic outputs. The Distant Horizons mod is used to increase render distance, and Bliss shaders are used to enhance visuals.}
\label{fig:minecraft_grid}

\vspace{10pt}

\begin{minipage}[t]{\columnwidth}
\centering
\small{"A photo of mountain range at twilight"}
\includegraphics[width=0.95\textwidth]{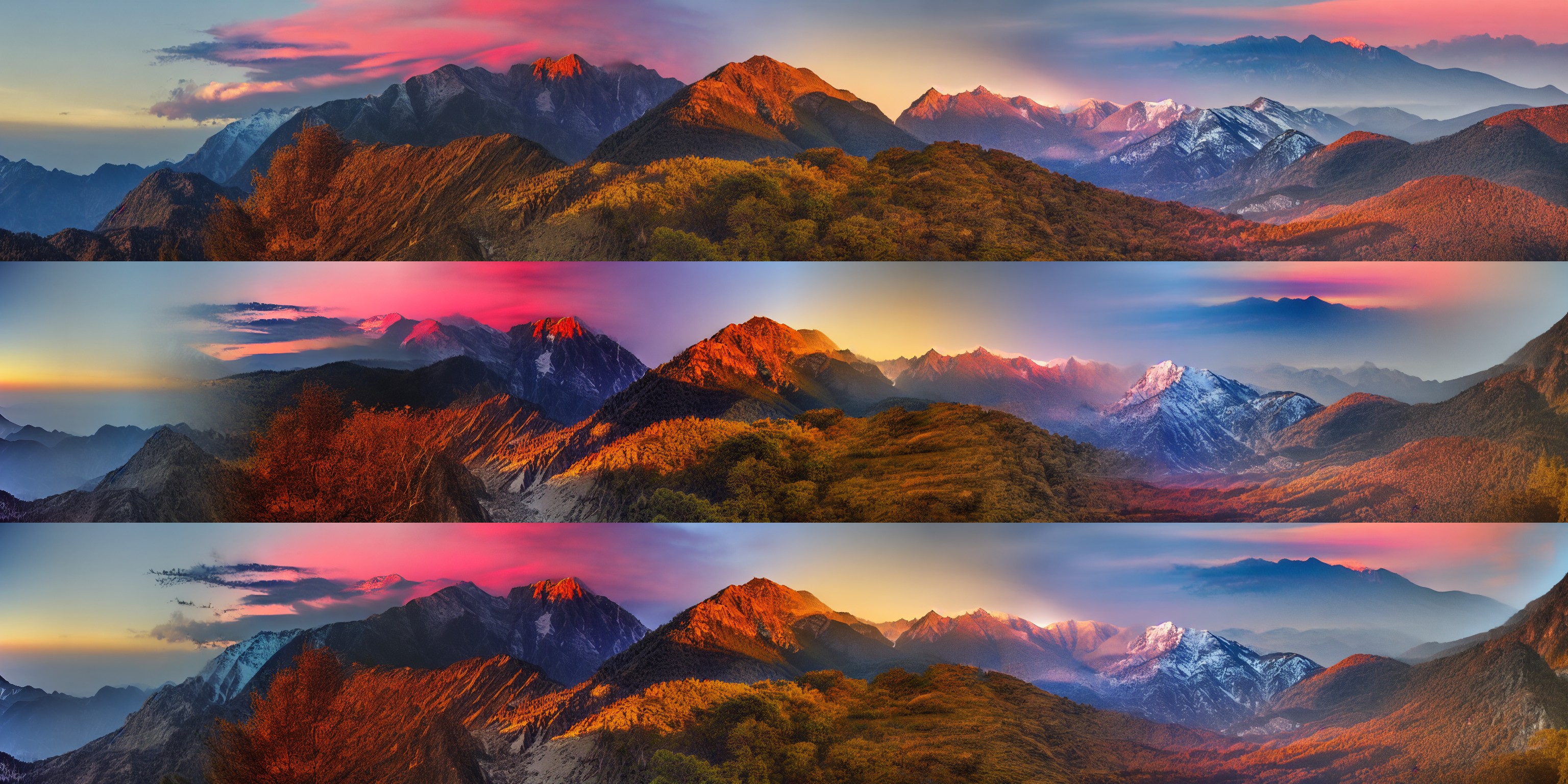}
\end{minipage}
\hfill
\begin{minipage}[t]{\columnwidth}
\centering
\small{"A photo of the dolomites"}
\includegraphics[width=0.95\textwidth]{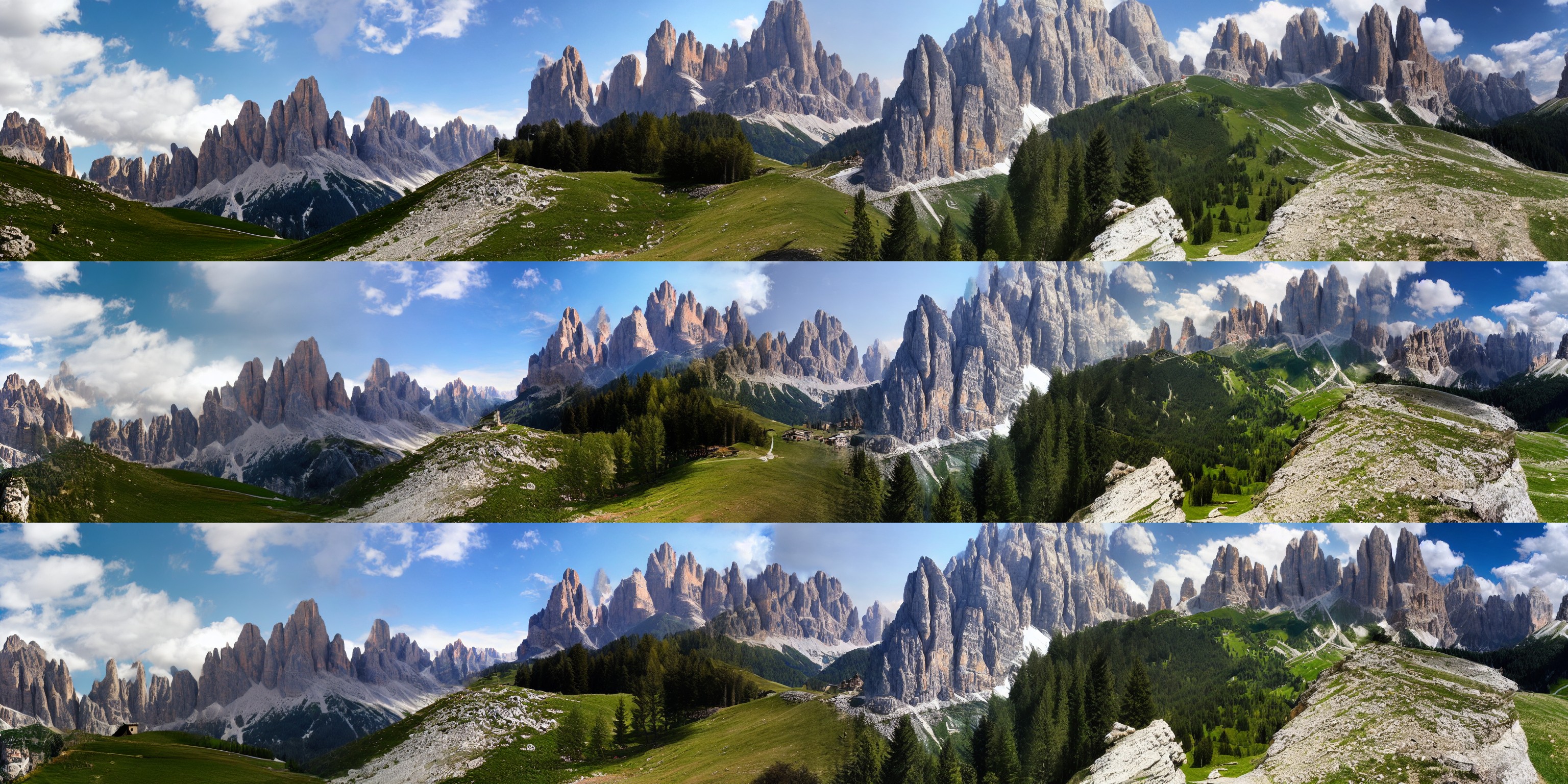}
\end{minipage}

\vspace{5pt}

\begin{minipage}[t]{\columnwidth}
\centering
\small{"Natural landscape in anime style illustration"}
\includegraphics[width=0.95\textwidth]{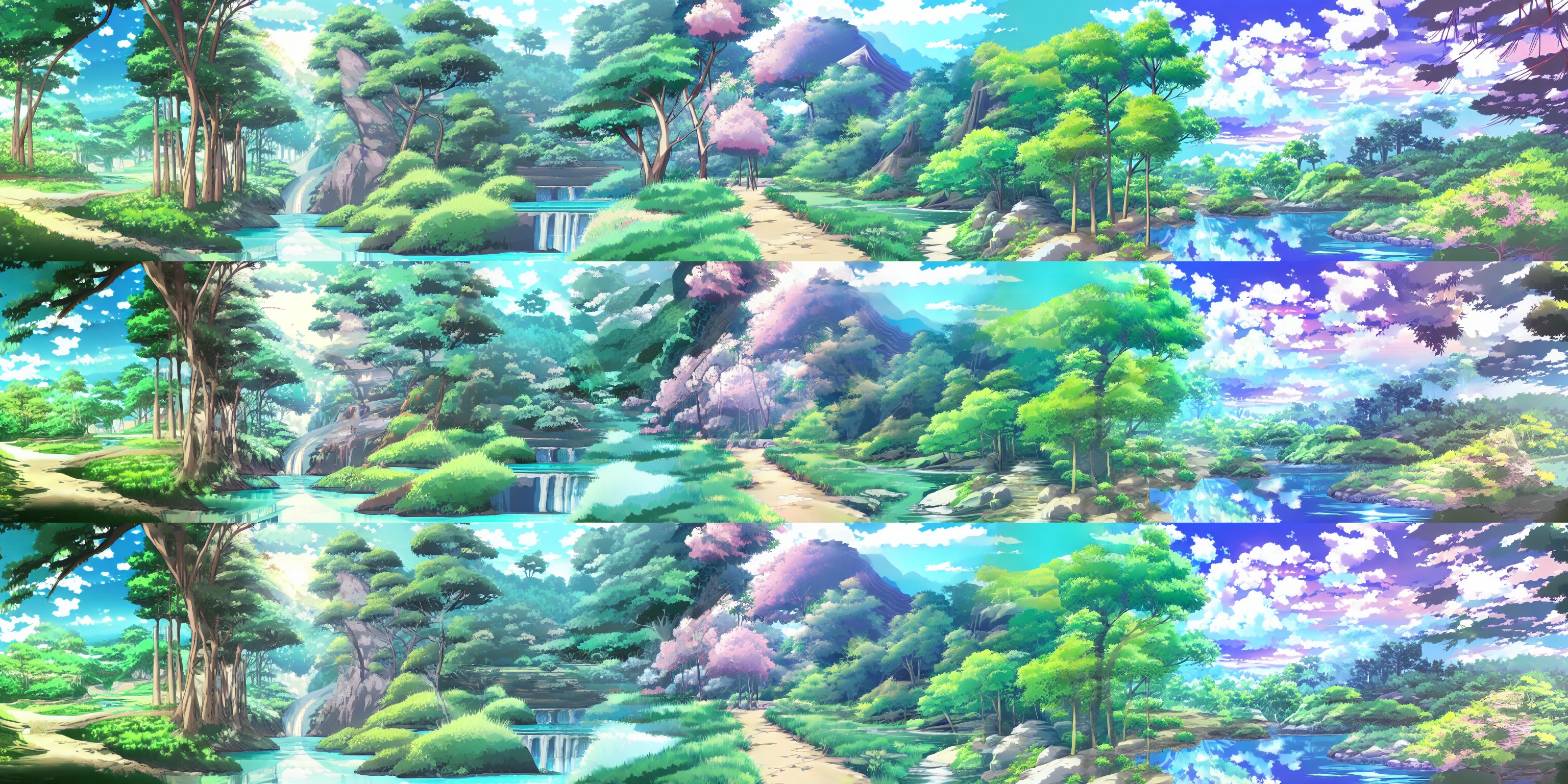}
\end{minipage}
\hfill
\begin{minipage}[t]{\columnwidth}
\centering
\small{"A photo of a snowy mountain peak with skiers"}
\includegraphics[width=0.95\textwidth]{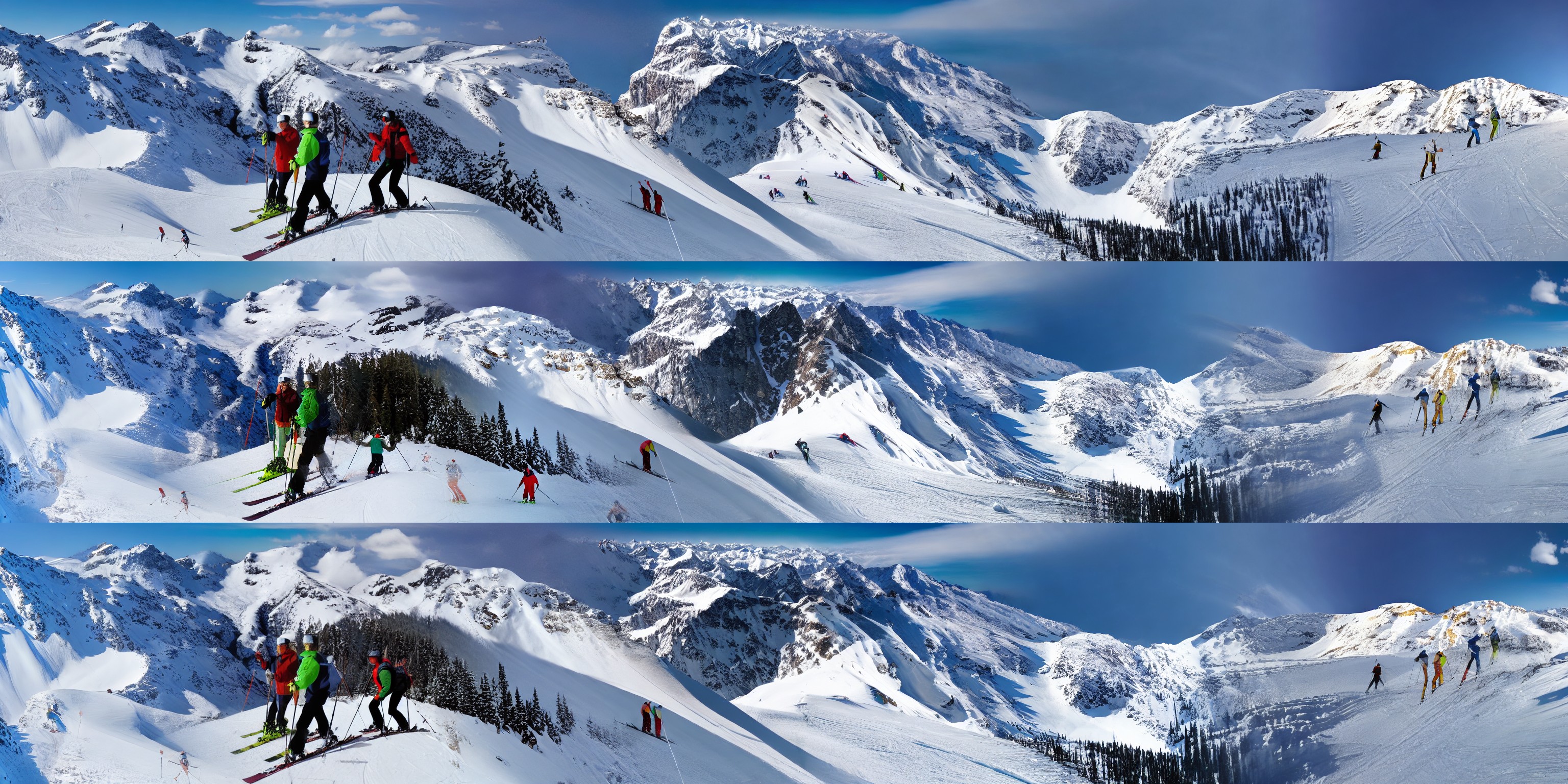}
\end{minipage}

\caption{A comparison of MultiDiffusion (top), InfiniteDiffusion ($T=1$, middle), and InfiniteDiffusion ($T = 2$, bottom). InfiniteDiffusion with $T=1$ retains the overall structure of MultiDiffusion, but introduces considerable artifacts. InfiniteDiffusion with $T=2$ eliminates many of these artifacts partially or fully, though some still remain. A few additional blending steps generally resolves this (Appendix G). Note that differences at the boundary are also caused by the infinite image implicitly continuing beyond the boundary, whereas MultiDiffusion stops abruptly.}

\label{fig:infinite_diffusion_comparison}
\end{figure*}
\clearpage

\appendix

\begin{figure*}[h]
\centering
\includegraphics[height=0.93\textheight]{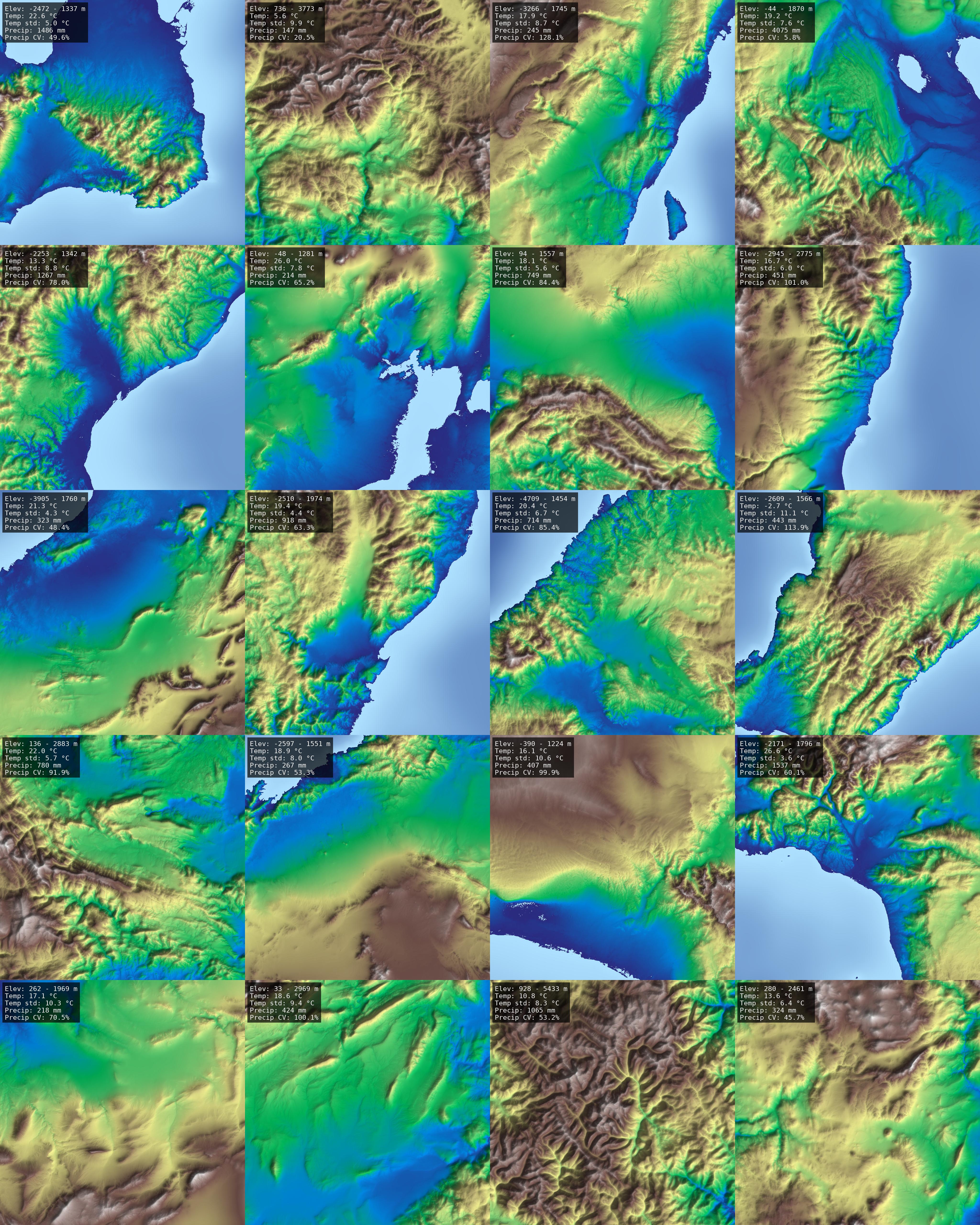}
\caption{Twenty additional 1024 by 1024 regions from Terrain Diffusion. Samples are uncurated, except for filtering regions with more than 50\% ocean. We include additional details on overall elevation range, and climate variables (average over region), in the top left of each sample.}
\label{fig:additional_samples}
\end{figure*}
\clearpage

\begin{figure*}[t]
    \centering
    \begin{subfigure}{0.37\linewidth}
        \centering
        \includegraphics[width=\linewidth]{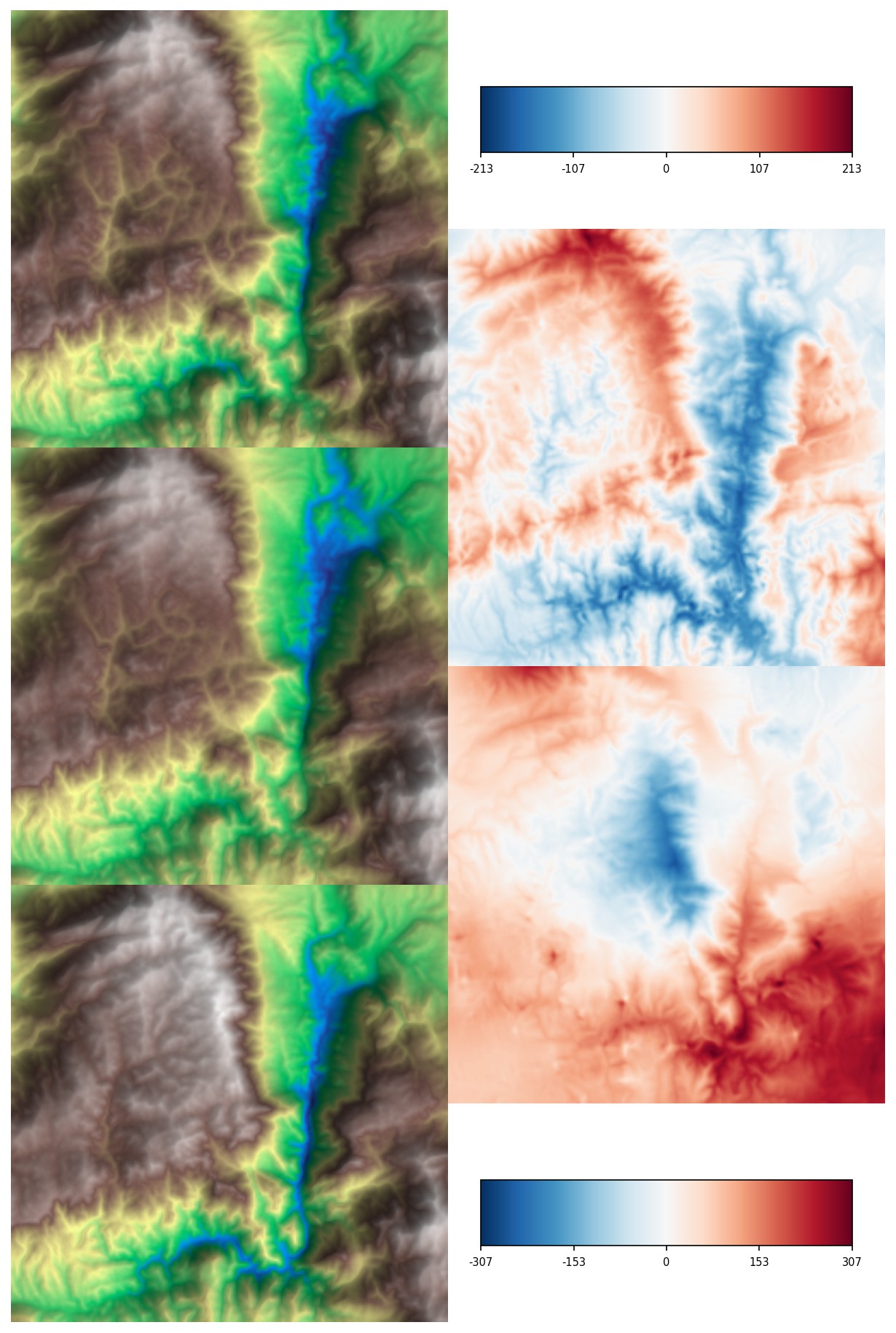}
        \label{fig:comp-a}
    \end{subfigure}
    \begin{subfigure}{0.37\linewidth}
        \centering
        \includegraphics[width=\linewidth]{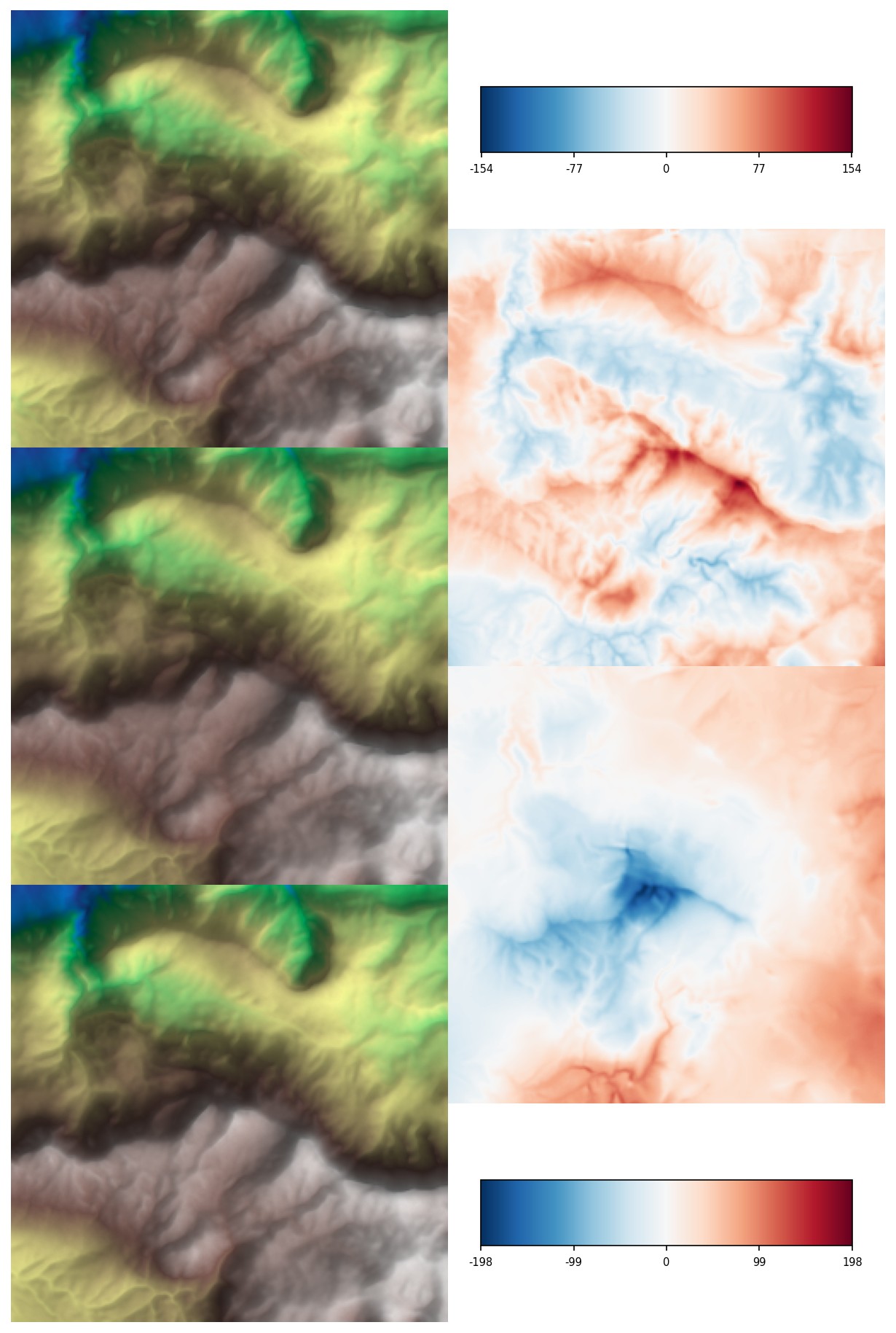}
        \label{fig:comp-b}
    \end{subfigure}
    \begin{subfigure}{0.37\linewidth}
        \centering
        \includegraphics[width=\linewidth]{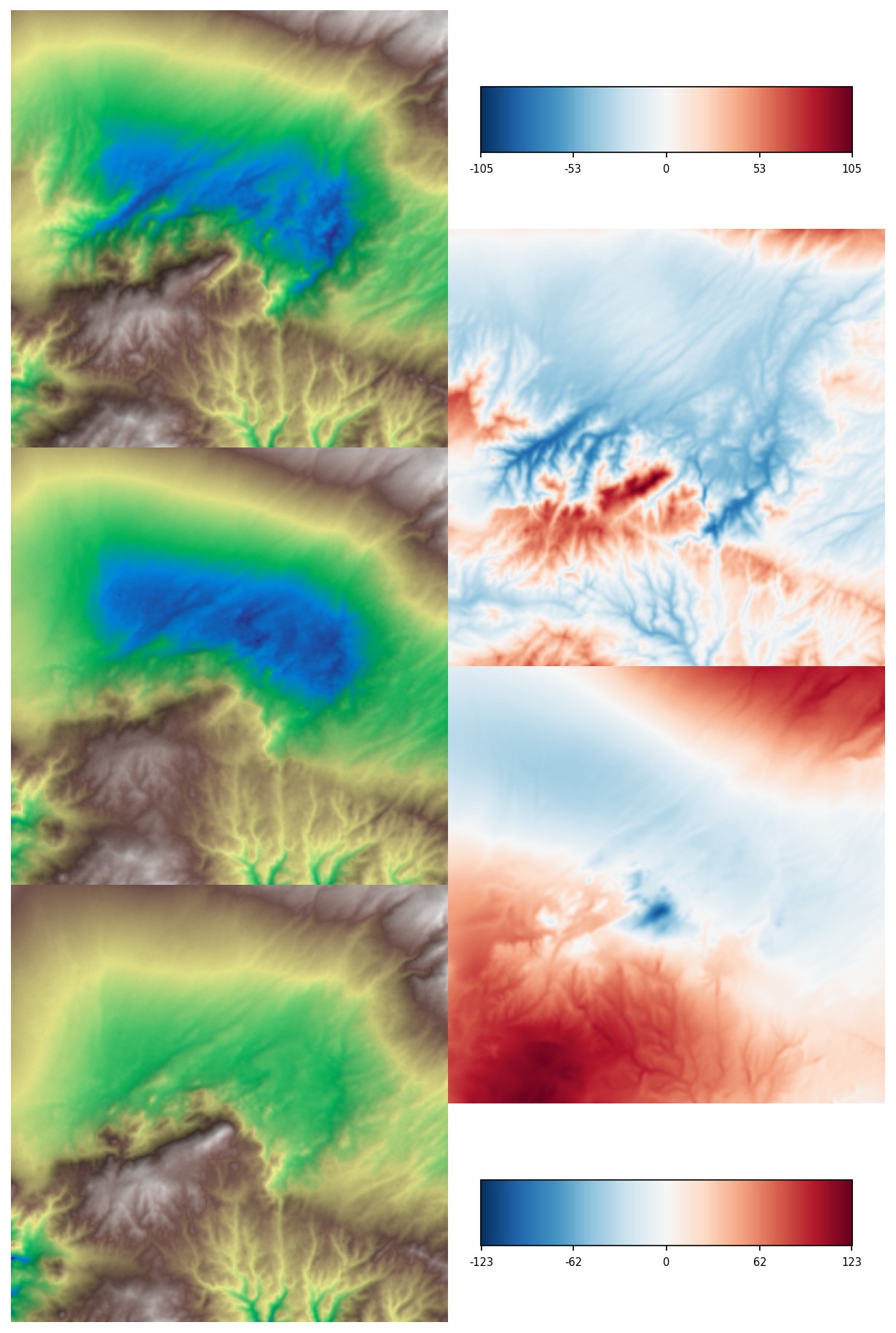}
        \label{fig:comp-c}
    \end{subfigure}
    \begin{subfigure}{0.37\linewidth}
        \centering
        \includegraphics[width=\linewidth]{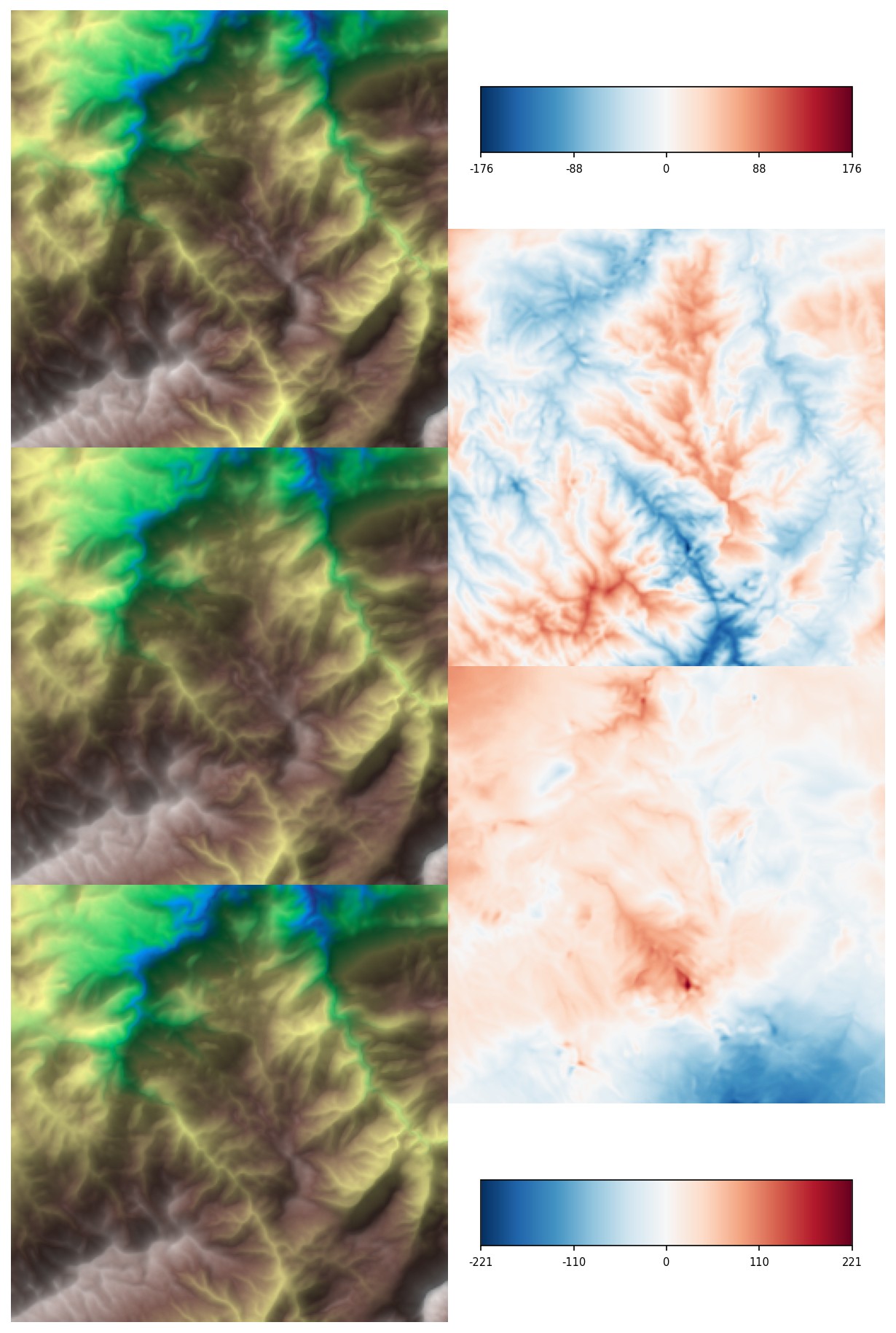}
        \label{fig:comp-d}
    \end{subfigure}
    \caption{A comparison of how different numbers of blending steps ($T$) affect output quality. Top: T=0. Middle: T=1. Bottom: T=2. The heatmaps indicate the change in heights moving from one image below to the one above. Moving from T=2 to T=1 we lose some sharpness, witness hydrology fail more often, and generally lose some finer details. Moving from T=1 to T=0 (Naive InfiniteDiffusion), we see the terrain surface generally exaggerating all features slightly.}
    \label{fig:extended-comparison}
\end{figure*}
\clearpage

\raggedbottom
\setlength{\parindent}{0pt}
\setlength{\parskip}{6pt}
\setlength{\abovedisplayskip}{8pt}
\setlength{\belowdisplayskip}{8pt}
\setlength{\abovedisplayshortskip}{8pt}
\setlength{\belowdisplayshortskip}{8pt}

\begin{figure*}[t]
\centering

\includegraphics[width=\textwidth]{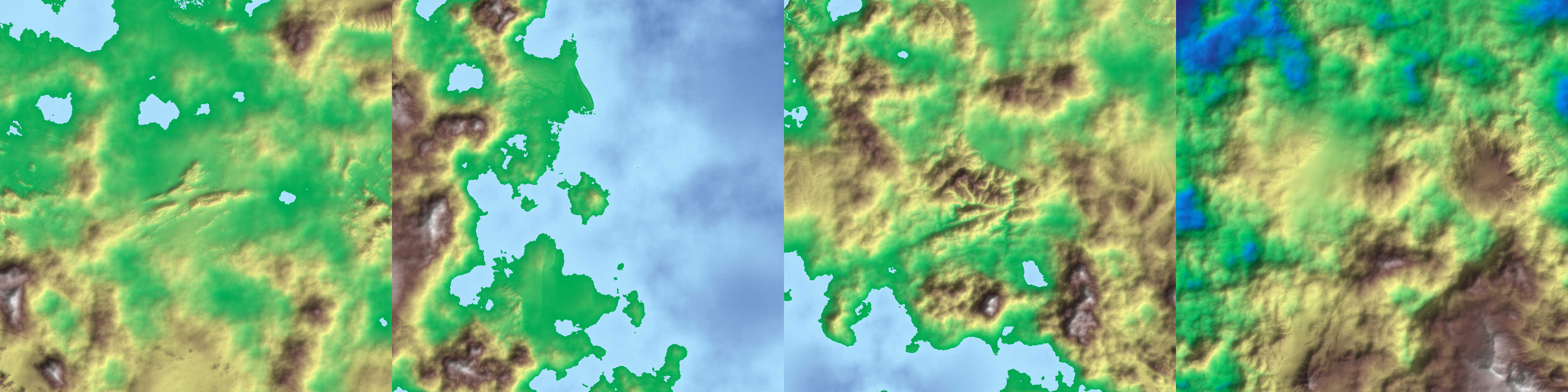}
\caption{Representative samples generated by our implementation of Perlin Blending.}
\label{fig:perlin_blending_samples}

\vspace{1em} 

\captionof{table}{Performance metrics for region generation.}
\label{tab:performance_appendix}
\begin{tabular}{ccccccc}
\toprule
T & Region Size& TTFT (s) $\downarrow$ & TTST (s) $\downarrow$ & Peak VRAM (MB) $\downarrow$& TTFT (p5, p50, p95) $\downarrow$ & TTST (p5, p50, p95) $\downarrow$ \\
\midrule
$T=2$&$1024\times1024$& 2.96s ± 0.35s& 1.79s ± 0.27s& 2214 MB& (2.59, 2.78, 3.74)& (1.40, 1.78, 2.30)\\
$T=2$&$512\times512$& 1.72s ± 0.19s& 0.66s ± 0.18s& 2214 MB& (1.34, 1.76, 2.00)& (0.42, 0.62, 0.96)\\
$T=2$&$256 \times 256$& 1.25s ± 0.19s& 0.26s ± 0.18s& 2214 MB& (1.03, 1.22, 1.54)& (0.00, 0.29, 0.53)\\
$T=2$&$128 \times 128$& 1.06s ± 0.14s& 0.10s ± 0.15s& 2214 MB& (0.72, 1.09, 1.22)&(0.00, 0.00, 0.40)\\
\midrule
$T=1$&$1024\times1024$& 2.58s ± 0.37s& 1.75s ± 0.30s& 2214 MB& (2.18, 2.58, 3.32)& (1.38, 1.71, 2.36)\\
$T=1$&$512\times512$& 1.39s ± 0.20s& 0.63s ± 0.16s& 2214 MB& (0.99, 1.44, 1.72)& (0.39, 0.60, 0.88)\\
$T=1$&$256\times256$& 0.95s ± 0.15s& 0.25s ± 0.17s& 2214 MB& (0.97, 1.10, 1.70)& (0.00, 0.34, 0.56)\\
$T=1$&$128\times128$& 0.79s ± 0.14s& 0.11s ± 0.16s& 2214 MB& (0.45, 0.83, 0.90)& (0.00, 0.00, 0.38)\\
\bottomrule
\end{tabular}
\end{figure*}

\section{Extended Results And Methodology}

\subsection{Additional Qualitative Samples}

In Figure ~\ref{fig:additional_samples}, we showcase twenty additional 1024 by 1024 regions from Terrain Diffusion. Samples are uncurated, except for automatically excluding regions with more than 50\% ocean in the coarse map. We include additional details on overall elevation range, and climate variables, in the top left of each sample.

\subsection{Additional Details on FID Calculations}
\label{appendix:fid-details}

FID calculations use raw elevation values, with images normalized by centering each tile and scaling by the larger of its value range or 255, ensuring that images are not expanded beyond the native precision of the data. In Algorithm ~\ref{alg:normalize_uint8}, we provide pseudocode for the normalization scheme used to convert arbitrary elevation values to the 0-255 range required for FID calculation.

For Perlin blending \cite{10.1145/3571600.3571657}, our implementation differs from the reference method, which targets a zero-centered dataset. To ensure a fair comparison, we adapt the algorithm to our data: rather than applying Perlin noise with a static distribution, we employ an adaptive distribution centered on the mean and scaled by the standard deviation of surrounding elevation tiles (weighted by linear interpolation). Figure ~\ref{fig:perlin_blending_samples} provides representative samples.

\subsection{Decoder FID}
\label{appendix:decoder_details}

We measure the decoder’s standalone rFID at $512 \times 512$ resolution. rFID measures FID between original images and their reconstructions. The one step variant obtains an rFID of $2.83$, while the two step variant reaches $1.07$. Tiling only increases the one-step FID to $2.99$.

\subsection{Additional Performance Metrics}

In addition to the latency metrics provided in the main text, we also report peak VRAM usage and additional latency metrics for varying region sizes. Our results are in Table ~\ref{tab:performance_appendix}. For all performance calculations, we initially compile the models with \texttt{torch.compile()}, and use full-precision inference (\texttt{fp32}), since it provided the best performance. VRAM usage reduces to 1846 MB with half-precision floats. We measure latency end-to-end as observed through the same API exposed to applications, including all system overhead.

\subsection{Extended Visual Comparison}
Figure ~\ref{fig:extended-comparison} visualizes terrain outputs across decreasing noise levels (T=2 to T=1 to T=0). While differences appear subtle in 2D visualization, each stage shift corresponds to elevation changes of $\pm100$m geologically. While all methods produce plausible terrain with no obvious artifacts, which we attribute to our hierarchical architecture, potentially significant differences emerge across timesteps. 

\begin{algorithm}[H]
\caption{Normalizing heightmaps for FID}
\label{alg:normalize_uint8}

\begin{algorithmic}

\State \textbf{Inputs:}

\begin{tabbing}
\hspace{0em} \= \hspace{2.5em} \= \hspace{1em} \= \kill
\> $I$ \> $\vartriangleright$ \> batch of single-channel images ($B \times 1 \times H \times W$) \\
\end{tabbing}

\State $I_{\min} \gets \min(I)$ \Comment{Per-image minimum}
\State $I_{\max} \gets \max(I)$ \Comment{Per-image maximum}
\State $I_{\text{range}} \gets \max(I_{\max} - I_{\min}, 255)$ \Comment{Ensure scaling factor $\ge 255$}
\State $I_{\text{mid}} \gets (I_{\min} + I_{\max}) / 2$

\State $I_{\text{norm}} \gets \text{clamp}\left( \left( \frac{I - I_{\text{mid}}}{I_{\text{range}}} + 0.5 \right) \times 255, 0, 255 \right)$
\State $O \gets \text{repeat}(I_{\text{norm}}, \text{channels}=3)$

\\
\State \textbf{Output:} $O$ cast to uint8
\end{algorithmic}
\end{algorithm}

\section{Formal Properties of InfiniteDiffusion}

\label{appendix:infinitediffusion-properties}

For reference, we include the InfiniteDiffusion algorithm from the main text.

\begin{algorithm}[H]
\caption{Querying $J_t[R]$ ($t$ < $T$) with InfiniteDiffusion.}
\label{alg:infinite_diffusion_step_appdx}

\begin{algorithmic}

\State \textbf{Inputs:}

\begin{tabbing}
\hspace{0em} \= \hspace{2.5em} \= \hspace{1em} \= \kill
\> $\Phi$ \> $\vartriangleright$ \> pretrained diffusion model\\
\> $J_{t+1}$               \> $\vartriangleright$ \> infinite noisy input image \\
\> $A_t$           \> $\vartriangleright$ \> infinite accumulated output image \\
\> $B_t$           \> $\vartriangleright$ \> infinite accumulated weights for $A_t$ \\
\> $R$                 \> $\vartriangleright$ \> region to query \\
\> $P_t$                 \> $\vartriangleright$ \> set of processed windows \\
\end{tabbing}

\For{each window $i$ in $\kappa(R) \setminus P$}
    \State $A_t[R_i] \gets A_t[R_i] + W_i \otimes \Phi(J_{t+1}[R_i] \mid y_i)$
    \State $B_t[R_i] \gets B_t[R_i] + W_i$
\EndFor
\State $P_t \gets P_t \cup \kappa(R)$

\\
\State \textbf{Output:} $J_t[R] = A_t[R] / B_t[R]$

\end{algorithmic}
\end{algorithm}

\subsection{Preliminaries and Notation}

We first make precise the InfiniteDiffusion framework used in the main text.

\paragraph{Infinite image space.}
Let the spatial domain be the integer lattice $\mathbb{Z}^d$, and let
$$\mathcal{J} \coloneqq \mathbb{R}^{\mathbb{Z}^d \times C}$$
denote the space of infinite images with $C$ channels.  We write $J \in \mathcal{J}$ as a function $J : \mathbb{Z}^d \to \mathbb{R}^C$ over pixel coordinates. For the following sections, we write $\Omega$ as shorthand for the index-set $[H_1] \times \cdots \times [H_d]$.

\paragraph{Regions and windows.}
A (rectangular) region is any subset of the form
$$
R = [a_1, b_1) \times \cdots \times [a_d, b_d) \subset \mathbb{Z}^d
$$
with integers $a_k < b_k$.  The set of window indices is a countable set $S$ (e.g.\ $S = \mathbb{Z}^d$).  For each $i \in S$ we are given a window region $R_i \subset \mathbb{Z}^d$, a weight map $W_i \in \mathbb{R}^{\Omega}$, and a conditioning vector $y_i$. We assume the window layout is such that for every finite region $R$, only finitely many windows intersect $R$:

\textbf{Assumption 1 (Finite window overlap).}
For every finite region $R$,
$$
\kappa(R) \coloneqq \{ i \in S \;:\; R_i \cap R \neq \emptyset \}
$$
is finite.

This assumption is true in the sliding window case. We include the algorithm for computing $\kappa$ in the sliding window case in Algorithm ~\ref{alg:pixel_range_to_window_range} below.

\begin{algorithm}[H]
\caption{Finding all windows intersecting a pixel region}
\label{alg:pixel_range_to_window_range}

\begin{algorithmic}

\State \textbf{Inputs:}

\begin{tabbing}
\hspace{0em} \= \hspace{2.5em} \= \hspace{1em} \= \kill
\> $R$ \> $\vartriangleright$ \> pixel region per dimension $d$: $R_d = [R_d^{\text{start}},\, R_d^{\text{stop}})$ \\
\> $s$ \> $\vartriangleright$ \> window size per dimension \\
\> $\sigma$ \> $\vartriangleright$ \> window stride per dimension \\
\> $o$ \> $\vartriangleright$ \> window offset per dimension \\
\end{tabbing}

\State \textit{// Window $i$ covers pixels $[i_d \cdot \sigma_d + o_d,\; i_d \cdot \sigma_d + o_d + s_d)$}
\State \textit{// in each dimension $d$}

\For{each dimension $d$}

    \State $n_d \gets R_d^{\text{start}} - o_d - s_d + 1$
    \State $i_d^{\min} \gets \lceil n_d \mathbin{/} \sigma_d \rceil$

    \State $i_d^{\max} \gets \lfloor (R_d^{\text{stop}} - 1 - o_d) \mathbin{/} \sigma_d \rfloor$

\EndFor

\If{$\exists\, d$ such that $i_d^{\min} > i_d^{\max}$}
    \State \Return $\varnothing$ \Comment{Region is in-between windows}
\EndIf

\State \Return $\bigtimes_d \; [i_d^{\min},\, i_d^{\max}]$

\end{algorithmic}
\end{algorithm}

\paragraph{Embedding operator.}
For any tensor $x \in \mathbb{R}^{\Omega \times C}$, the operator $U_i(x) \in \mathcal{J}$ denotes the infinite image obtained by placing $x$ on $R_i$ and zero elsewhere.

\paragraph{Pretrained diffusion model.}
Let $\Phi$ denote a fixed pretrained denoising network acting on window-sized images.  We formalize it as a deterministic function
$$
\Phi : \mathbb{R}^{\Omega \times C} \times \mathcal{Y} \to \mathbb{R}^{\Omega \times C},
$$
where $\mathcal{Y}$ is the space of conditioning vectors.

\paragraph{InfiniteDiffusion update.}
For a given noisy image $J_t \in \mathcal{J}$, the InfiniteDiffusion update at step $t \to t-1$ is defined, for any finite region $R$, by
\begin{equation}
\Psi(J_t \mid z)[R]
=
\left(
\frac{
\sum_{i \in \kappa(R)}
    U_i\bigl(W_i \otimes \Phi(J_t[R_i] \mid y_i)\bigr)
}{
    \sum_{j \in \kappa(R)} U_j(W_j)
}
\right)[R],
\label{eq:infinitediff-formal}
\end{equation}
where $\otimes$ denotes the Hadamard (elementwise) product and the division is also elementwise. For this update, we adopt the convention that any division by zero is defined to be zero. By Assumption~1, both sums are finite on any finite $R$, so~\eqref{eq:infinitediff-formal} is well-defined. $z$ is a (possibly infinite) vector from which the $y_i$ are computed.

\paragraph{Seeds and initial noise.}
Let $\mathcal{X}$ be a set of seeds.  A seed $s \in \mathcal{X}$ deterministically selects an initial infinite noise image $J_T^{(s)} \in \mathcal{J}$, and a conditioning vector $z^{(s)} \in \mathcal{Z}$.
Formally, there are deterministic functions
$$
G : \mathcal{X} \times \mathbb{Z}^d \to \mathbb{R}^C, \qquad
H : \mathcal{X} \to \mathcal{Z}, \qquad
\Lambda: \mathcal{Z} \times S \to \mathcal{Y}
$$
such that
$$
J_T^{(s)}(p) = G(s,p), \qquad z^{(s)} = H(s), \qquad y_i^{(s)} = \Lambda(z^{(s)}, i).
$$

\paragraph{Definition of $J_t^{(s)}$.}
For a fixed seed $s$, we define the entire trajectory $(J_t^{(s)})_{t=0}^T$ recursively by:
\begin{itemize}
\item $J_T^{(s)}$ is given by $G$.
\item For $t = T, T-1, \dots, 1$, define $J_t^{(s)}$ via  \eqref{eq:infinitediff-formal} with $J_t = J_t^{(s)}$ and  $y_i = y_i^{(s)}$.
\end{itemize}
Because $T$ is finite and each update uses only finite sums on any finite region, the infinite images $J_t^{(s)}$ are well-defined for all $t \in \{0,\dots,T\}$.

\paragraph{Lazy evaluation algorithm.}
Algorithm~\ref{alg:infinite_diffusion_step_appdx} efficiently computes~\eqref{eq:infinitediff-formal}. At level $t < T$ it maintains infinite images $A_t, B_t \in \mathcal{J}$ and a processed set $P_t \subseteq S$. To answer a query $J_t[R]$, it performs:

\begin{quote}
\emph{For each window $i \in \kappa(R) \setminus P_t$:}
$$
\begin{aligned}
A_t[R_i] &\gets A_t[R_i] + W_i \otimes \Phi(J_{t+1}[R_i], y_i),\\
B_t[R_i] &\gets B_t[R_i] + W_i.
\end{aligned}
$$
Add each such $i$ to $P_t$. The result is
$$
J_t[R] = A_t[R] / B_t[R].
$$
\end{quote}

Where the division is performed elementwise. New windows are evaluated recursively by querying $J_{t+1}[\cdot]$ in the same way at level $t+1$, until reaching $J_T^{(s)}$, which is given by the seed. At intialization, $A_t = \textbf{0}$, $B_t = \textbf{0}$, and $P_t = \emptyset$.

\medskip
We now formalize and prove the three properties stated in the main text.

\subsection{Seed Consistency}

Informally, seed consistency says that once the seed is fixed, every
finite region $J_t[R]$ is a deterministic function of the seed and
the region alone, and is independent of the order in which regions are
queried.

\begin{definition}[Seed-consistent generative process]
A family of random infinite images $\{J_t\}_{t=0}^T$ on $\mathbb{Z}^d$ is \emph{seed-consistent} if there exists a set of seeds $\mathcal{X}$ such that for all $s \in \mathcal{X}$, $t \in \{0,\dots,T\}$ and finite $R$,
$$J_t[R]$$ is a function of $s$ and $R$. Consequently, repeated queries for the same $t,s,R$ always return the same value, irrespective of the order in which regions are requested.
\end{definition}

We show that InfiniteDiffusion, as defined in ~\ref{eq:infinitediff-formal}, is seed-consistent.

\begin{lemma}[InfiniteDiffusion is seed-consistent]
\label{lem:deterministic-infinitediffusion}
Fix a seed $s \in \mathcal{X}$.  Then for each $t \in\{0,\dots,T\}$ and finite region $R$, the infinite image $J_t^{(s)}[R]$ defined by the recursive update~\eqref{eq:infinitediff-formal} is uniquely determined by $s$ and $R$.
\end{lemma}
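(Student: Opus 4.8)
The plan is to prove the lemma by downward induction on $t$, from $t = T$ down to $t = 0$, establishing at each level that $J_t^{(s)}[R]$ is a deterministic function of the seed $s$ and the finite region $R$ alone. The base case $t = T$ is immediate from the definition: $J_T^{(s)}(p) = G(s,p)$, so $J_T^{(s)}[R] = \bigl(G(s,p)\bigr)_{p \in R}$ depends only on $s$ and $R$. For the inductive step, I would assume that $J_t^{(s)}[R']$ is a function of $s$ and $R'$ for every finite region $R'$, and then inspect the update~\eqref{eq:infinitediff-formal} that defines $J_{t-1}^{(s)}[R]$.

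The right-hand side of~\eqref{eq:infinitediff-formal} is built from four ingredients, which I would handle one at a time. First, the index set $\kappa(R)$, the window regions $\{R_i : i \in \kappa(R)\}$, and the weight maps $W_i$ are part of the fixed window layout: they depend on $R$ but never on $s$, and by Assumption~1 the set $\kappa(R)$ is finite. Second, the conditioning vectors $y_i^{(s)} = \Lambda(H(s), i)$ are a deterministic function of $s$ and $i$. Third, each argument $J_t^{(s)}[R_i]$ is the level-$t$ image restricted to the finite region $R_i$; since $i$ ranges over the finite, $R$-determined set $\kappa(R)$, the induction hypothesis makes each $J_t^{(s)}[R_i]$ a function of $s$ and $R$. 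Fourth, $\Phi$ is a fixed deterministic map, the Hadamard products and the finitely many sums are deterministic, and the elementwise division (with the stated divide-by-zero-equals-zero convention) together with the final restriction to $R$ are deterministic. Composing these, $J_{t-1}^{(s)}[R]$ is a deterministic function of $s$ and $R$, closing the induction.

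Two small points deserve explicit care. First, to confirm that $J_{t-1}^{(s)}$ is a genuine infinite image $\mathbb{Z}^d \to \mathbb{R}^C$ rather than merely a family of region-answers that might disagree on overlaps, I would note that for any pixel $p$ the only windows contributing to the numerator and denominator at $p$ are those with $p \in R_i$, since all other terms vanish under $U_i$; hence the value computed at $p$ from any finite region $R \ni p$ is the same, the region-wise definitions glue consistently, and the recursion is insensitive to the choice of query region. Second, well-foundedness: computing $J_{t-1}^{(s)}[R]$ invokes $J_t^{(s)}[\cdot]$ only on the finitely many finite regions $\{R_i : i \in \kappa(R)\}$, whose union is again finite, and iterating back through the $T$ levels reaches $J_T^{(s)}$, which the seed supplies directly — so the recursion terminates after finitely many steps on finitely many finite regions.

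The main obstacle is bookkeeping rather than mathematical depth: making precise that $\kappa$ and the window data are seed-independent fixed data, so that the only seed dependence enters through $G$, through the map $i \mapsto \Lambda(H(s), i)$, and through the inductively controlled $J_t^{(s)}$; and verifying the gluing property above so that the symbol $J_t^{(s)}[R]$ is unambiguous. Once these are in place the induction is routine. The order-independence and caching claims of the main text then follow as a corollary: Algorithm~\ref{alg:infinite_diffusion_step} computes exactly the values $J_t^{(s)}[R]$ characterized here, and its memoization in $A_{t-1}, B_{t-1}, P_{t-1}$ only stores partial sums of the same deterministic finite expressions, so repeated or reordered queries return identical results.
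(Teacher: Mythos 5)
Your proof is correct and follows essentially the same route as the paper: backward induction on $t$, with the base case given by $G$ and the inductive step reading off determinism of $\kappa(R)$, $W_i$, $y_i^{(s)}$, $\Phi$, and the inductively controlled $J_t^{(s)}[R_i]$ from the update~\eqref{eq:infinitediff-formal}. Your extra remarks on gluing across query regions and on well-foundedness of the recursion are sound refinements of points the paper handles implicitly in its definition of $J_t^{(s)}$, but they do not change the argument.
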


\begin{proof}
We proceed by backward induction on $t$.

\emph{Base case ($t=T$).}  By construction, $J_T^{(s)}(p) = G(s,p)$ for all $p$, so for any finite region $R$, the restriction $J_T^{(s)}[R]$ is uniquely determined by $s$ and $R$.

\emph{Inductive step.}  Assume that for some $t \in \{1,\dots,T\}$, $J_t^{(s)}[R]$ is uniquely determined by $s$ and $R$ for all finite regions $R$.  Consider $J_{t-1}^{(s)}[R]$ for a finite region $R$.

By definition~\eqref{eq:infinitediff-formal},
$$
J_{t-1}^{(s)}[R]
=
\left(
\frac{
\sum_{i \in \kappa(R)}
    U_i\bigl(W_i \otimes \Phi(J_t^{(s)}[R_i] \mid y_i^{(s)})\bigr)
}{
    \sum_{j \in \kappa(R)} U_j(W_j)
}
\right)[R]
$$
For each $i \in \kappa(R)$, the region $R_i$ is finite, so by the inductive hypothesis $J_t^{(s)}[R_i]$ is uniquely determined by $s$ and $R_i$.  The model $\Phi$ and weight maps $W_i$ are deterministic. Therefore each term
$$
U_i\bigl(W_i \otimes \Phi(J_t^{(s)}[R_i] \mid y_i^{(s)})\bigr)
$$
is uniquely determined by $s$, and hence the finite sums in the numerator and denominator are uniquely determined.  Thus $J_{t-1}^{(s)}[R]$ is uniquely determined by $s$ and $R$.

By induction, the claim holds for all $t$.
\end{proof}

Lemma~\ref{lem:deterministic-infinitediffusion} shows that InfiniteDiffusion has a well-defined deterministic output for a fixed seed.  We now show that the lazy query algorithm is consistent with this definition, and is therefore also seed-consistent.

\begin{lemma}[Algorithm Consistency]
\label{lem:algorithm-consistency}
Fix $s \in \mathcal{X}$ and a timestep $t \in \{0,\dots,T-1\}$.  Then immediately before any query $J_t[R]$ with Algorithm~\ref{alg:infinite_diffusion_step_appdx}, the pair $(A_t, B_t)$ satisfy
$$
\begin{aligned}
A_t
&=
\sum_{i \in P_t} U_i\bigl(
  W_i \otimes \Phi(J_{t+1}^{(s)}[R_i] \mid y_i^{(s)})\bigr),\\
B_t
&=
\sum_{i \in P_t} U_i(W_i).
\end{aligned}
$$
Furthermore, after performing one iteration of Algorithm ~\ref{alg:infinite_diffusion_step_appdx}, the updated state satisfies the same form with $P_t$ replaced by $P_t \cup \kappa(R)$.
\end{lemma}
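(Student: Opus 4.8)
The plan is to prove both displayed identities by induction on the sequence of queries executed at level $t-1$, wrapped inside a backward induction on $t$. The outer induction maintains the assertion that the recursive query routine, when asked for $J_t[R']$, returns exactly the canonical tensor $J_t^{(s)}[R']$ defined by~\eqref{eq:infinitediff-formal}; its base case $t=T$ is immediate, since $J_T^{(s)}$ is supplied directly by the seed through $G$ and there is nothing to recurse on. Granting this hypothesis at level $t$, every occurrence of $J_t^{(s)}[R_i]$ inside the level-$(t-1)$ loop is genuinely the value the recursion produces, so $\Phi(J_t^{(s)}[R_i]\mid y_i^{(s)})$ may be treated as a fixed deterministic tensor. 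Assumption~1 keeps this well-posed: each loop ranges over the finite set $\kappa(R)\setminus P_{t-1}$, so the recursion tree is finite and terminates after finitely many $\Phi$-evaluations.

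For the inner induction the first point is that $A_{t-1}$, $B_{t-1}$, and $P_{t-1}$ are mutated \emph{only} inside the loop body of a level-$(t-1)$ query: the recursive sub-calls $J_t[R_i]$ touch $A_t$, $B_t$, $P_t$ and deeper state but never the level-$(t-1)$ tensors. It therefore suffices to order all level-$(t-1)$ queries globally as $q_1, q_2, \dots$ and to describe how one loop carries the state from just before $q_k$ to just before $q_{k+1}$. The base case is the initialization $A_{t-1}=\mathbf 0$, $B_{t-1}=\mathbf 0$, $P_{t-1}=\emptyset$, for which the claimed sums are empty and hence zero. For the step, assume the invariant holds before $q_k$, a query on region $R$. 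The one elementary identity needed is that the in-place write $A_{t-1}[R_i]\gets A_{t-1}[R_i]+W_i\otimes\Phi(J_t^{(s)}[R_i]\mid y_i^{(s)})$, read as an operation on the whole infinite image, equals $A_{t-1}\gets A_{t-1}+U_i\bigl(W_i\otimes\Phi(J_t^{(s)}[R_i]\mid y_i^{(s)})\bigr)$, because $U_i$ deposits its argument on $R_i$ and leaves all other coordinates untouched; the same holds for $B_{t-1}$ with $U_i(W_i)$. Since the loop processes each $i\in\kappa(R)\setminus P_{t-1}$ exactly once and addition of embedded tensors is commutative, after the loop $A_{t-1}$ equals the inductive sum over $P_{t-1}$ plus the sum over $\kappa(R)\setminus P_{t-1}$, which — since these index sets are disjoint — collapses to the single sum over $P_{t-1}\cup\kappa(R)$; likewise $B_{t-1}=\sum_{i\in P_{t-1}\cup\kappa(R)}U_i(W_i)$. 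The final line $P\gets P_{t-1}\cup\kappa(R)$ updates the processed set accordingly, which is exactly the post-iteration state claimed in the lemma and is simultaneously the invariant in the state preceding $q_{k+1}$, closing the inner induction.

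To finish the outer induction, one reads off the return value: with the invariant in force and $\kappa(R)\subseteq P_{t-1}\cup\kappa(R)$, every window in $P_{t-1}\setminus\kappa(R)$ has support disjoint from $R$ and contributes zero to both $A_{t-1}[R]$ and $B_{t-1}[R]$, so $A_{t-1}[R]/B_{t-1}[R]$ agrees on $R$ with the ratio in~\eqref{eq:infinitediff-formal} (under the same division-by-zero convention), i.e.\ with $J_{t-1}^{(s)}[R]$; this re-establishes the outer hypothesis at level $t-1$. The main obstacle is precisely this entanglement of the two inductions: one must be sure that the tensors $J_t^{(s)}[R_i]$ named in the invariant are what the recursion actually computes before the loop consumes them, and that the recursion bottoms out. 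Both follow from the finiteness of the horizon $T$ together with Assumption~1, so the difficulty is organizational rather than analytic. A secondary check is that queries interleaved at other levels cannot perturb $(A_{t-1},B_{t-1},P_{t-1})$, which is again immediate from the local, in-place form of the updates.
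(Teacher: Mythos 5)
Your proof is correct, and its core — the induction over the sequence of level-$(t-1)$ queries with the invariant that $(A_{t-1},B_{t-1})$ equal the partial sums over $P_{t-1}$, the observation that the in-place write on $R_i$ equals adding $U_i(\cdot)$ to the whole image, and the collapse of the disjoint sums into one over $P_{t-1}\cup\kappa(R)$ — is exactly the paper's argument for this lemma. Where you differ is in organization: the paper proves only the state invariant here, silently treating the tensors fed to $\Phi$ as the canonical $J_t^{(s)}[R_i]$ (justified by the deterministic definition in Lemma~\ref{lem:deterministic-infinitediffusion}), and defers the statement that a query returns $J_{t-1}^{(s)}[R]$ to a separate Lemma~\ref{lem:single-query-correctness}. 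You instead wrap the query-sequence induction inside a backward induction on $t$, so that the correctness of the recursive level-$t$ sub-calls is established before the level-$(t-1)$ invariant is used, and you fold the return-value argument (windows outside $\kappa(R)$ contribute zero on $R$) into the same induction. This buys a more self-contained and slightly more careful treatment of the cross-level recursion and of interleaved queries, at the cost of proving more than the lemma as stated; the paper's modular split keeps each lemma shorter but leaves the dependence on level-$t$ correctness implicit. Either way, no step of your argument fails, and your inner induction matches the paper's proof essentially line for line.
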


\begin{proof}
We prove this by induction over an arbitrary sequence of queries. At initialization, $A_t = 0$, $B_t = 0$, and $P_t = \emptyset$, so the claim is trivially true.

Now assume that before any query $J_t[R]$ we have
$$
A_t = \sum_{i \in P_t} U_i(V_i), \quad
B_t = \sum_{i \in P_t} U_i(W_i),
$$
where we write $V_i$ as shorthand for $W_i \otimes \Phi(J_{t+1}^{(s)}[R_i] \mid y_i^{(s)})$.

During the query, for each $i \in \kappa(R) \setminus P_t$ the
algorithm performs
$$
A_t[R_i] \gets A_t[R_i] + V_i,\qquad
B_t[R_i] \gets B_t[R_i] + W_i
$$
and does not modify any pixels outside $R_i$.  Equivalently, this is
$$
A_t \gets A_t + U_i(V_i), \qquad
B_t \gets B_t + U_i(W_i).
$$
After processing all such windows we obtain
$$
A_t
=
\sum_{i \in P_t} U_i(V_i)
+
\sum_{i \in \kappa(R) \setminus P_t} U_i(V_i)
=
\sum_{i \in P_t \cup \kappa(R)} U_i(V_i),
$$
and similarly
$$
B_t
=
\sum_{i \in P_t \cup \kappa(R)} U_i(W_i).
$$
Finally, the algorithm updates $P_t \gets P_t \cup \kappa(R)$, so  the updated state satisfies the same form with $P_t$ replaced by $P_t \cup \kappa(R)$.

By induction, the claim holds for all queries $J_t[R]$.
\end{proof}

We now show that Algorithm ~\ref{alg:infinite_diffusion_step_appdx} is consistent with the formal definition in ~\ref{eq:infinitediff-formal}.

\begin{lemma}[Correctness of a single query]
\label{lem:single-query-correctness}
Fix $s$, $t$, and a finite region $R$.  After any query $J_t[R]$ following Algorithm ~\ref{alg:infinite_diffusion_step_appdx}, we have
$$
J_t[R] = A_t[R] / B_t[R]
=
J_t^{(s)}[R].
$$
\end{lemma}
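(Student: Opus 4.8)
The plan is to prove this by backward induction on $t$, using Lemma~\ref{lem:algorithm-consistency} as the workhorse and~\eqref{eq:infinitediff-formal} as the target identity. An induction on $t$ (rather than a one-shot argument) is necessary because Algorithm~\ref{alg:infinite_diffusion_step}, when it meets an unprocessed window $i \in \kappa(R)\setminus P_{t-1}$, does not have $J_t^{(s)}[R_i]$ on hand as a primitive; it obtains it by recursively issuing a query at level $t$. So the inductive hypothesis I would carry is: \emph{every} finite query $J_t[R']$ answered by the algorithm returns exactly $J_t^{(s)}[R']$. The base case $t=T$ is immediate, since $J_T[R']$ is read directly off the seed field $G$, which by definition equals $J_T^{(s)}[R']$.

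For the inductive step, fix a query $J_{t-1}[R]$. By the level-$t$ hypothesis, each recursive sub-query needed to process a window $i \in \kappa(R)\setminus P_{t-1}$ returns the exact value $J_t^{(s)}[R_i]$, so the tensor accumulated for that window is precisely $V_i = W_i \otimes \Phi(J_t^{(s)}[R_i] \mid y_i^{(s)})$ — the same shorthand used in Lemma~\ref{lem:algorithm-consistency}. Invoking the ``after one iteration'' part of that lemma, the state immediately after the query satisfies $A_{t-1} = \sum_{i \in P'} U_i(V_i)$ and $B_{t-1} = \sum_{i \in P'} U_i(W_i)$, where $P' = P_{t-1} \cup \kappa(R)$ is the processed set after the query, and $P_{t-1}$ is whatever it happened to be beforehand (possibly containing windows cached by earlier, unrelated queries).

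The one genuine subtlety is that $P'$ may be strictly larger than $\kappa(R)$, so $A_{t-1}$ and $B_{t-1}$ \emph{globally} carry more terms than the single-query sums in~\eqref{eq:infinitediff-formal}. I would dispatch this by restricting to $R$: since $U_i(x)$ is supported on $R_i$, for every $i \notin \kappa(R)$ we have $R_i \cap R = \emptyset$, hence $U_i(V_i)[R] = 0$ and $U_i(W_i)[R] = 0$. Therefore $A_{t-1}[R] = \bigl(\sum_{i \in \kappa(R)} U_i(V_i)\bigr)[R]$ and $B_{t-1}[R] = \bigl(\sum_{i \in \kappa(R)} U_i(W_i)\bigr)[R]$, independently of any stale windows in $P_{t-1}$. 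Dividing elementwise, with the shared $0/0 = 0$ convention, $A_{t-1}[R]/B_{t-1}[R]$ is exactly the right-hand side of~\eqref{eq:infinitediff-formal} evaluated at $J_t = J_t^{(s)}$ and $y_i = y_i^{(s)}$, i.e.\ $J_{t-1}^{(s)}[R]$. This closes the induction, and since the value depends only on $s$ and $R$ (by Lemma~\ref{lem:deterministic-infinitediffusion}), the algorithm inherits seed consistency as well.

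I expect the main obstacle to be purely organizational rather than mathematical: one must be careful that the outer induction variable is $t$ so that the recursive sub-queries are covered by the hypothesis; that Lemma~\ref{lem:algorithm-consistency} is applied to the \emph{post}-query state; and that the disjoint-support cancellation is stated at the level of the restriction $[\,\cdot\,][R]$ and not globally, since after caching $A_{t-1}$ and $B_{t-1}$ really do differ from the naive per-query sums.
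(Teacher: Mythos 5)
Your proposal is correct and follows essentially the same route as the paper: invoke Lemma~\ref{lem:algorithm-consistency} for the post-query state, kill the contributions of windows outside $\kappa(R)$ by the disjoint-support restriction to $R$, and compare with~\eqref{eq:infinitediff-formal} to identify $A_{t-1}[R]/B_{t-1}[R]$ with $J_{t-1}^{(s)}[R]$. The only difference is that you wrap this in an explicit backward induction on $t$ to justify that recursive sub-queries return $J_t^{(s)}[R_i]$, a step the paper handles implicitly by stating Lemma~\ref{lem:algorithm-consistency} directly in terms of $J_t^{(s)}$; your version is a slightly more careful rendering of the same argument.
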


\begin{proof}
By Lemma~\ref{lem:algorithm-consistency}, after the query finishes we have
$$
A_t
=
\sum_{i \in P'_t} U_i(V_i), \qquad
B_t
=
\sum_{i \in P'_t} U_i(W_i),
$$
for some processed set $P'_t \supseteq \kappa(R)$.

Now restrict to the region $R$.  Any window $i \notin \kappa(R)$ has $R_i \cap R = \emptyset$, so
$$
U_i(V_i)[R] = 0, \qquad U_i(W_i)[R] = 0.
$$
Therefore,
$$
A_t[R]
=
\left(
\sum_{i \in P'_t} U_i(V_i)
\right)[R]
=
\left(
\sum_{i \in \kappa(R)} U_i(V_i)
\right)[R],
$$
and similarly
$$
B_t[R]
=
\left(
\sum_{i \in \kappa(R)} U_i(W_i)
\right)[R].
$$
Comparing with the definition~\eqref{eq:infinitediff-formal}, we see
that
$$
\frac{A_t[R]}{B_t[R]}
=
\Psi(J_t^{(s)} \mid z^{(s)})[R]
=
J_t^{(s)}[R],
$$
which is the claim.
\end{proof}

We can now state the main seed consistency result.

\begin{theorem}[Seed consistency of the algorithm]
\label{theorem:seed-consistency}
Under Assumption~1, the InfiniteDiffusion lazy query algorithm is seed-consistent.
\end{theorem}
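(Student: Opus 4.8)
The plan is to assemble the theorem directly from the three lemmas already established, treating the recursion over diffusion levels as an outer backward induction. First I would fix an arbitrary seed $s \in \mathcal{X}$ and recall from Lemma~\ref{lem:deterministic-infinitediffusion} that the canonical trajectory $(J_t^{(s)})_{t=0}^T$ defined by the recursive update~\eqref{eq:infinitediff-formal} has the property that each $J_t^{(s)}[R]$ is a function of $s$ and $R$ alone. It therefore suffices to show that every value the lazy algorithm ever returns for a query $J_{t-1}[R]$ equals this canonical value $J_{t-1}^{(s)}[R]$, no matter which queries preceded it or in what order.

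Second, I would make the recursion explicit and argue it is well-founded. A query at level $t-1$ may trigger queries at level $t$ (to supply the inputs $J_t[R_i]$ for windows $i \in \kappa(R)\setminus P_{t-1}$), which trigger queries at level $t+1$, and so on; since $T$ is finite and level $T$ bottoms out at $J_T^{(s)} = G(s,\cdot)$ with no further recursion, and since Assumption~1 makes each $\kappa(R)$ finite, every query spawns only a finite, bounded-depth tree of sub-queries and hence terminates. The same observation shows the algorithm maintains a separate cache triple $(A_t, B_t, P_t)$ at each level, so that queries issued at distinct levels never mutate one another's state.

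Third, I would run the backward induction on $t$. For the base case $t = T$, the algorithm returns $G(s,\cdot)$ restricted to $R$ with no computation, which is $J_T^{(s)}[R]$ and depends only on $s$ and $R$. For the inductive step, assume every query at level $t$ returns $J_t^{(s)}[\cdot]$; then Lemma~\ref{lem:algorithm-consistency} guarantees that throughout any sequence of level-$(t-1)$ queries the cache has the form $A_{t-1} = \sum_{i \in P_{t-1}} U_i(W_i \otimes \Phi(J_t^{(s)}[R_i] \mid y_i^{(s)}))$ and $B_{t-1} = \sum_{i \in P_{t-1}} U_i(W_i)$ for the accumulated processed set $P_{t-1}$, and Lemma~\ref{lem:single-query-correctness} then yields $A_{t-1}[R]/B_{t-1}[R] = J_{t-1}^{(s)}[R]$ upon completion of the query. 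This closes the induction.

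Finally I would conclude: every query $J_t[R]$ issued to the algorithm returns $J_t^{(s)}[R]$, a quantity depending only on $s$ and $R$, so repeated or reordered queries return identical results, which is precisely the definition of a seed-consistent generative process; taking $\mathcal{X}$ as the seed set completes the proof. The only genuinely delicate point is the interaction of the two inductions — the backward induction over levels nested inside the induction over an arbitrary, possibly interleaved query sequence within each level — so most of the care goes into confirming that per-level cache state is untouched by other levels and that the recursive chain of sub-queries is finite; once that bookkeeping is pinned down, the three lemmas do all of the substantive work.
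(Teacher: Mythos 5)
Your proposal is correct and follows essentially the same route as the paper: it combines Lemma~\ref{lem:deterministic-infinitediffusion}, Lemma~\ref{lem:algorithm-consistency}, and Lemma~\ref{lem:single-query-correctness} to show the lazy algorithm reproduces the canonical trajectory $J_t^{(s)}$ and is therefore seed-consistent. The only difference is that you make explicit the backward induction over levels and the finiteness of the recursion (which the paper leaves implicit by stating Lemma~\ref{lem:algorithm-consistency} directly in terms of $J_t^{(s)}$), a useful clarification but not a different argument.
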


\begin{proof}
By Lemma~\ref{lem:deterministic-infinitediffusion}, for each $s$, $t$, and finite $R$, $J_t^{(s)}[R]$ is uniquely determined by $(s,R)$, so the process defined by \eqref{eq:infinitediff-formal} is seed-consistent. By Lemma ~\ref{lem:single-query-correctness}, the algorithm is equivalent, and therefore also seed-consistent.
\end{proof}

Theorem~\ref{theorem:seed-consistency} formalizes the informal argument in the main text: once a seed is fixed, the entire trajectory $(J_t^{(s)})_{t=0}^T$ is fully determined, and the lazy querying and caching scheme merely memoizes a deterministic computation without affecting its outcome.  In particular, querying regions in different orders or repeating a query for the same region cannot change the result.

\subsection{Constant-Time Random Access}

We now formalize the claim that accessing the value on any window-sized region has constant computational cost, independent of the absolute location in the infinite domain. We measure cost in terms of the number of evaluations of $\Phi$, which dominate runtime in practice.  Let $C_t(R)$ denote the worst-case number of $\Phi$-calls required by the lazy algorithm to answer a query for $J_t[R]$, starting from an empty cache at all timesteps.

\medskip
\noindent
\textbf{Assumption 2 (Uniform overlap bound).}
There exists a finite constant $M$ such that $|\kappa(R_i)| \leq M$ for every window region $R_i$. In words, each window region overlaps the regions of at most $M$ windows.

We treat the total number of diffusion steps $T$ and the window shape as fixed hyperparameters of the model.

\begin{lemma}[Recursive cost bound]
\label{lem:recursive-cost}
Under Assumption~2, for any timestep $t$ and any window index
$i \in S$, the cost $C_t(R_i)$ satisfies
$$
C_t(R_i) \;\leq\; M(1+\sup_{j \in S} C_{t+1}(R_j))
\quad \text{for } t < T,
$$
with base case $C_T(R_i) = 0$ for all $i$.
\end{lemma}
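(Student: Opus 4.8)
The plan is to read both claims straight off the structure of Algorithm~\ref{alg:infinite_diffusion_step}; no induction is needed within this lemma, since the recursion it asserts is precisely what later gets unrolled to yield the constant-time bound. For the base case $t = T$, I would note that $J_T = J_T^{(s)}$ is furnished directly by the seed map $G$, so any query at level $T$ merely reads stored noise and invokes $\Phi$ zero times; hence $C_T(R_i) = 0$ for all $i \in S$.

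For the recursive bound at a fixed $t < T$ and window index $i$, I would trace one worst-case execution of the query routine for $J_t[R_i]$ with all caches empty. Because $P_t = \emptyset$ initially, the loop ranges over all of $\kappa(R_i)$, and Assumption~2 applied to the window region $R_i$ gives $|\kappa(R_i)| \le M$. Processing a single window $j \in \kappa(R_i)$ requires obtaining $J_{t+1}[R_j]$, which is answered by a recursive query at level $t+1$ on the window region $R_j$, followed by exactly one evaluation of $\Phi$ on that window. Since $R_j$ is itself a window region, the recursive part costs at most $C_{t+1}(R_j) \le \sup_{k \in S} C_{t+1}(R_k)$. Summing the per-window bound $1 + \sup_{k \in S} C_{t+1}(R_k)$ over the at most $M$ windows in $\kappa(R_i)$ gives
$$
C_t(R_i) \;\le\; M\Bigl(1 + \sup_{j \in S} C_{t+1}(R_j)\Bigr),
$$
which is the claimed inequality.

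The one point I would be careful to justify is the cache bookkeeping. When the loop processes several windows $j, j' \in \kappa(R_i)$, their sub-queries at level $t+1$ share a single, progressively filled cache, so the genuine total can be strictly below the naive sum $\sum_{j \in \kappa(R_i)} \bigl(1 + C_{t+1}(R_j)\bigr)$. This is harmless for an upper bound: I would record the monotonicity observation that the incremental number of $\Phi$-calls needed to answer a query from any cache state never exceeds the number needed from an empty cache, since already-processed windows are simply skipped by the loop over $\kappa(R)\setminus P$. Hence replacing each sub-query's cost by its empty-cache value $C_{t+1}(R_j)$ can only over-count, which is exactly what the inequality requires. I would also remark in passing that $C_t$ depends only on the window layout, weight maps, and $\Phi$, and not on absolute position in $\mathbb{Z}^d$, so that $\sup_{j \in S} C_{t+1}(R_j)$ is the correct quantity to recurse on when this lemma is later iterated down to $t = T$.
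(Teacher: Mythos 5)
Your proof is correct and follows essentially the same route as the paper: base case $C_T(R_i)=0$ since $J_T$ comes directly from the seed, then bounding the query for $J_t[R_i]$ by at most $M$ windows (Assumption~2), each costing one $\Phi$-call plus a level-$(t+1)$ sub-query bounded by $\sup_{j\in S} C_{t+1}(R_j)$. Your extra remark on cache sharing only ever reducing the count is a sensible refinement the paper leaves implicit, but it does not change the argument.
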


\begin{proof}
Consider a query for $J_t[R_i]$ at some $t < T$.  By the update rule, computing this query requires evaluating $\Phi(J_{t+1}[R_k], y_k)$ for every $k \in \kappa(R_i)$.

Each such evaluation requires one call to $\Phi$ at level $t$, and in order to provide the input $J_{t+1}[R_k]$, the algorithm may in turn need to perform some number of step-$(t+1)$ queries.  That is, for each $k \in \kappa(R_i)$ we incur cost $1 + C_{t+1}(R_k)$.

The cardinality of $\kappa(R_i)$ is at most $M$ by Assumption~2.  Thus
$$
C_t(R_i) \leq \sum_{k \in \kappa(R_i)} (1+C_{t+1}(R_k))
\leq  M(1 + \sup_{j\in S} C_{t+1}(R_j))
$$
At the top level $t = T$, no further calls to $\Phi$ are required because $J_T$ is given directly by the noise generator, hence $C_T(R_i) = 0$.
\end{proof}

\begin{theorem}[Uniform bound on cost for window regions]
\label{theorem:uniform-cost}
Under Assumption~2, there exists a constant $K$ depending only on $T$
and $M$ such that for all $t$ and all window indices $i$,
$$
C_t(R_i) \leq K.
$$
Equivalently, a
\end{theorem}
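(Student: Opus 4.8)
The plan is to collapse the per-window bound of Lemma~\ref{lem:recursive-cost} into a single scalar backward recurrence and solve it in closed form. I would define $c_t \coloneqq \sup_{i \in S} C_t(R_i)$ for each $t \in \{0,\dots,T\}$. The base case of Lemma~\ref{lem:recursive-cost} gives $c_T = 0$, and for $t < T$, taking the supremum over $i$ of the inequality $C_t(R_i) \leq M(1 + \sup_{j \in S} C_{t+1}(R_j))$ yields the scalar recurrence $c_t \leq M(1 + c_{t+1})$.

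First I would check that every $c_t$ is finite, by backward induction on $t$: $c_T = 0 < \infty$, and if $c_{t+1} < \infty$ then $c_t \leq M(1+c_{t+1}) < \infty$. This step is worth isolating because the supremum defining $c_t$ ranges over the countably infinite index set $S$, and without Assumption~2 it could be infinite; the uniform overlap bound $M$ is precisely what licenses the reduction to the scalar recurrence and keeps it finite at every level.

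Next I would unroll the recurrence. Iterating $c_t \leq M(1 + c_{t+1})$ from $t$ up to $T$ and using $c_T = 0$ gives, by a second (quantitative) backward induction, $c_t \leq \sum_{k=1}^{T-t} M^k$. Hence $c_t \leq \sum_{k=1}^{T} M^k$ for every $t$, a quantity depending only on $T$ and $M$ (it equals $\tfrac{M(M^T - 1)}{M-1}$ when $M \geq 2$, equals $T$ when $M = 1$, and is $0$ when $M = 0$). Setting $K \coloneqq \sum_{k=1}^{T} M^k$ and recalling that $C_t(R_i) \leq c_t$ by definition of the supremum yields $C_t(R_i) \leq K$ for all $t$ and $i$, which is the claim.

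I do not expect a substantive obstacle: the argument is a routine geometric-series solution of a linear recurrence. The only points requiring care are getting the direction of induction right — backward from $t = T$, where the recursion bottoms out because $J_T$ is supplied directly by the noise generator — and keeping the finiteness claim logically prior to the quantitative bound, since the latter is vacuous until the former is established.
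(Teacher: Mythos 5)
Your proposal is correct and follows essentially the same route as the paper: define $c_t = \sup_{i \in S} C_t(R_i)$, apply Lemma~\ref{lem:recursive-cost} to obtain the scalar recurrence $c_t \le M(1+c_{t+1})$ with $c_T = 0$, unwind it to $c_t \le M + M^2 + \cdots + M^{T-t}$, and take $K = \sum_{k=1}^{T} M^k$. Your explicit finiteness check and closed-form evaluation of the geometric sum are minor additions the paper leaves implicit, but the argument is the same.
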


\begin{proof}
Let
$$
c_t \coloneqq \sup_{i \in S} C_t(R_i).
$$
By Lemma~\ref{lem:recursive-cost},
$$
c_t \leq M(1+c_{t+1}), \quad t < T,
$$
with $c_T = 0$.

Unwinding this recurrence yields
$$
c_{T-1} \leq M(1+0) = M, \quad
c_{T-2} \leq M(1+c_{T-1}) \leq M + Mc_{T-1},
$$
and in general
$$
c_t
\leq M + M^2 + \dots + M^{T-t}.
$$
For fixed $T$ and $M$, the right-hand side is a finite constant independent of $i$ and the absolute location of $R_i$ in the plane.  Taking
$$
K \coloneqq M + M^2 + \dots + M^{T}
$$
gives the claimed uniform bound.
\end{proof}

\begin{corollary}[Uniform bound on cost for any fixed-size region]
\label{theorem:universal-uniform-cost}
Suppose that for a region $R$ of a given size, $|\kappa(R)|$ is bounded by a constant $M$ dependent on that size. Then accessing $J_t[R]$ for any fixed-size region is $O(1)$.
\end{corollary}

\begin{proof}
Let $R$ be any region of a fixed size. By assumption, $R$ has at most $M$ overlapping windows, so $C_t(R) \leq MK = O(1)$.
\end{proof}

For any window index $i$, Theorem~\ref{theorem:uniform-cost} gives a uniform bound $C_0(R_i) \le K$ on the number of calls to $\Phi$ needed to answer a query for $J_0[R_i]$, starting from an empty cache. This bound depends only on $T$ and $M$, which are constants, and not on $i$ or any regions previously processed. In standard algorithmic notation, this means that the time complexity of a query $J_0[R]$ is $O(1)$, when $R$ is a window region. Queries typically cost far less than $K$, but this is not required for the asymptotic guarantee.

Combined with seed consistency, Theorem ~\ref{theorem:uniform-cost} formally justifies the claim that users can jump to arbitrary locations in the infinite world and query any window region efficiently, without needing to generate intermediate tiles and without affecting the content of the regions.

\subsection{Parallelization}

Finally, we formalize the statement that InfiniteDiffusion admits parallel evaluation of window updates at any fixed timestep.

Recall that, at fixed $t$ and seed $s$, the numerator and denominator images for the update $J_{t+1}^{(s)} \mapsto J_t^{(s)}$ are
\begin{align*}
A_t^{(s)} &=
\sum_{i \in S}
  U_i\bigl( W_i \otimes \Phi(J_{t+1}^{(s)}[R_i] \mid y_i^{(s)})\bigr),\\
B_t^{(s)} &=
\sum_{i \in S}
  U_i(W_i),
\end{align*}
The updated image is then
$$
J_t^{(s)} = A_t^{(s)} / B_t^{(s)}.
$$

We treat calls to $\Phi$ as the only expensive operation, and all other operations (additions, multiplications, divisions and updates to $A_t,B_t$) as free.  In this work, a computation is \emph{parallelizable} if there exists an algorithm in which all calls to $\Phi$ can be partitioned into a constant number of \emph{rounds} so that within each round the calls are independent and can be executed simultaneously.

We first note that, at a fixed timestep, once the inputs $J_{t+1}^{(s)}[R_i]$ are known, all required model evaluations can be done in parallel.

\begin{lemma}[Parallel window updates at a fixed timestep]
\label{lem:single-step-parallel}
Fix a seed $s$, a timestep $t \in \{0,\dots,T-1\}$, and a finite set of window indices $I \subseteq S$.  Suppose that for every $i \in I$ the infinite image $J_{t+1}^{(s)}[R_i]$ is available for free.  Then the evaluations
$$
\Phi(J_{t+1}^{(s)}[R_i] \mid y_i^{(s)}), \qquad i \in I,
$$
can be performed in a single parallel round, and all corresponding contributions to $A_t^{(s)}$ and $B_t^{(s)}$ on $\bigcup_{i \in I} R_i$ can be formed without any further calls to $\Phi$.
\end{lemma}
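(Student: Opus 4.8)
The plan is to argue directly from the dependency structure of the update rule, treating calls to $\Phi$ as the nodes of a computation graph and showing that the $|I|$ calls in question form an antichain. The key observation is that $\Phi(J_t^{(s)}[R_i] \mid y_i^{(s)})$ depends on exactly two pieces of data: the restricted tensor $J_t^{(s)}[R_i]$, which is available for free by hypothesis, and the conditioning vector $y_i^{(s)} = \Lambda(z^{(s)}, i)$, which by the preliminaries is a deterministic function of the seed $s$ and the index $i$ alone (through $z^{(s)} = H(s)$ and $\Lambda$). Neither input references $\Phi(J_t^{(s)}[R_j] \mid y_j^{(s)})$ for any $j \neq i$, so there is no data dependency among the calls indexed by $I$. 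Since $I$ is finite, all of them can therefore be scheduled in a single parallel round.

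First I would make the dependency claim precise by enumerating the inputs of each $\Phi$-call and confirming, via the definitions of $G$, $H$, and $\Lambda$, that they are fixed once $s$ is fixed and $J_t^{(s)}[R_i]$ is given; this is the substantive content and it is immediate from the setup. Second, I would observe that the remaining work — forming $W_i \otimes \Phi(\,\cdot\,)$, applying the embedding operator $U_i$, and accumulating the results into $A_{t-1}^{(s)}$ and $B_{t-1}^{(s)}$ — consists entirely of Hadamard products, the fixed linear maps $U_i$, and additions, all of which are free under the stated cost model, so no further call to $\Phi$ is triggered. Third, since addition is associative and commutative, the order in which the $|I|$ contributions $U_i(W_i \otimes \Phi(\,\cdot\,))$ and $U_i(W_i)$ are accumulated is irrelevant, so the post-round state is well-defined independently of the scheduling within the round; and because $\bigcup_{i \in I} R_i$ is a finite union of finite regions, only coordinates in that set are modified, matching the claim.

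The only mild subtlety — and it is barely an obstacle — is the second point: one must check that nothing in the accumulation phase covertly requires another $\Phi$-evaluation. This reduces to noting that "$J_t^{(s)}[R_i]$ available for free'' is precisely the hypothesis supplying every window input, and that $W_i$ and $U_i$ are fixed artifacts of the window layout rather than model outputs. With those spelled out the lemma follows, and it then serves as the single-timestep base case for the recursive parallelization argument across levels $t = T, T-1, \dots, 1$.
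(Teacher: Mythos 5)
Your proposal is correct and matches the paper's proof in essence: both argue that each $\Phi$-call depends only on $J_t^{(s)}[R_i]$ and $y_i^{(s)}$, which are available, so the calls are mutually independent and fit in one parallel round, after which the accumulation into $A_{t-1}^{(s)}$ and $B_{t-1}^{(s)}$ is pure elementwise arithmetic requiring no further model evaluations. Your added remarks on $y_i^{(s)}$ being seed-determined and on order-independence of the accumulation are fine elaborations but not a different route.
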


\begin{proof}
For each $i \in I$, the input to $\Phi$ depends only on $J_{t+1}^{(s)}[R_i]$ and $y_i^{(s)}$, which are both already available.  Thus the evaluations $\Phi(J_{t+1}^{(s)}[R_i] \mid y_i^{(s)})$ are mutually independent and can be carried out simultaneously.

Once these outputs are known, the updates
\begin{align*}
A_t^{(s)} &\gets A_t^{(s)} + U_i\bigl(W_i \otimes \Phi(J_{t+1}^{(s)}[R_i] \mid y_i^{(s)})\bigr) \\
B_t^{(s)} &\gets B_t^{(s)} + U_i(W_i)
\end{align*}
and the division $J_t^{(s)} = A_t^{(s)} / B_t^{(s)}$ involve only element-wise arithmetic and therefore require no additional model evaluations.  Hence all work associated with the windows in $I$ can be completed using a single parallel round of calls to $\Phi$.
\end{proof}

We now show that answering any finite collection of region queries at any timestep admits a parallel schedule of model evaluations.

\begin{theorem}[Parallelization of finite query sets]
\label{theorem:parallel-queries}
Fix a seed $s$, a timestep $t \in \{0,\dots,T\}$, and a finite collection of regions
$$
\mathcal{R} = \{R^{(1)},\dots,R^{(m)}\}.
$$
Consider the computation that evaluates $J_t^{(s)}[R^{(k)}]$ for all $k=1,\dots,m$ using the recursive update~\eqref{eq:infinitediff-formal}, starting from $J_T^{(s)}$ and without caching.  Then all calls to $\Phi$ required by this computation can be arranged into at most $T-t$ parallel rounds.
\end{theorem}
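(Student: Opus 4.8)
The plan is to prove Theorem~\ref{theorem:parallel-queries} by induction on $T-t$, using Lemma~\ref{lem:single-step-parallel} as the engine of the inductive step. The base case $t=T$ is immediate: $J_T^{(s)}$ is delivered directly by the seed, so answering every query $J_T^{(s)}[R^{(k)}]$ requires no calls to $\Phi$, i.e.\ an empty schedule of $0=T-t$ rounds.

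For the inductive step, fix $t<T$ and assume the statement at timestep $t+1$. Expanding each query $J_t^{(s)}[R^{(k)}]$ via \eqref{eq:infinitediff-formal} shows it is an elementwise function of the tensors $\bigl\{\Phi(J_{t+1}^{(s)}[R_i]\mid y_i^{(s)})\bigr\}_{i\in\kappa(R^{(k)})}$ together with the weight maps. Let $I\coloneqq\bigcup_{k=1}^m\kappa(R^{(k)})$; this is finite by Assumption~1, being a finite union of finite sets. Thus the only model evaluations at the output level $t$ are those indexed by $I$, and the inputs they require are exactly the tensors in the finite region collection $\{R_i:i\in I\}$ at timestep $t+1$. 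By the inductive hypothesis, the recursive sub-computation producing $J_{t+1}^{(s)}[R_i]$ for all $i\in I$ — which is literally the nested portion of the recursion for the $R^{(k)}$ — can be scheduled in at most $T-(t+1)$ parallel rounds.

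Once that sub-computation completes, every $J_{t+1}^{(s)}[R_i]$ with $i\in I$ is available, so Lemma~\ref{lem:single-step-parallel}, applied at timestep $t+1$ (legitimate since $t+1\in\{1,\dots,T\}$) with index set $I$, places all the evaluations $\Phi(J_{t+1}^{(s)}[R_i]\mid y_i^{(s)})$, $i\in I$, into one further parallel round and forms, for free, all contributions to $A_t^{(s)},B_t^{(s)}$ on $\bigcup_{i\in I}R_i$. Since $R^{(k)}\subseteq\bigcup_{i\in\kappa(R^{(k)})}R_i\subseteq\bigcup_{i\in I}R_i$, the values $J_t^{(s)}[R^{(k)}]=A_t^{(s)}[R^{(k)}]/B_t^{(s)}[R^{(k)}]$ follow by free elementwise division, exactly as in Lemma~\ref{lem:single-query-correctness} (windows outside $\kappa(R^{(k)})$ contribute zero on $R^{(k)}$). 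The full schedule therefore uses at most $(T-t-1)+1=T-t$ rounds, completing the induction.

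Two points deserve care rather than constituting a real obstacle. First, the induction is well-founded only because finiteness propagates: each $\kappa(\cdot)$ is finite by Assumption~1, so $I$, then $\bigcup_{i\in I}\kappa(R_i)$ one level deeper, and so on, all remain finite, which is precisely what Lemma~\ref{lem:single-step-parallel} needs at every level. Second, because the computation runs \emph{without} caching, a window $i$ lying in several $\kappa(R^{(k)})$ spawns duplicate copies of $\Phi(J_{t+1}^{(s)}[R_i]\mid y_i^{(s)})$ along different recursion branches; these duplicates have identical inputs, hence are mutually independent and share the same output level, so they may all be placed in the same round and do not inflate the bound. (Equivalently, one may note directly that assigning each $\Phi$-call its output level $\ell\in\{t,\dots,T-1\}$ layers the dependency DAG into $T-t$ rounds, since a level-$\ell$ call consumes only outputs of level-$(\ell+1)$ calls.) The main thing to get right is matching the timestep convention of Lemma~\ref{lem:single-step-parallel} to the ``$J_{t+1}\to J_t$'' direction used here; beyond that bookkeeping there is nothing substantive.
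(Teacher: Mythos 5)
Your proof is correct and follows essentially the same route as the paper's: backward induction on $t$, applying the inductive hypothesis to the finite collection of window regions $\{R_i : i \in I\}$ at timestep $t+1$ (your $I=\bigcup_k\kappa(R^{(k)})$ equals the paper's $\kappa(\bigcup_k R^{(k)})$) and then invoking Lemma~\ref{lem:single-step-parallel} for the single extra round, yielding $T-t$ rounds; your explicit handling of duplicate evaluations under no caching and of the timestep bookkeeping is if anything tidier than the paper's. The only nit is that $R^{(k)}\subseteq\bigcup_{i\in\kappa(R^{(k)})}R_i$ need not hold in general, but this is immaterial since pixels covered by no window require no $\Phi$-calls and are zero by the division-by-zero convention.
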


\begin{proof}
By Lemma~\ref{lem:deterministic-infinitediffusion}, once $s$ is fixed the values $J_u^{(s)}[R]$ are uniquely determined for all $u$ and all finite regions $R$.  In particular, the \emph{set} of model evaluations that appear in the recursive computation of $\{J_t^{(s)}[R^{(k)}]\}_{k=1}^m$ is fixed; only their order of execution is not.

We prove the claim by backward induction on $t$.

\emph{Base case ($t = T$).}  By definition, $J_T^{(s)}$ is given directly by the noise generator $G$ and does not require any calls to $\Phi$.  Thus any finite set of queries at $t = T$ is trivially parallelizable with zero rounds.

\emph{Inductive step.}  Fix $t < T$ and assume the statement holds for $t+1$.  Consider a finite collection $\mathcal{R} = \{R^{(1)},\dots,R^{(m)}\}$ of regions at time $t$.

Let
$$
R^\ast \coloneqq \bigcup_{k=1}^m R^{(k)}
$$
be the union of all queried regions at level $t$.  By Assumption~1, only finitely many windows intersect $R^\ast$, so the index set
$$
I_t \coloneqq \kappa(R^\ast) = \{i \in S : R_i \cap R^\ast \neq \emptyset\}
$$
is finite.  By~\eqref{eq:infinitediff-formal}, computing $J_t^{(s)}[R^{(k)}]$ for all $k$ requires knowing the regions $J_{t+1}^{(s)}[R_i]$ for all $i \in I_t$.

Each region $R_i$ is finite, and from the recursion defining $J_t^{(s)}$ we see that $J_t^{(s)}[R_i]$ itself is obtained from regions of the form $J_{t+1}^{(s)}[R_j]$ for windows $j$ that intersect $R_i$.  Let $\mathcal{R}_{t+1}$ denote the (finite) collection of all such window regions $R_j$ at timestep $t+1$ that are needed in this way.  By the induction hypothesis applied at level $t+1$ to the finite set $\mathcal{R}_{t+1}$, all model evaluations needed to compute $\{J_{t+1}^{(s)}[R_j] : R_j \in \mathcal{R}_{t+1}\}$ can be scheduled in at most $T-(t+1)$ parallel rounds.

Once all these regions are available, the corresponding values $J_t^{(s)}[R_i]$ for $i \in I_t$ are determined and can be formed without further calls to $\Phi$.  At this point, Lemma~\ref{lem:single-step-parallel} implies that all remaining evaluations
$$
\Phi(J_{t+1}^{(s)}[R_i] \mid y_i^{(s)}), \qquad i \in I_t,
$$
required to construct $A_t^{(s)}$ and $B_t^{(s)}$ on $R^\ast$ can be carried out in a single additional parallel round.  No more model evaluations are needed to extract $J_t^{(s)}[R^{(k)}]$ from these images.

Thus, the entire computation for the queries $\mathcal{R}$ at timestep $t$ can be performed using at most
$$
1 + (T-(t+1)) = T - t
$$
parallel rounds of calls to $\Phi$.  This completes the induction.
\end{proof}

Theorem~\ref{theorem:parallel-queries} shows that for any fixed seed $s$, timestep $t$ and finite collection of regions $\mathcal{R}$, the recursive computation of $\{J_t^{(s)}[R] : R \in \mathcal{R}\}$ admits a bounded-depth parallel schedule of diffusion model evaluations.  Combined with seed consistency, this formalizes the claim in the main text that all diffusion model evaluations required to serve any finite batch of queries $J_t^{(s)}[R]$ can be performed in parallel, up to the intrinsic sequential dependence across diffusion steps.

\section{The Infinite Tensor Framework}
\label{sec:infinite_tensor_appendix}

To support the computational requirements of InfiniteDiffusion, specifically the need for bounded-memory operations on unbounded domains, we developed \texttt{infinite-tensor}. This open-source Python library\footnote{Source code, documentation, and installation instructions are available at https://github.com/xandergos/infinite-tensor.} abstracts the management of sliding windows, caching, persistence, and dependency chaining, allowing the implementation of diffusion models to remain focused on mathematical operations rather than memory management.

\subsection{Core Abstractions}

The framework treats an infinite tensor $\mathcal{T}$ not as a stored array of data, but as a lazily evaluated, immutable object defined by a deterministic generator function $f$. For context, we typically store the accumulators $A_t$ and $B_t$ from Section 3.3 as one infinite tensor with an extra channel for the weights $B_t$.

\begin{itemize}
    \item \textbf{Infinite Tensors:} A tensor is defined by a shape $(S_1, \dots, S_d)$, where any dimension $S_i$ may be infinite (represented as \texttt{None}). The tensor is backed by a generator function $f$ rather than raw memory, and also declares a \texttt{dtype} and \texttt{device}.
    \item \textbf{Windows:} $f$ operates on fixed-size \textit{windows}. For a $k$ dimensional tensor, a window is defined by its size $(W_1, \dots, W_k)$, stride $s\in \mathbb{Z}^k$, and offset $o \in \mathbb{Z}^k$.
    \item \textbf{TileStore:} Each tensor is backed by a \texttt{TileStore}, which handles caching and storage. Repeated access to the same spatial region reuses cached results rather than re-triggering computation, and dependencies between tensors are tracked automatically. The framework defaults to an in-memory MemoryTileStore.
\end{itemize}

\subsection{Caching and Persistence}
\label{sec:appendix_caching}

The framework exposes two storage backends.

\paragraph{In-memory caching (Default)}
This backend stores computed windows in a shared LRU (Least Recently Used) cache. Conceptually, if $\mathcal{T}[R_i] \gets \mathcal{T}[R_i] + x$ for each processed window region $R_i$, then a requested region $\mathcal{T}[R]$ is reconstructed from the cached window outputs intersecting $R$. Users may bound memory usage directly, for example by specifying a cache-size limit in bytes or in number of windows. This mode is appropriate for transient inference where persistent storage is undesirable. This mode is used in all applications in the main text.

\paragraph{Persistent tiled storage (Optional)}
This backend accumulates overlapping window outputs into fixed-size storage tiles in a user-provided storage device. Compared with storing each window independently, this uses storage more efficiently because overlapping regions share the same underlying tiles. This mode is appropriate for long-running workflows where results should persist across sessions. A default implementation is provided for HDF5.

\subsection{Dependency Chaining and DAGs}

The framework supports the construction of directed acyclic graphs (DAGs) of infinite tensors. A tensor $\mathcal{T}_{child}$ can declare a dependency on $\mathcal{T}_{parent}$. When a region of $\mathcal{T}_{child}$ is requested:
1. The framework calculates the required covering region of $\mathcal{T}_{parent}$.
2. The parent region is fetched (triggering upstream generation if necessary).
3. The parent data is sliced and passed to the child's generator function $f$.

This allows for the construction of complex hierarchical pipelines (e.g., Coarse Model $\rightarrow$ Latent Model $\rightarrow$ Decoder) where each stage is an infinite tensor depending on the previous one.

\section{Additional Panorama Details}
\label{app:panorama_details}

\subsection{Methodology of Table 1}
\label{app:table1methodology}

We provide representative samples of both InfiniteDiffusion and MultiDiffusion on various prompts applied to Stable Diffusion in Table 1 in the main text. FID is calculated by first generating 2000 images with Stable Diffusion for 8 prompts, resulting in 16000 total reference images. We first generate a second version of this dataset in the exact same way, and calculate the FID on both versions. This produces the Stable Diffusion baseline FID. We calculate the MultiDiffusion FID by generating 16000 $512 \times 2048$ panoramas with the same prompts, and choose random $512 \times 512$ crops. For InfiniteDiffusion, we draw the random crops from an infinite panorama, also from random $512 \times 512$ regions, with widths entirely in the range [0, 2048]. For both InfiniteDiffusion and MultiDiffusion, we use a linear kernel, and a stride of 32 (out of 64). MultiDiffusion blends tiles after every iteration, while InfiniteDiffusion with $T = 2$ uses just one intermediate blending step at $t_{inter} = 800$ (from 1-999), which we found worked well, though we did not perform a complete ablation. InfiniteDiffusion with $T = 1$ performs blending in the latent space only after generation. InfiniteDiffusion also tiles the decoder with a stride of 384 (out of 512), while MultiDiffusion simply decodes the entire panorama simultaneously.

\begin{figure}[h]
\centering
\includegraphics[width=\linewidth]{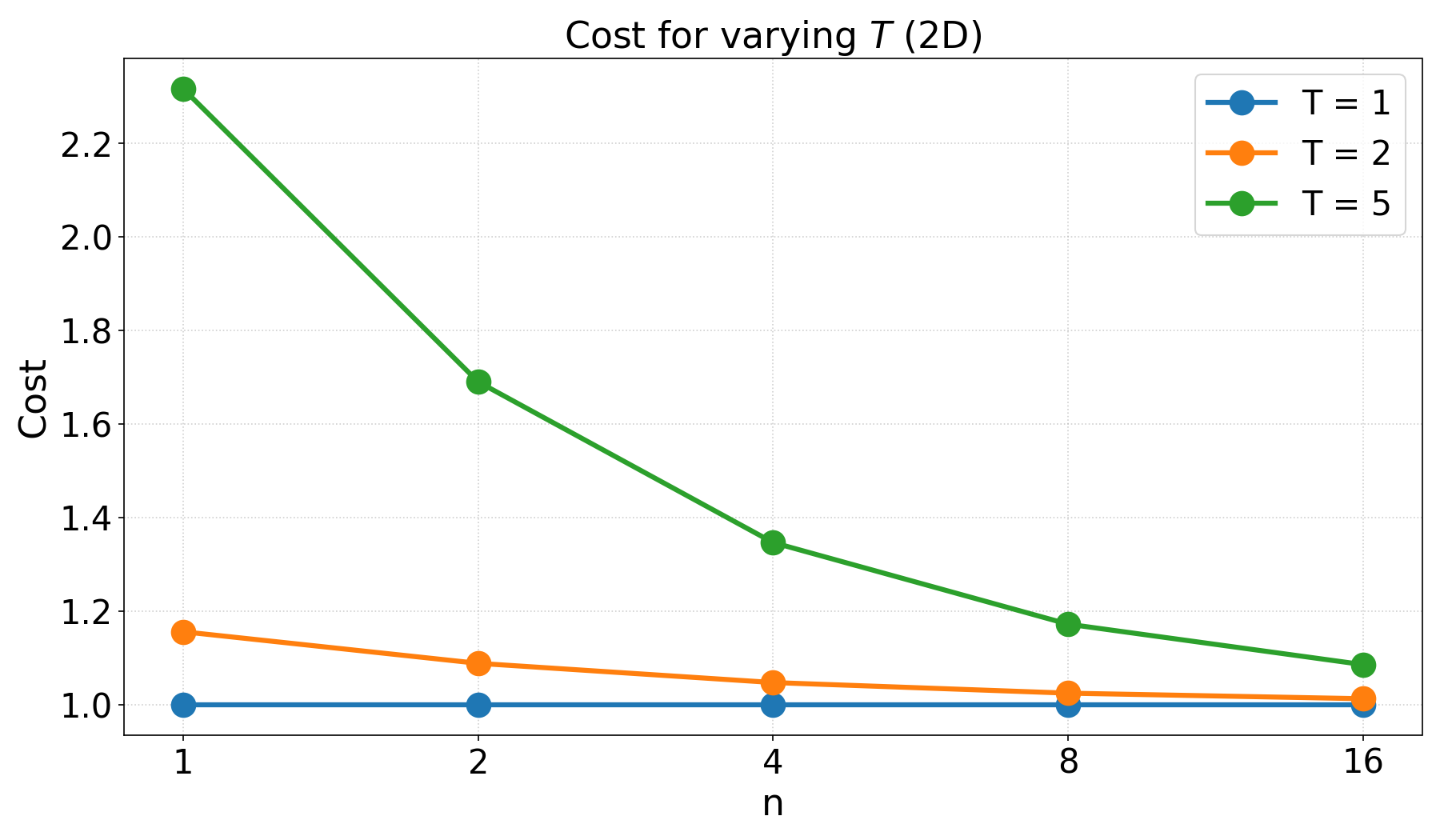}
\caption{Compute overhead of $T=2,5$ versus $T=1$ for a 2D ($n \times n$) window grid.}
\label{fig:t_latency_2d}
\end{figure}

\begin{figure}[h]
\centering
\includegraphics[width=\linewidth]{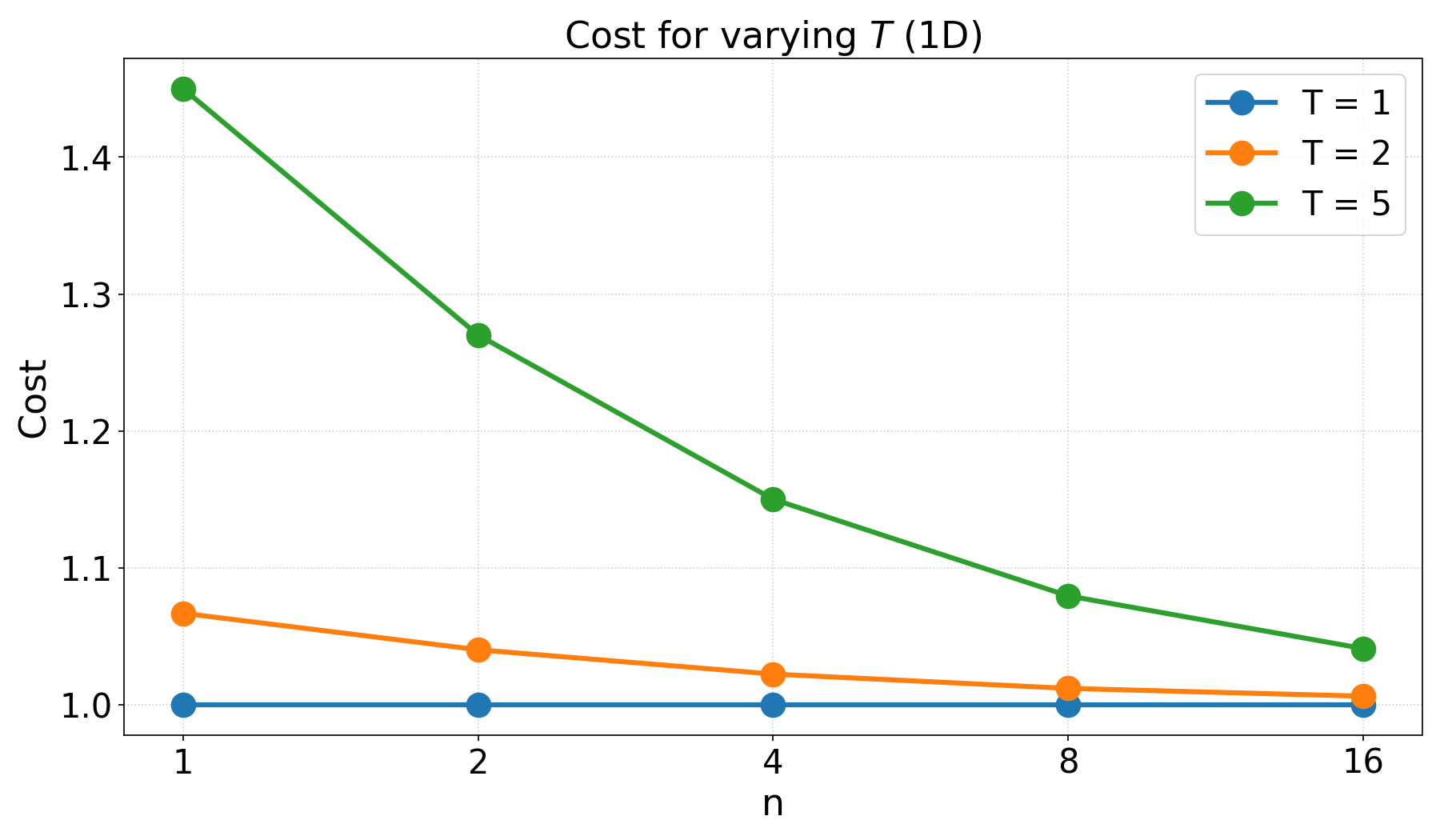}
\caption{Compute overhead of $T=2,5$ versus $T=1$ for a 1D ($n \times 1$) window grid.}
\label{fig:t_latency_1d}
\end{figure}

We use the following 8 prompts:
\begin{enumerate}
    \item A photo of a city skyline at night
    \item A photo of the dolomites
    \item A photo of lush forest with a babbling brook
    \item A photo of a forest with a misty fog
    \item A photo of mountain range at twilight
    \item Cartoon panorama of spring summer beautiful nature
    \item A photo of a snowy mountain peak with skiers
    \item Natural landscape in anime style illustration
\end{enumerate}

\subsection{Additional Samples}
In Figure~\ref{fig:infinite_diffusion_viz_appdx_1} through Figure~\ref{fig:infinite_diffusion_viz_appdx_2} we include uncurated samples of Stable Diffusion panoramas created using MultiDiffusion and InfiniteDiffusion with $T=1$, $T=2$, and $T=5$. We find that $T=1$ or $T=2$ are sufficient for many tasks, while $T=5$ resolves practically all artifacts in every case we tested. Generally, InfiniteDiffusion with minimal $T$ seems to work best for the same types of images where MultiDiffusion itself consistently produces high quality outputs. 

We used the same prompts as the FID computation. For $T=5$ we blend 4 times at ($400$, $600$, $750$, $900$). Both methods use DDIM with 50 sampling steps. We did not perform an exhaustive search for blending timesteps; these values were chosen qualitatively.

\subsection{Performance for T}
We also investigate how compute scales with $T$. For $T=1,2,5$, we measure the number of diffusion steps required to process an $ns \times ns$ region (2D) and an $ns \times 1$ region (1D), where $s$ is the window size and stride is 50\%. Each window's cost is weighted by its fraction of the diffusion schedule. For example, a window starting at $t=1000$ and blended at $t=800$ accounts for 20\% of the schedule. Results are shown in Tables~\ref{fig:t_latency_2d} and~\ref{fig:t_latency_1d}. $T=2$ adds minimal overhead over $T=1$, while $T=5$ is more costly but remains manageable.

Unlike MultiDiffusion, InfiniteDiffusion supports an infinite window layout. Measured on an RTX 3090 Ti, this incurs a fixed initialization overhead of 8s on the first query, regardless of region size, due to windows straddling the query boundary. Subsequent queries are cost-comparable to MultiDiffusion (e.g. 14s for a 512×2048 panorama). For reference, Stable Diffusion takes 2.5s per image.

\section{Additional Implementation Details}
\label{sec:appendix_implementation}

An overview of our architecture with hyperparameters is provided in Table ~\ref{tab:hyperparameters}.

The generative components of Terrain Diffusion are built upon the EDM2 architecture~\cite{Karras2024edm2}. We modify the base architecture, following sCM ~\cite{lu_simplifying_2025}, by implementing pixel-norm on embedding vectors and substituting Fourier embeddings with Positional embeddings. To incorporate auxiliary conditioning, we utilize a magnitude-preserving linear layer to project flattened conditioning inputs

\paragraph{Fidelity Optimization.}
For both the core latent and decoder models, we optimize fidelity using AutoGuidance~\cite{Karras2024autoguidance}. Hyperparameter sweeps are conducted via Optuna, minimizing Kernel Inception Distance (KID)~\cite{binkowski_demystifying_2018} calculated over 1,000 samples. We terminate the sweeps after 300 runs, or once KID improvement saturates.

\paragraph{Consistency Distillation.}
The core latent model and decoder are distilled into continuous-time consistency models (sCM). We perform post-hoc sweeps for the EMA parameter ($\sigma_{\text{rel}}$) and the optimal stopping iteration. This prevents fidelity degradation observed during late-stage training.

\paragraph{Training Speed}
The entire pipeline can be trained from scratch on a top-tier consumer-grade GPU (e.g., RTX 5090) in approximately one week, enabling practical retraining on custom datasets. If the decoder is reused, fine tuning the base model can be completed in about a day.

\begin{table*}[h!]
    \centering
    \caption{Hyperparameters and architectural details for the component models.}
    \label{tab:hyperparameters}
    \small
    \setlength{\tabcolsep}{4pt}
    \begin{tabular}{@{}lcccc@{}}
        \toprule
        \textbf{Parameter} & \textbf{Core Latent} & \textbf{Consistency Decoder} & \textbf{Coarse Model} & \textbf{Autoencoder} \\ 
        \midrule
        Base Architecture & EDM2 + sCM & EDM2 + sCM & EDM2 + sCM & EDM2 + sCM - Skips  \\
        Model Channels & 192 (128 Guide) & 64 (32 Guide) & 128 & 64 \\
        Channel Multipliers & $[1, 2, 3, 4]$ & $[1, 2, 3, 4]$ & $[1]$ & $[1, 2, 4, 4]$ \\
        Layers per Block & 3 & 3 & 2 & 2 \\
        Training Crop Size & $64 \times 64$ (Latent) & $128 \times 128$ & $16 \times 16$ & $128 \times 128$ \\
        Dropout & 10\% & 0\% & 10\% & 0\% \\
        \midrule
        Optimizer & Adam & Adam & Adam & Adam \\
        LR (Training) & $5 \times 10^{-3}$ ($1 \times 10^{-2}$ Guide) & $1 \times 10^{-2}$ & $1 \times 10^{-2}$ & $1 \times 10^{-2}$ \\
        LR (Distillation) & $5 \times 10^{-5}$ & $1 \times 10^{-4}$ & -- & -- \\
        Batch Size & 48 (64 Guide) & 64 & 256 & 64 \\
        Training Steps & 1,024,000 & 716,800 & 153,600 & 153,600 \\
        Guide Training Steps & 102,400 & 102,400 & -- & -- \\
        Distillation Steps & 25,600 & 25,600 & -- & -- \\
        \midrule
        \multicolumn{5}{@{}l}{\textit{Guidance Hyperparameters}} \\
        Guidance & AutoGuidance & AutoGuidance & -- & -- \\
        Main EMA $\sigma_{\text{rel}}$ & 0.12 & 0.093 & -- & -- \\
        Guide EMA $\sigma_{\text{rel}}$ & 0.14 & 0.243 & -- & -- \\
        Guide Stopping Step & 57,344 & 70,656 & -- & -- \\
        Guidance Scale & 2.15 & 1.26 & -- & -- \\
        \midrule
        \multicolumn{5}{@{}l}{\textit{Consistency / Distillation Hyperparameters}} \\
        Intermediate Timestep ($\sigma$) & 0.35 & One-Step or 0.065 & -- & -- \\
        $\sigma_{\text{rel}}$ & 0.08 & 0.21 & -- & -- \\
        EMA Step & 69,632 & 60,416 & -- & -- \\
        \bottomrule
    \end{tabular}
\end{table*}

\section{Dataset Details}
\label{sec:dataset_appendix}

Our dataset combines multiple global sources to provide consistent coverage of both land and ocean. Land elevations are drawn from the 3-arc-second MERIT DEM \cite{yamazaki_highaccuracy_2017}, while ocean bathymetry is taken from the 30-arc-second ETOPO dataset \cite{noaa_national_geophysical_data_center_etopo1_2009}. Since ETOPO's resolution is lower, it is blurred and upsampled to match MERIT’s resolution before merging. To ensure smooth coastal transitions, we measure the distance of each ocean pixel from the nearest coastline and linearly interpolate elevation from 0 m at the shore to the local ocean depth 100 pixels offshore. To support climatic conditioning, we supplement elevation with 30-arc-second WorldClim \cite{fick_worldclim_2017} data for temperature and precipitation.

For efficient processing, data is downloaded and stored in contiguous $2048\times2048$ tiles, each covering equal surface area at approximately 90m resolution. To maintain uniform ground resolution, tiles are stretched in longitude so that each pixel represents roughly the same area regardless of latitude. This equal-area tiling ensures consistent scale across training and allows all models to train on data without distortion. We also exclude tiles beyond 60 (absolute) degrees of latitude to focus on higher quality data at mid latitudes. Finally, 80\% of the tiles are randomly assigned to the train set, and the remainder are withheld for validation.

All models are trained on random crops drawn from random tiles, with sampling biased so that 99\% of tiles contain at least 1\% land, as ocean regions are simpler and lower priority. Each crop is randomly flipped and rotated in 90 degree increments to reflect our goal of generating infinite, directionless terrain. We include scripts for reproducing the dataset in our open-source repository.

\section{Signed Square-Root Transform}
\label{sec:signed-sqrt-appendix}

The signed square-root transform applies the function $$f(x)= \text{sign}(x)\sqrt{|x|}$$ to each pixel of the dataset. After inference, we undo this operation with $$f^{-1}(x) = \text{sign}(x)x^2.$$ The statistical effects of this transform are visualized in Figure ~\ref{fig:signed_sqrt}.

\begin{figure*}[t]
\centering
\includegraphics[width=\textwidth]{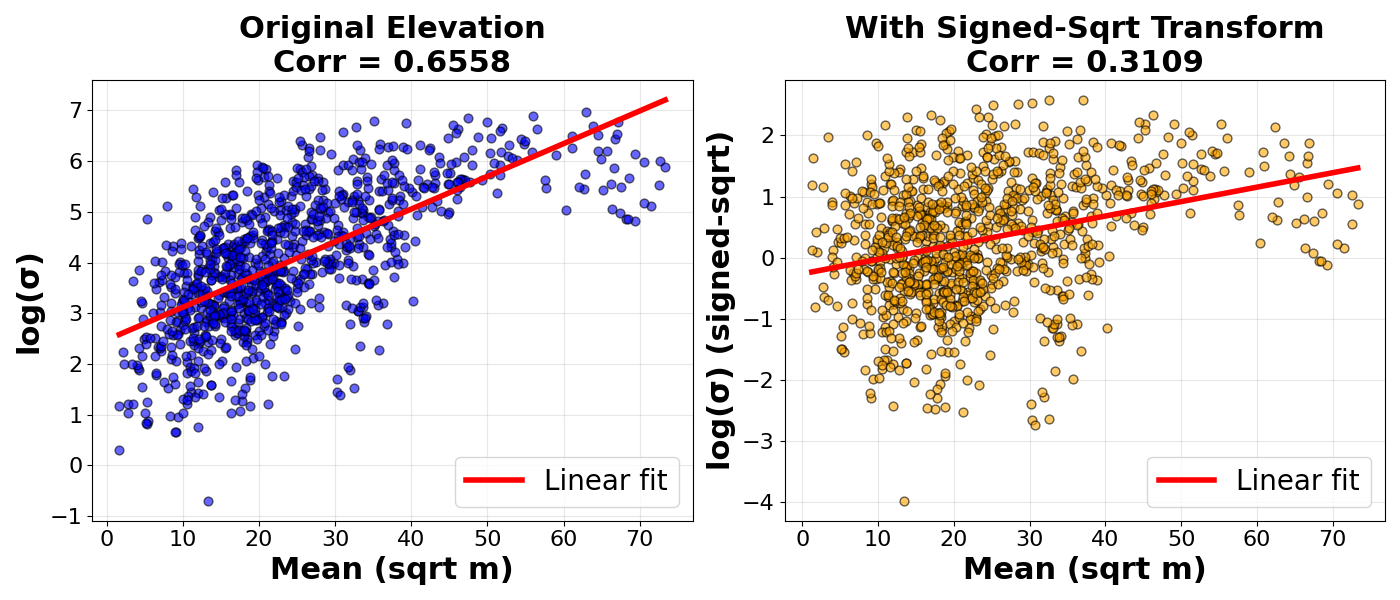}
\caption{Effects of the signed-sqrt transform. Standard deviation become more uniformly distributed with respect to mean elevation, and the range of standard deviations compress.}
\label{fig:signed_sqrt}
\end{figure*}

\newpage
\begin{figure*}[t]
    \centering
    \begin{subfigure}{0.505\textwidth}
        \includegraphics[width=\linewidth]{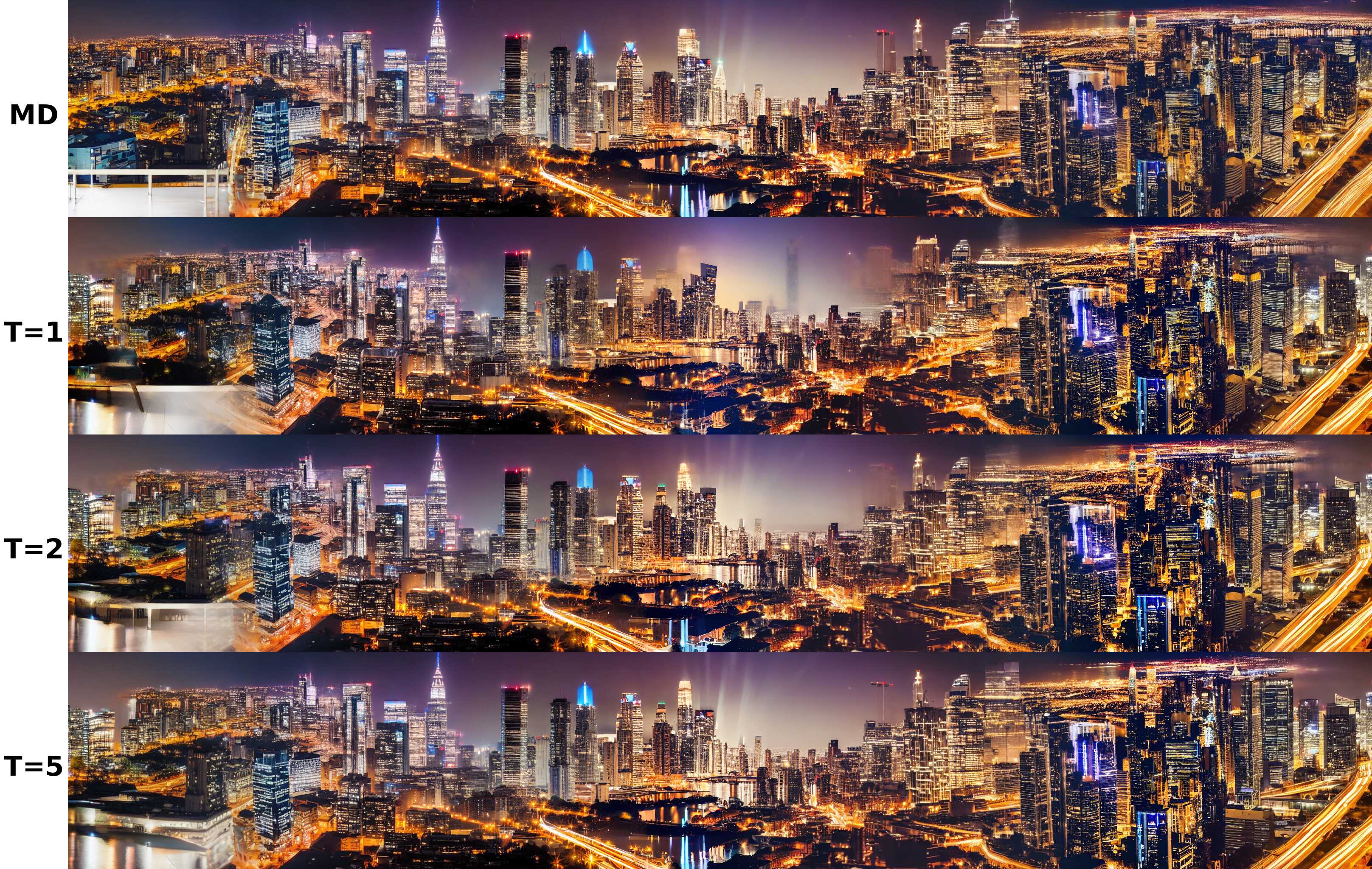}
    \end{subfigure}
    \hfill
    \begin{subfigure}{0.48\textwidth}
        \includegraphics[width=\linewidth]{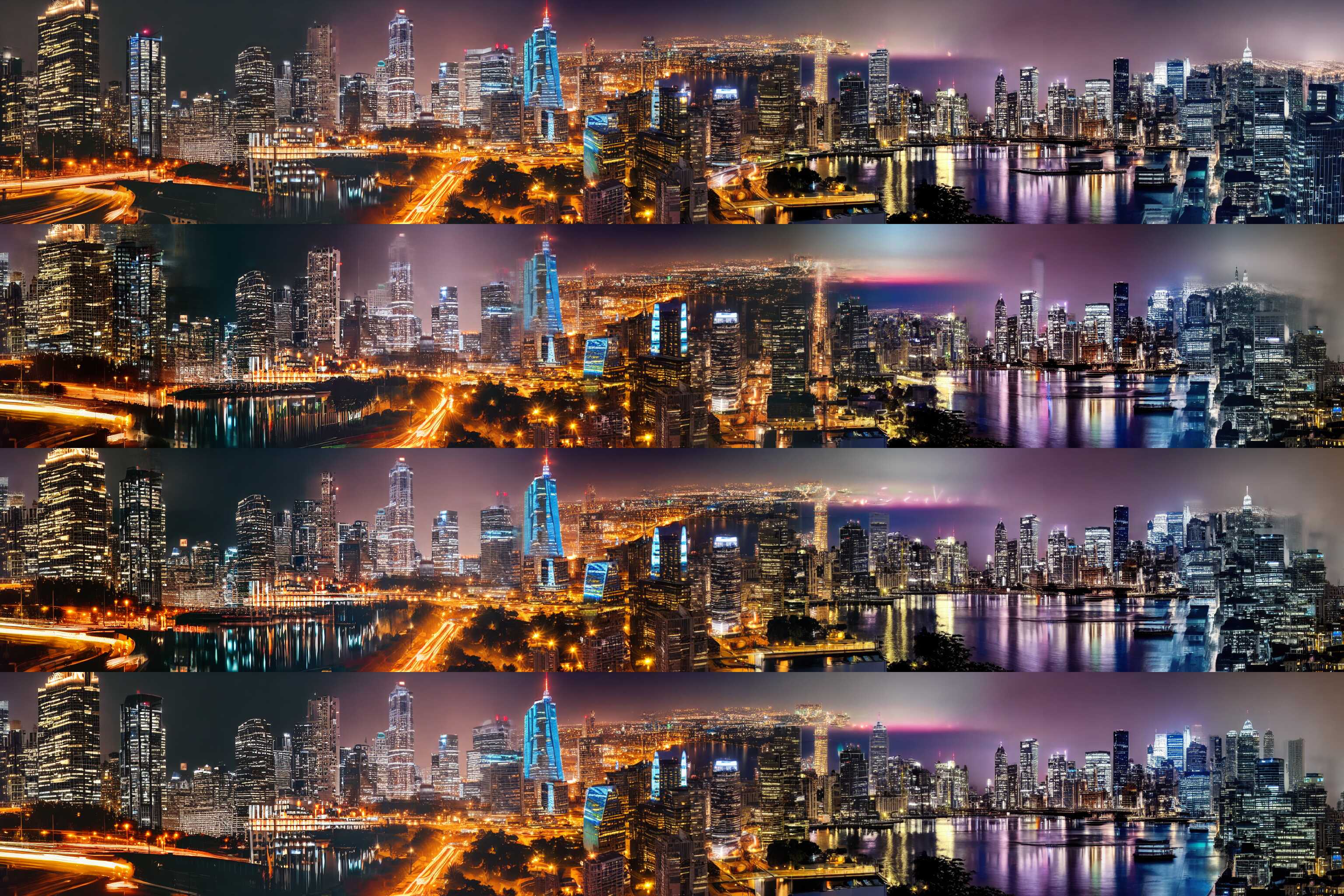}
    \end{subfigure}
    \caption{"A photo of a city skyline at night."}
    \label{fig:infinite_diffusion_viz_appdx_1}
    \vspace{0.5em}  
    \begin{subfigure}{0.505\textwidth}
        \includegraphics[width=\linewidth]{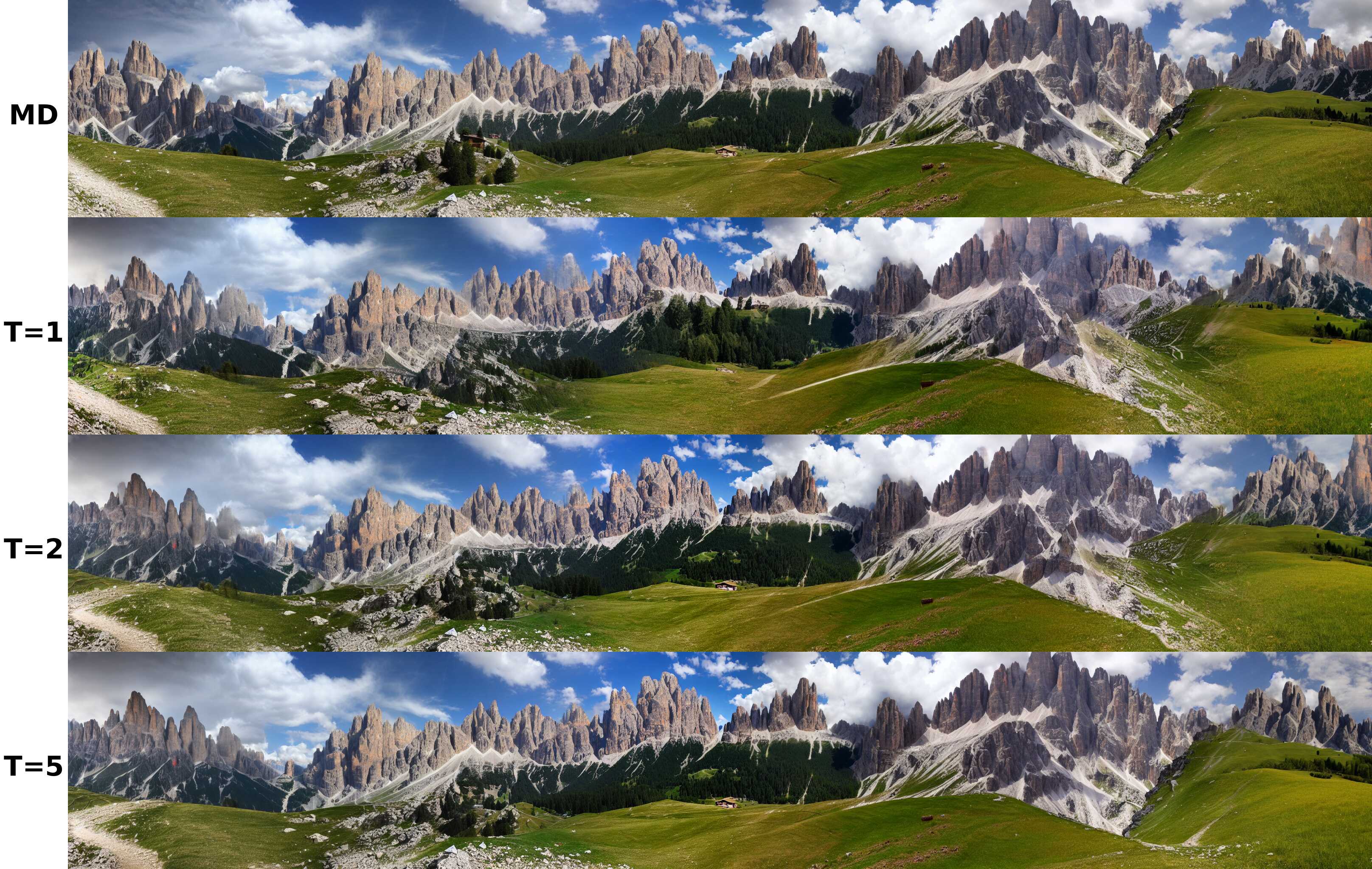}
    \end{subfigure}
    \hfill
    \begin{subfigure}{0.48\textwidth}
        \includegraphics[width=\linewidth]{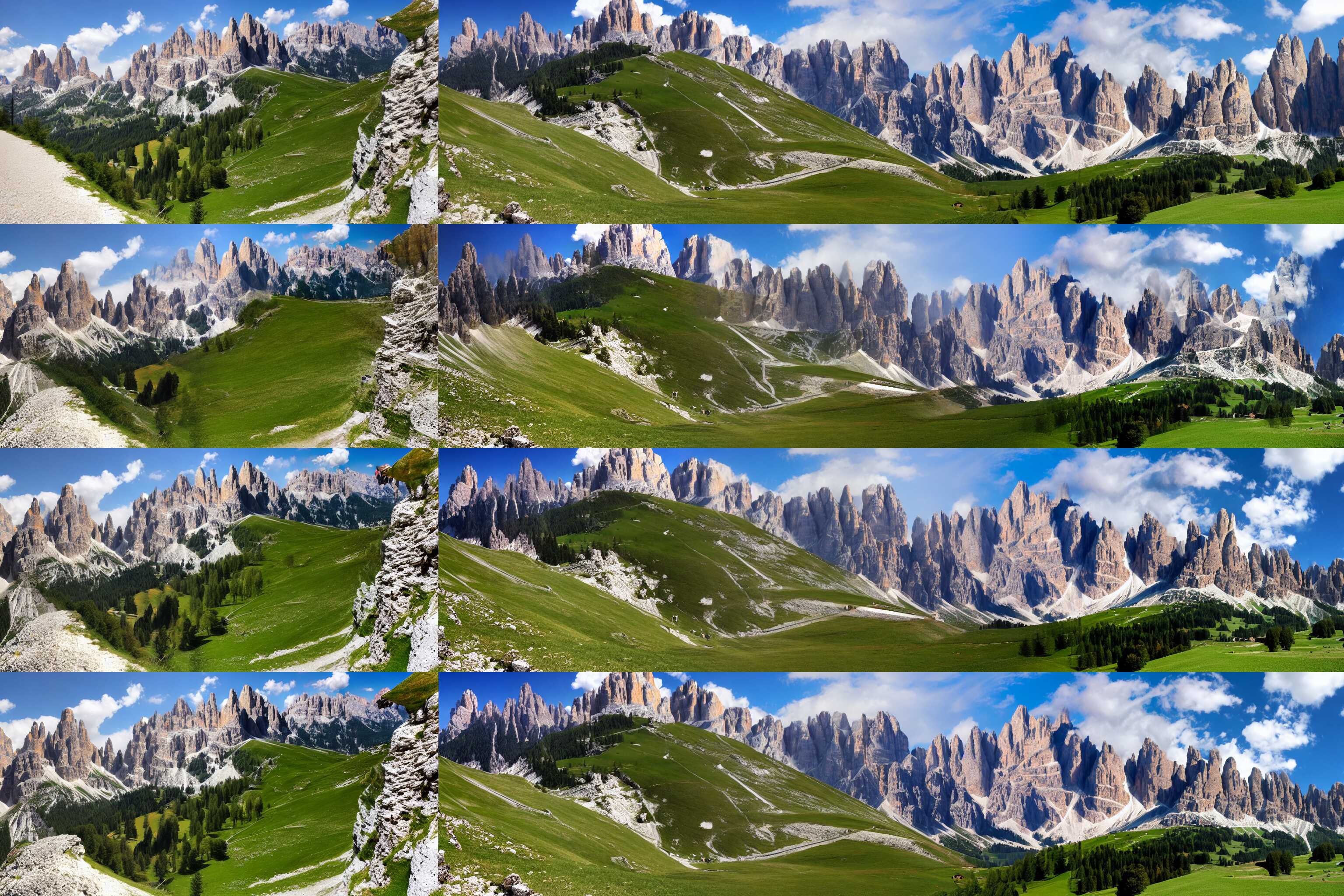}
    \end{subfigure}
    \caption{"A photo of the dolomites"}
    \vspace{0.5em}  
    \begin{subfigure}{0.505\textwidth}
        \includegraphics[width=\linewidth]{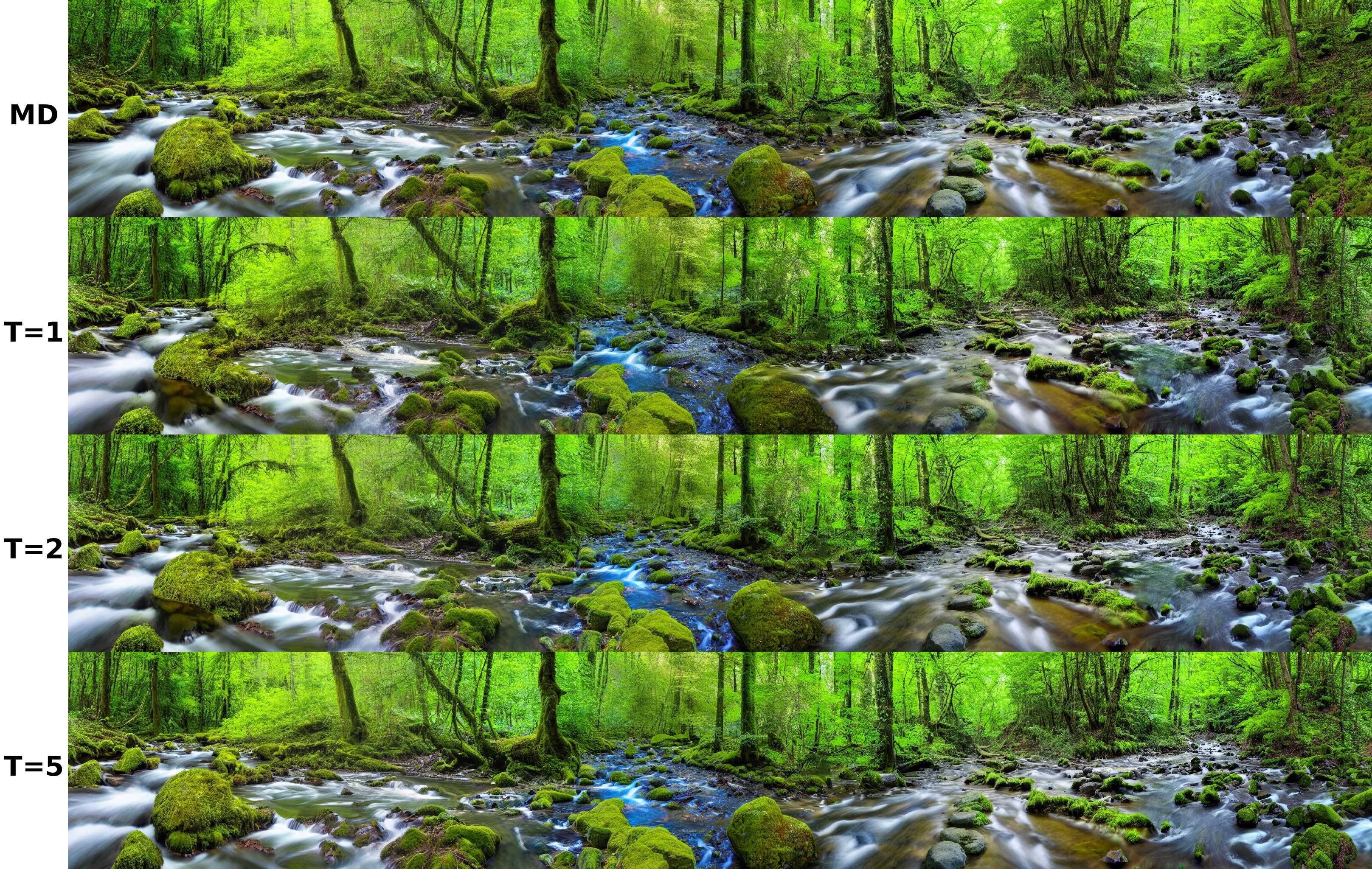}
    \end{subfigure}
    \hfill
    \begin{subfigure}{0.48\textwidth}
        \includegraphics[width=\linewidth]{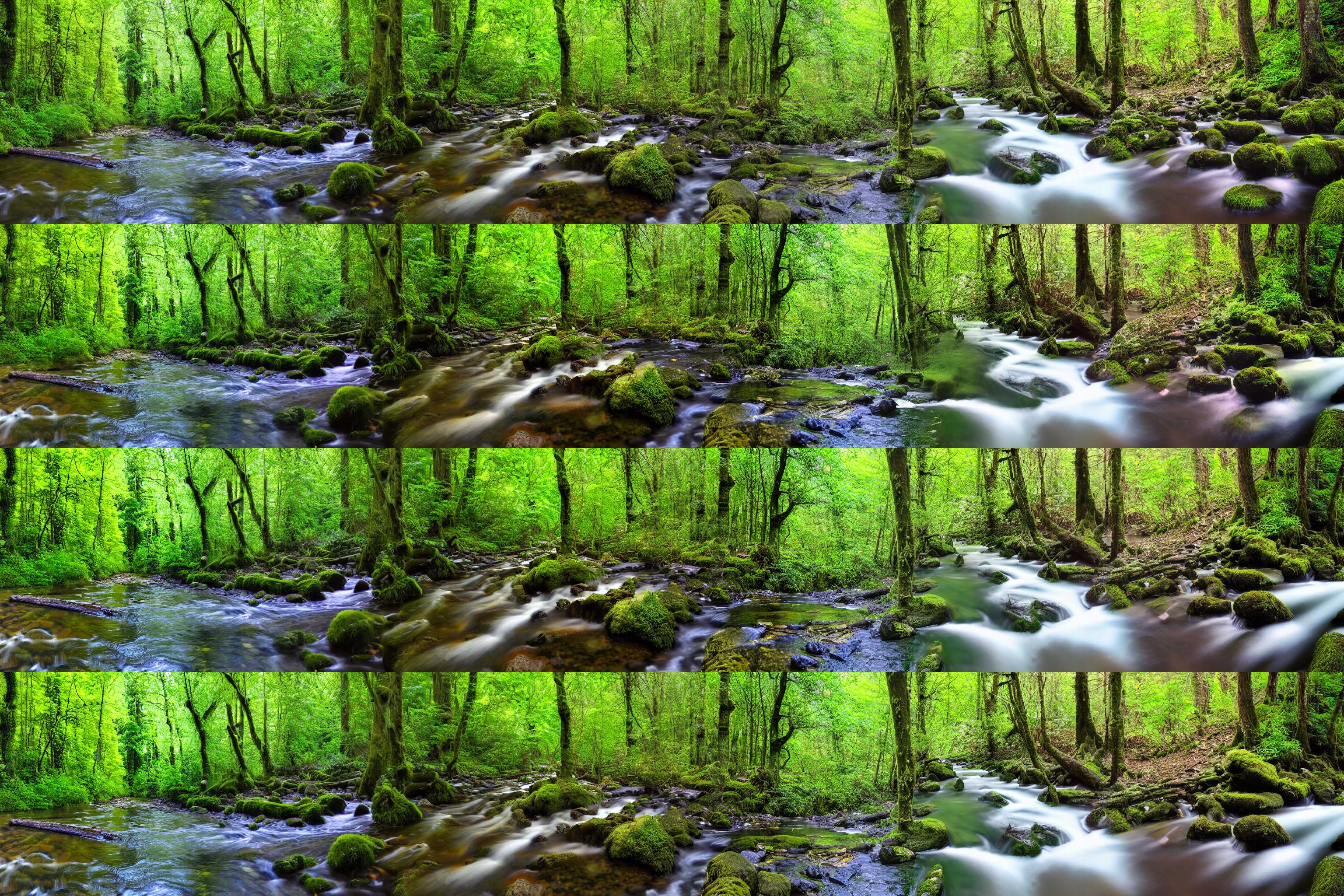}
    \end{subfigure}
    \caption{"A photo of lush forest with a babbling brook"}
\end{figure*}
\clearpage

\newpage
\begin{figure*}[t]
    \centering
    \begin{subfigure}{0.505\textwidth}
        \includegraphics[width=\linewidth]{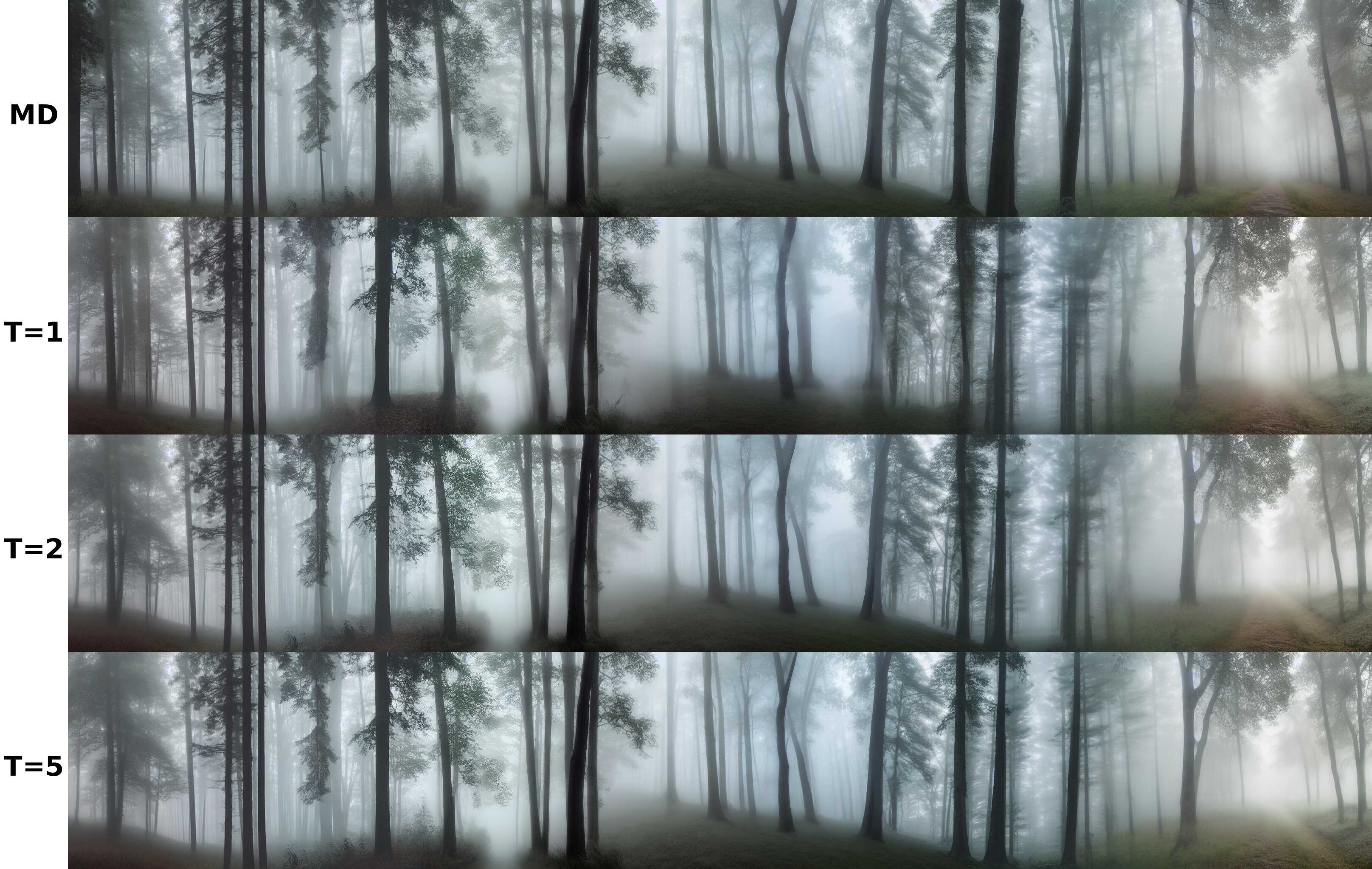}
    \end{subfigure}
    \hfill
    \begin{subfigure}{0.48\textwidth}
        \includegraphics[width=\linewidth]{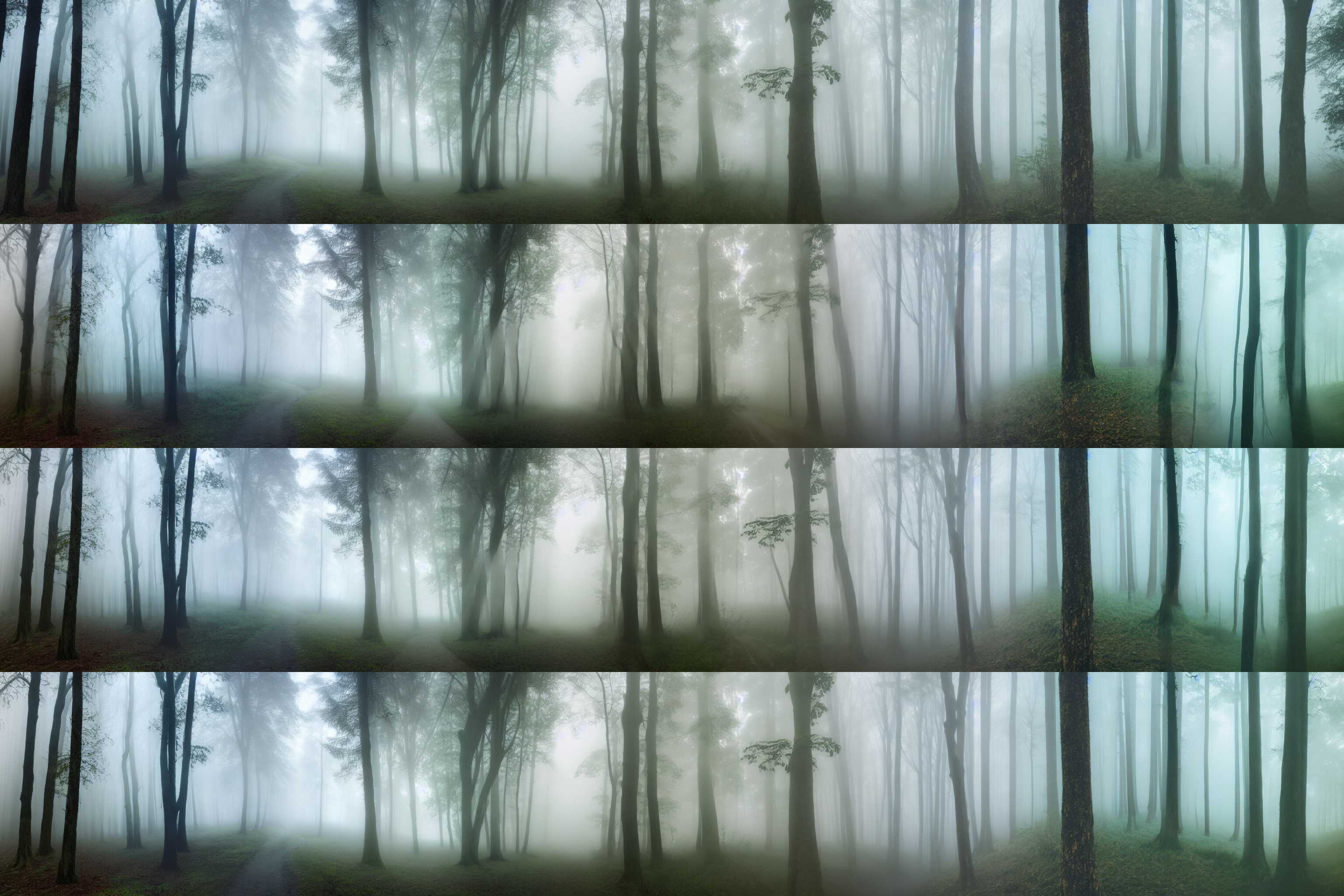}
    \end{subfigure}
    \caption{"A photo of a forest with a misty fog"}
    \vspace{0.5em}  
    \begin{subfigure}{0.505\textwidth}
        \includegraphics[width=\linewidth]{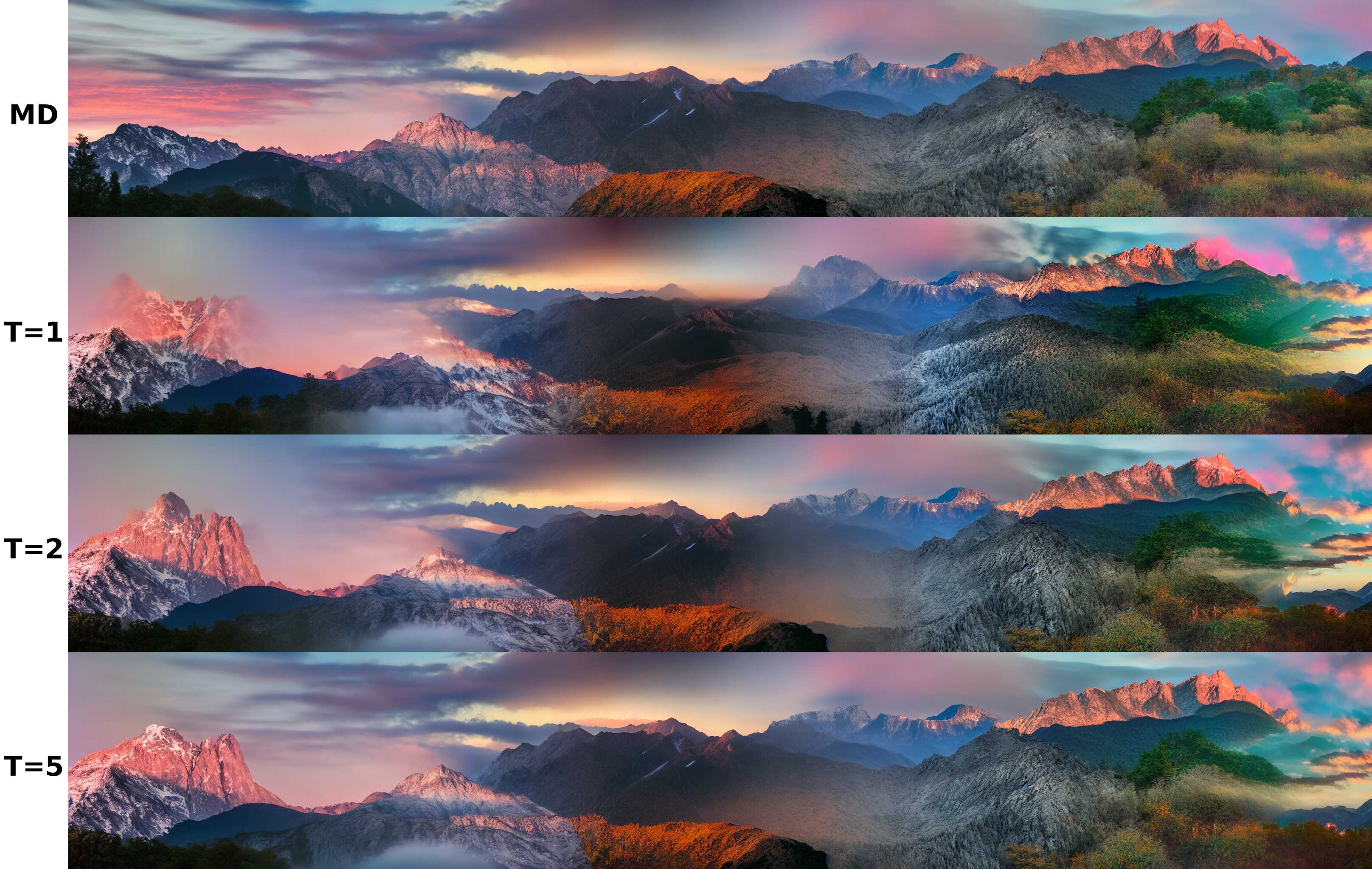}
    \end{subfigure}
    \hfill
    \begin{subfigure}{0.48\textwidth}
        \includegraphics[width=\linewidth]{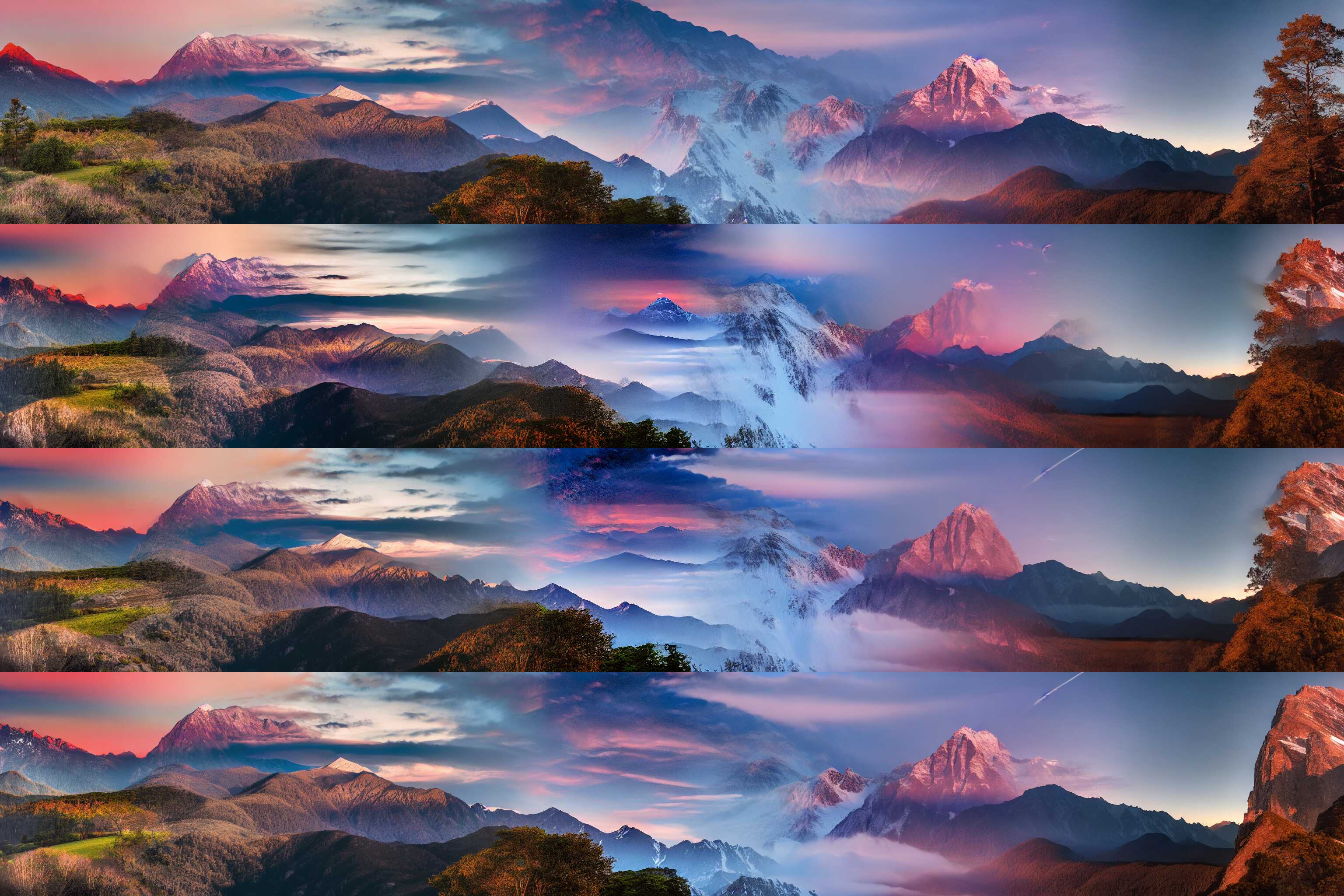}
    \end{subfigure}
    \caption{"A photo of mountain range at twilight"}
    \vspace{0.5em}  
    \begin{subfigure}{0.505\textwidth}
        \includegraphics[width=\linewidth]{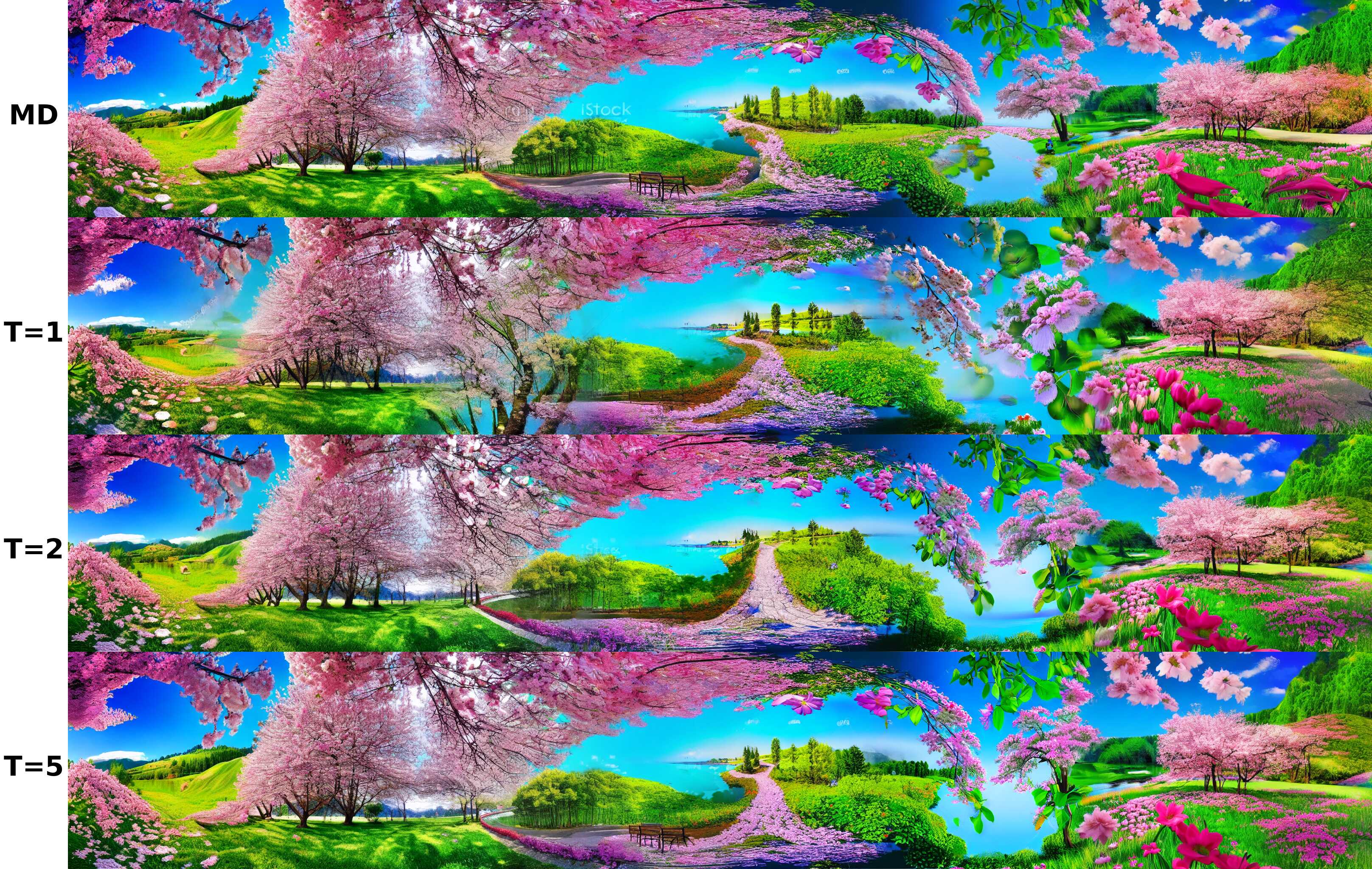}
    \end{subfigure}
    \hfill
    \begin{subfigure}{0.48\textwidth}
        \includegraphics[width=\linewidth]{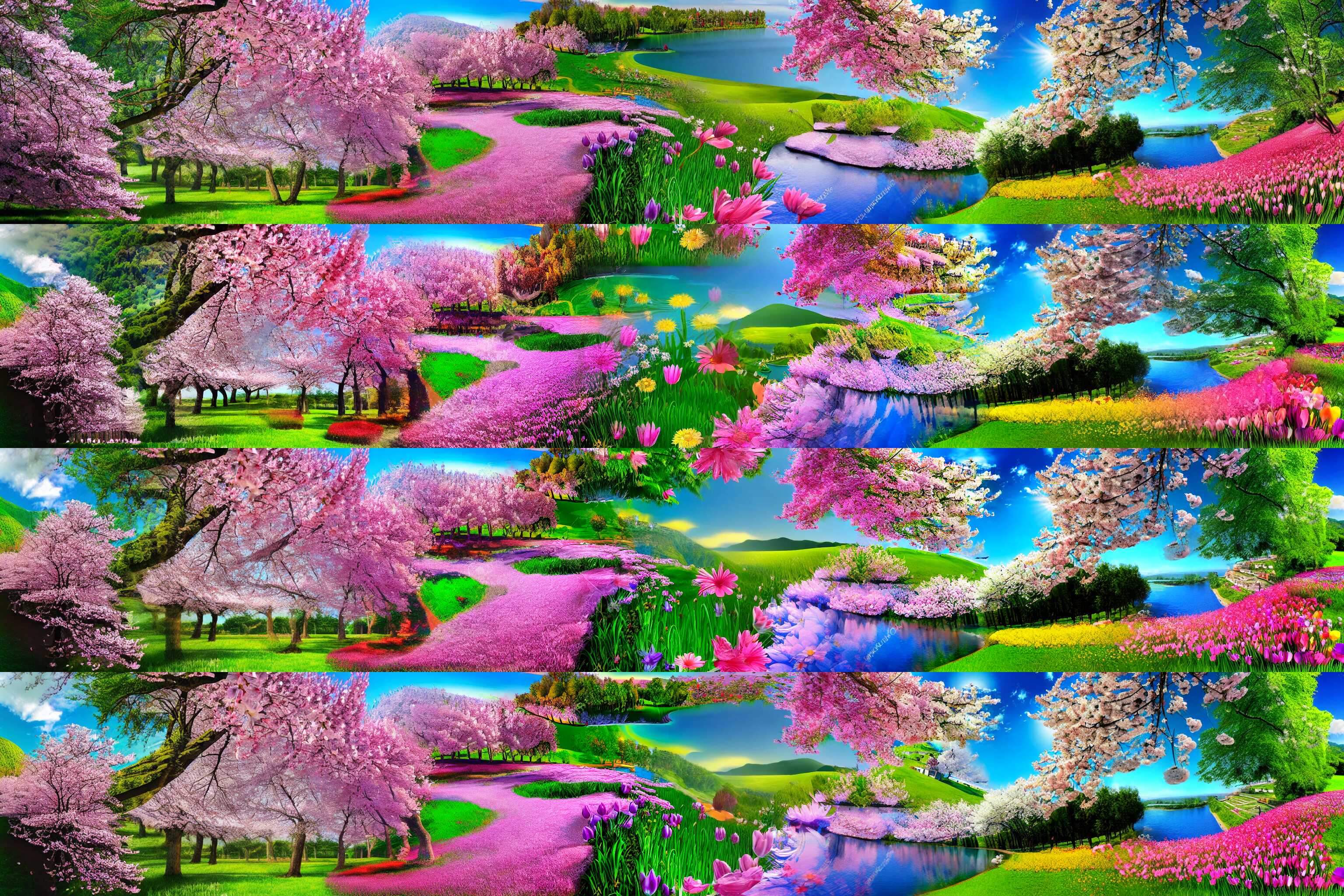}
    \end{subfigure}
    \caption{"Cartoon panorama of spring summer beautiful nature"}
\end{figure*}
\clearpage

\newpage
\begin{figure*}[t]
    \centering
    \begin{subfigure}{0.505\textwidth}
        \includegraphics[width=\linewidth]{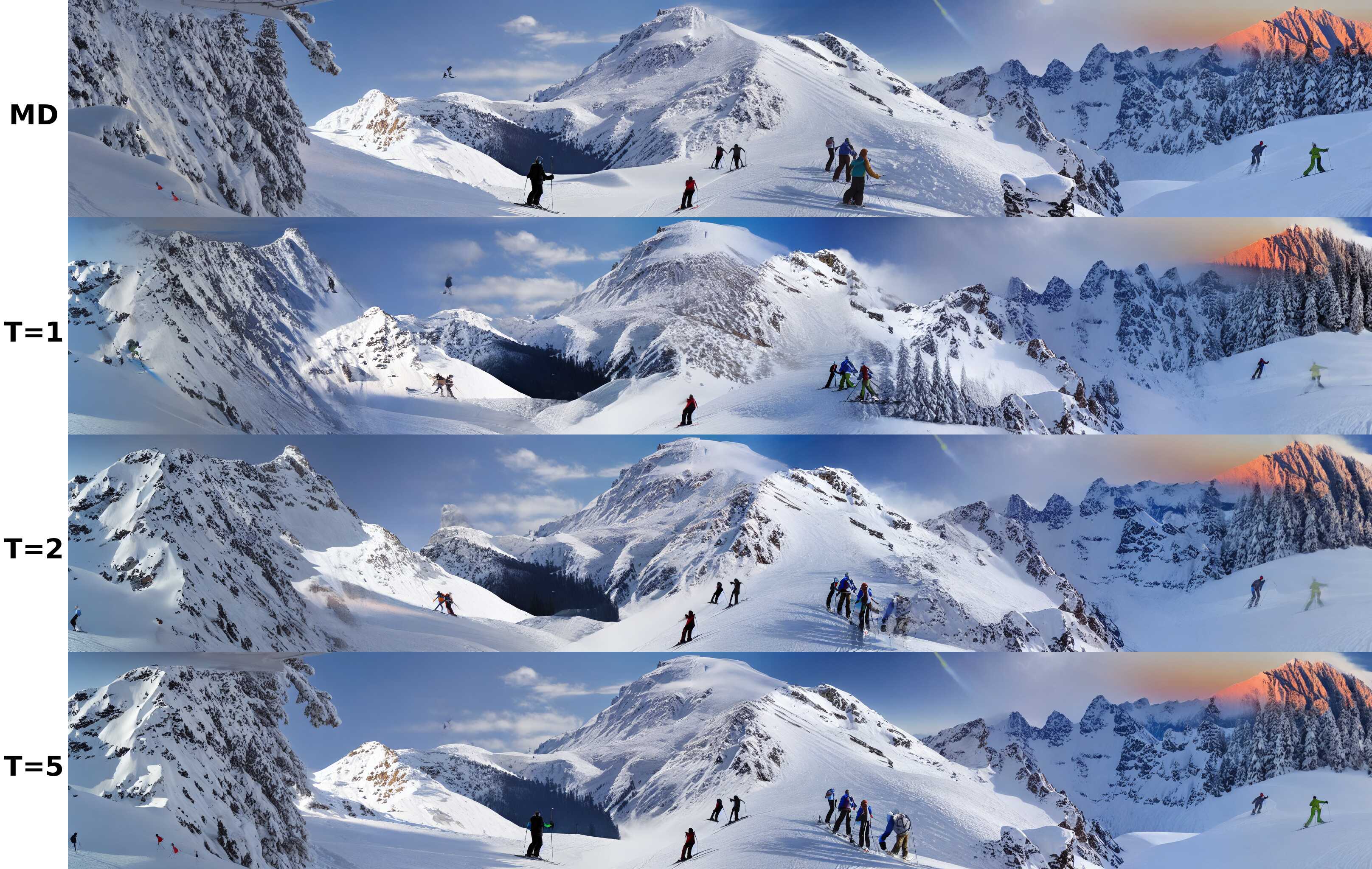}
    \end{subfigure}
    \hfill
    \begin{subfigure}{0.48\textwidth}
        \includegraphics[width=\linewidth]{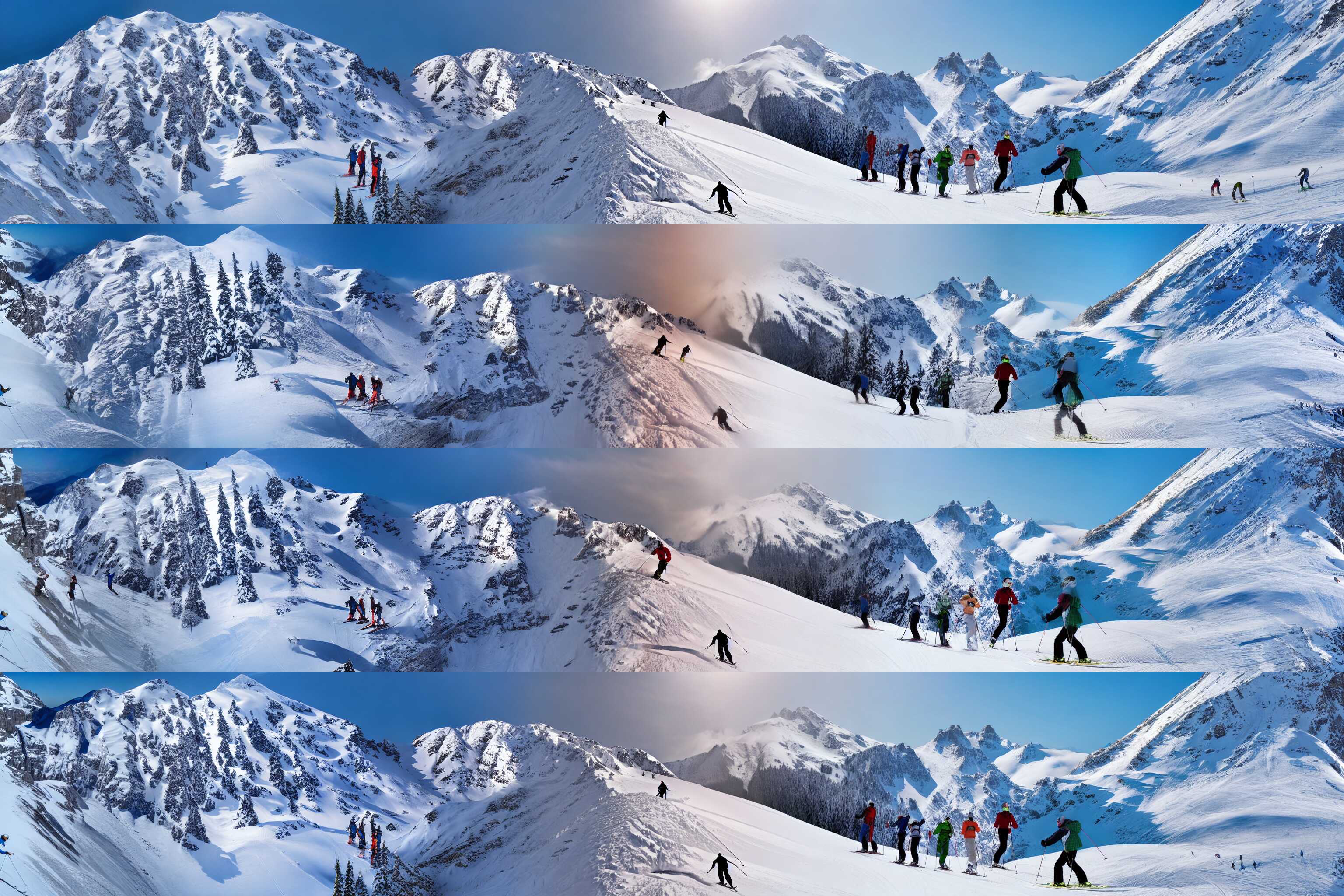}
    \end{subfigure}
    \caption{"A photo of a snowy mountain peak with skiers"}
    \vspace{0.5em}  
    \begin{subfigure}{0.505\textwidth}
        \includegraphics[width=\linewidth]{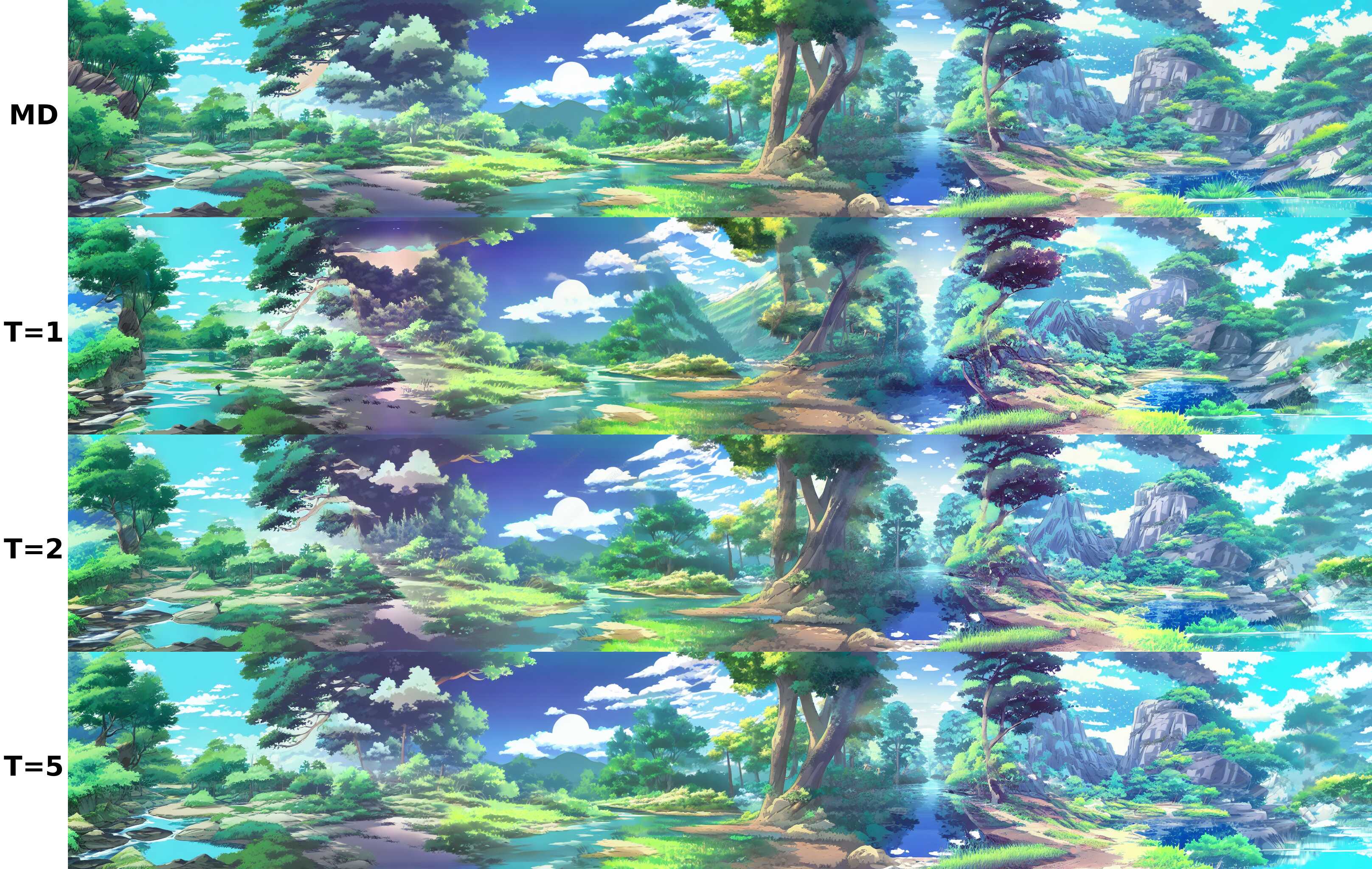}
    \end{subfigure}
    \hfill
    \begin{subfigure}{0.48\textwidth}
        \includegraphics[width=\linewidth]{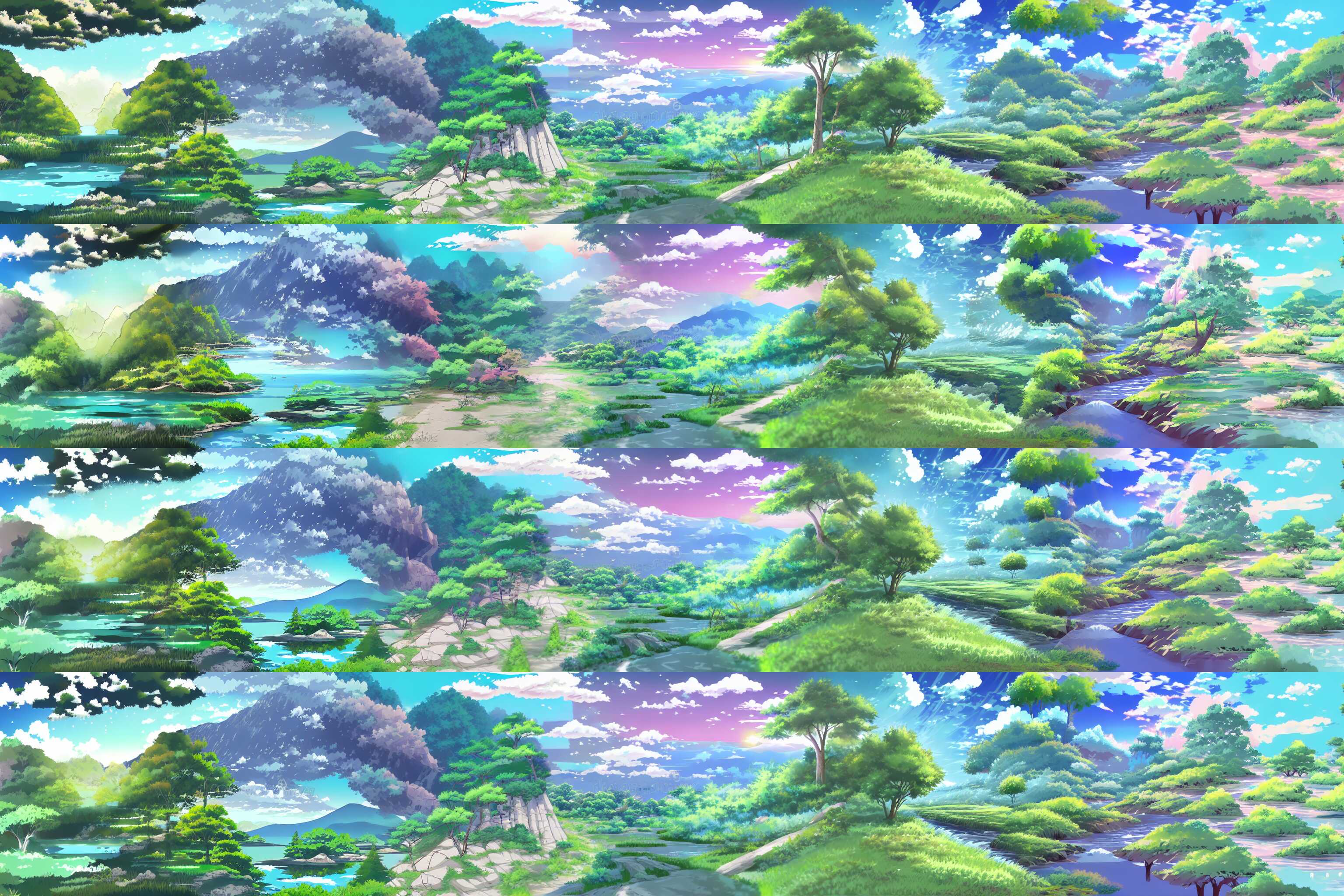}
    \end{subfigure}
    \caption{"Natural landscape in anime style illustration"}
    \label{fig:infinite_diffusion_viz_appdx_2}
\end{figure*}
\clearpage

\end{document}